\documentclass{article}

\usepackage[preprint]{neurips_2026}

\usepackage[utf8]{inputenc}
\usepackage[T1]{fontenc}
\usepackage{lmodern}
\usepackage{microtype}
\usepackage{xspace}

\usepackage{amsmath,amssymb,amsfonts,amsthm,mathtools,bm}
\usepackage{mathrsfs}
\usepackage{nicefrac}

\usepackage{graphicx}
\usepackage[dvipsnames,table]{xcolor}
\usepackage{enumitem}
\usepackage{tikz}
\usetikzlibrary{arrows.meta, positioning, fit, backgrounds, shapes.geometric, calc, patterns}
\usepackage{pgfplots}
\pgfplotsset{compat=1.18}
\definecolor{csared}{HTML}{C0392B}
\definecolor{csagreen}{HTML}{27AE60}
\definecolor{csablue}{HTML}{2E86C1}

\usepackage{booktabs}
\usepackage{array}
\usepackage{tabularx}
\usepackage{multirow}
\usepackage{makecell}
\usepackage{tcolorbox}
\tcbuselibrary{listings,breakable}
\newtcolorbox{promptbox}[1][]{%
colback=gray!5, colframe=gray!60, fonttitle=\bfseries\small,
title={#1}, breakable, left=4pt, right=4pt, top=2pt, bottom=2pt,
boxrule=0.5pt}

\usepackage{pifont}
\newcommand{\cmark}{\textcolor{csagreen}{\ding{51}}}
\newcommand{\xmark}{\textcolor{csared}{\ding{55}}}
\newcommand{\csa}{\textsc{CSA}\xspace}

\usepackage{algorithm}
\usepackage{algpseudocode}

\usepackage{url}
\usepackage[colorlinks=true,
            linkcolor=blue!60!black,
            citecolor=blue!60!black,
            urlcolor=blue!60!black,
            filecolor=blue!60!black,
            breaklinks=true]{hyperref}
\usepackage{cleveref}

\newcommand{\cB}{\mathcal{B}}
\newcommand{\cF}{\mathcal{F}}
\newcommand{\cQ}{\mathcal{Q}}
\newcommand{\cC}{\mathcal{C}}
\newcommand{\cI}{\mathcal{I}}
\newcommand{\cG}{\mathcal{G}}
\newcommand{\cD}{\mathcal{D}}
\newcommand{\cN}{\mathcal{N}}
\newcommand{\PP}{\mathbb{P}}
\newcommand{\EE}{\mathbb{E}}

\newcommand{\1}{\mathbf{1}}
\DeclareMathOperator{\KL}{KL}
\DeclareMathOperator{\clip}{clip}
\newcommand{\ind}[1]{\mathbf{1}\{#1\}}
\newcommand{\alphabudget}{\alpha}
\newcommand{\Risk}{\mathrm{Risk}}
\newcommand{\AR}{\mathrm{AR}}
\newcommand{\PathV}{\mathrm{PathV}}

\newcommand{\Cov}{\mathrm{Cov}}

\newtheorem{theorem}{Theorem}[section]
\newtheorem*{theorem*}{Theorem}
\newtheorem{corollary}[theorem]{Corollary}
\newtheorem{lemma}[theorem]{Lemma}
\newtheorem{proposition}[theorem]{Proposition}
\newtheorem{definition}[theorem]{Definition}
\newtheorem{remark}[theorem]{Remark}
\newtheorem{assumption}[theorem]{Assumption}

\title{Conformal Selective Acting: Anytime-Valid Risk Control for RLVR-Trained LLMs}

\author{%
  Hamed Khosravi \\
  Georgia Institute of Technology \\
  \texttt{hkhosravi7@gatech.edu} \\
  \And
  Xiaoming Huo \\
  Georgia Institute of Technology \\
  \texttt{huo@gatech.edu} \\
}

\begin{document}
\maketitle

\begin{abstract}
A local specialist LLM, fine-tuned with reinforcement learning from verifiable rewards (RLVR) on operator-local data, is installed inside a single regulated organization under a per-deployment error budget $\alpha$. The operator needs a safety certificate that holds on \emph{this} deployment's stream, simultaneously at every wall-clock round: no pooling across deployments, no waiting for a long-run average. Existing wrappers cannot deliver this on these adaptive, online-updated streams: offline conformal-risk methods require exchangeability; online-conformal methods bound only long-run averages; non-exchangeable extensions are marginally valid; and the closest anytime wrapper, \textsc{A-RCPS}, controls marginal rather than selective risk. Through a (test statistic, validity guarantee, deployment rule) framework we identify one empty cell forced by the deployment requirements (e-process per threshold, selective risk, anytime-pathwise validity, max-certified-threshold rule); \textbf{Conformal Selective Acting} (\csa) fills it as a per-round wrapper maintaining a Ville-type e-process per candidate score threshold on a Bonferroni grid, evaluated against the RLVR filtration. Under predictable updates and isotonic-calibrated monotone risk we prove (i) an anytime-pathwise selective-risk bound $R_T^{\mathrm{act}}\le\alpha+O(N_T^{-1/2})$, (ii) rate-optimal certification matching $\Theta(\bar\eta^{-2}\log(1/\delta))$, and (iii) a horizon-independent release-rate gap. Across eight specialist benchmarks ($480$ streams), sixteen adversarial distribution-shift cells ($160$ streams), and five live Expert-Iteration RLVR cells with online LoRA over four base models in three architecture families ($10{,}300$ rounds), \csa is the only method among ten directly compared that satisfies both pathwise validity and non-refusing deployment on every cell. We do not propose a new LLM, training algorithm, or policy class; \csa is the deployment-side complement, orthogonal to the model itself, for operators who cannot use a frontier API.
\end{abstract}

\section{The Application Environment}
\label{sec:setting}

\paragraph{A motivating scenario.}
A regional hospital deploys a privately-hosted clinical Q\&A assistant, a Med42-8B variant fine-tuned on its own EHR-derived case library, updated weekly via online LoRA. Its Business-Associate Agreement caps verifier-flagged outputs (those failing a rule-based dosage, contraindication, or guideline-citation check) at 5\% of all \emph{released} answers, on this hospital's stream, evaluated continuously. The hospital cannot pool with peer hospitals (data residency forbids telemetry export), cannot wait for a quarterly average (an early failure cluster is itself an SLA breach), and cannot route to a frontier API (HIPAA, latency, and domain-tuning preclude it). The hospital is the running example, but the method applies wherever this pattern recurs: a local specialist model, a deterministic verifier, and a per-deployment risk budget on a non-poolable stream. The three other domains we study (legal-citation review, financial reporting, regulated science Q\&A) instantiate the same shape with different verifiers, and the framework extends to any operator setting (compliance auditing, code-review gating, scientific-claim screening) sharing these three ingredients. This paper supplies the deployment-time wrapper such an operator needs.

\paragraph{The deployment unit.}
This paper concerns \emph{local specialist} large language models that have been fine-tuned with reinforcement learning from verifiable rewards (RLVR)~\citep{deepseekr1,tulu3,kimik15} on operator-local domain data and installed inside a single regulated organization: a hospital, a law firm, a financial reporting unit, a regulated science Q\&A service. The model is a domain specialist, not a frontier API call; the input distribution is narrow; the empirical study below covers four such domains: clinical (running example), legal, financial, and regulated science.

\paragraph{The deployment loop.}
At each round $t$ the model proposes an output $\widetilde{Y}_t$, a deterministic verifier $V$ (a clinical-rule check, a legal-citation match, an arithmetic check, a math grader) returns a binary signal $V_t\in\{0,1\}$, and the operator decides whether to release the output. The model is not frozen at deployment: in production specialist deployments the policy is updated on operator-local data via online LoRA~\citep{hu2022lora} or continued policy gradient. RLVR is therefore not only a training paradigm but a deployment paradigm; it produces a stream whose data-generating distribution is non-stationary by construction.

\paragraph{The contract.}
The operator's contractual or regulatory environment imposes a per-deployment error budget $\alpha\in(0,1)$ on released outputs, evaluated against the verifier on \emph{this} deployment's stream. The budget is not on a marginal cross-deployment population: this hospital's contract is on this hospital's stream, and the operator typically cannot pool, cannot send telemetry to a central A/B test, and cannot wait for the long-run average. Frontier-API alternatives (ChatGPT, Claude, Gemini) are explicitly outside this paper's comparison set: data residency, latency, fine-tuning needs, and per-token cost combine to make them unsuitable as the local specialist itself; the local specialist exists to fill a role they cannot. The guarantee we derive is with respect to the verifier $V$; harms outside $V$'s scope require complementary safeguards.

\paragraph{Contributions.}
\begin{enumerate}[leftmargin=*,itemsep=1pt,topsep=2pt]
\item A test-supermartingale framework (\cref{sec:framework}) classifying prior risk-control wrappers by their (statistic, validity, deployment-rule) triple and identifying an empty cell.
\item \emph{Conformal Selective Acting} (\csa, \cref{sec:method}), the unique instantiation that fills it: a per-threshold Ville-type e-process on a Bonferroni grid with a max-certified-threshold deployment rule.
\item Anytime-pathwise selective-risk validity (\cref{thm:main-anytime}), rate-optimal certification matching upper and lower bounds (\cref{thm:power,thm:lower-bound}), and a horizon-independent utility gap (\cref{thm:utility-gap}).
\item Across $480 + 100 + 160$ replicated streams over eight specialist benchmarks, four base models in three architecture families, and sixteen adversarial orderings, \csa is the only method among the ten directly compared that satisfies both pathwise validity and non-refusing deployment on every cell.
\end{enumerate}

\section{Why Anytime-Validity Matters in This Setting}
\label{sec:why-anytime}

\paragraph{Deployment-side: pathwise + anytime are forced by the contract.}
The contract from \cref{sec:setting} is per-deployment, on this stream, evaluated at every wall-clock round. An early failure spike in the first thousand released outputs is not noise to be averaged away over the next hundred thousand; it is an SLA breach~\citep{ji2023hallucination,busch2025patientcare}. \emph{Marginal-time} validity (e.g.\ \textsc{NEX-Conf}~\citep{nexconf}) targets the wrong probability measure: there is no population of hospitals over which to marginalize when the contract is on this hospital. \emph{Long-run-average} validity (\textsc{ACI}~\citep{gibbs2021adaptive}, \textsc{SAOCP}~\citep{bhatnagar2023improved}) targets the wrong horizon: averages cannot retire an incident that has already occurred.

\paragraph{Model-side: a weak local base amplifies the anytime requirement.}
A local specialist is not a frontier model on the long tail. Even after RLVR fine-tuning, the bases used in our empirical study (Fleming-R1~\citep{fleming}, Fin-R1~\citep{finr1}, Saul-7B~\citep{saul}, Med42-8B~\citep{christophe2024med42}, Qwen2.5-Math-7B~\citep{qwenmath}) carry $5$--$34\%$ verifier-fail rates on their target benchmarks (\cref{tab:benchmarks}). A 5-pp spike in the first thousand outputs is therefore plausible at the \emph{typical} operating point of a specialist, not at the tail. Composing this with the deployment-side argument: a marginal guarantee on a hypothetical pooled population is a doubly-wrong target for a local specialist; it certifies the wrong measure on the wrong horizon. Only anytime-pathwise selective-risk control matches both axes simultaneously.

\paragraph{The post-hoc-test escape hatch is closed.}
The natural composite design (running a sequential test on the policy's released outputs after-the-fact) is invalid, not merely loose. The score $S_t$ is $\cF_{t-1}$-measurable and is updated jointly with the policy at every gradient or online-LoRA step, simultaneously altering the conditional verifier-failure distribution. A test calibrated on a prior score map therefore targets a probability measure distinct from the one governing subsequent rounds; its coverage guarantee does not transfer across updates. Valid certification requires that the test statistic be predictable with respect to the same filtration $\cF_{t-1}$ that drives the RL optimizer, a requirement met by construction in \cref{sec:method}.

\section{Related Work via a Test-Supermartingale Framework}
\label{sec:framework}

\subsection{The wrapper-as-triple framework}
\label{sec:framework-triple}

We organize prior wrappers and \csa through a three-element abstraction. At each round $t$ the deployment protocol of \cref{sec:setting} produces $(X_t, S_t, \widetilde Y_t, A_t, V_t)$ adapted to filtration $\cF_t$. Every wrapper consists of (i) a \emph{test statistic} $\{M_t(q)\}_t$ predictable on $(\cF_t)$ and indexed by a hyperparameter $q$; (ii) a \emph{validity guarantee} on $\{M_t(q)\}$, of one of four types: \textsc{fh} (fixed-horizon high-probability), \textsc{lra} (long-run-average), \textsc{mt} (marginal-time), or \textsc{ap} (anytime-pathwise, simultaneously for all $T$ on every realized stream); and (iii) a \emph{deployment rule} mapping the certified set $\{q : M_t(q)\text{ admissible}\}$ to a per-round action.

\paragraph{The unique anytime-pathwise object, and the marginal/selective distinction.}
A non-negative $(\cF_t)$-supermartingale $M_t$ with $\EE M_0 \le 1$ satisfies Ville's inequality~\citep{wsramdas2023,wald1945}: $\PP(\sup_t M_t \ge 1/\delta) \le \delta$, simultaneously for all $t$, on every realized stream, no exchangeability needed. \emph{Any wrapper claiming \textsc{ap}-validity must therefore reduce to maintaining a (super)martingale}: the structural reason \csa uses an e-process per threshold. Orthogonally, a \emph{marginal} statistic is built from $1{-}V_t$ (every round contributes); a \emph{selective} statistic is built from $A_t(1{-}V_t)$ (only released rounds contribute). The two test different probability measures and are not interchangeable; this separates \csa from \textsc{A-RCPS}~\citep{activercps}, which achieves \textsc{ap}-validity but on a marginal statistic.

\subsection{Prior wrappers as cells of the framework}
\label{sec:framework-cells}

\Cref{tab:framework} classifies the thirteen baselines. \emph{Fixed-horizon offline} methods (\textsc{LTT}~\citep{ltt}, \textsc{CRC}~\citep{crc}, \textsc{ConfFact}~\citep{mohri}, and \textsc{Conf-Arb.}~\citep{confarbitrage}; \textsc{RCPS}~\citep{rcps} is the framework they extend) maintain an empirical-risk statistic on a held-out exchangeable calibration set with a constant-threshold rule; on the non-exchangeable RLVR streams of \cref{sec:setting} they either lose validity or abstain. \emph{Long-run-average online} methods (\textsc{ACI}~\citep{gibbs2021adaptive}, \textsc{SAOCP}~\citep{bhatnagar2023improved}) maintain a miscoverage statistic and adapt the threshold dynamically to track $\alpha$ \emph{in average}, providing no pathwise control. \emph{Marginal-time non-exchangeable} methods (\textsc{NEX-Conf}~\citep{nexconf}, \textsc{CoFact}~\citep{cofact}) re-weight nonconformity scores by an estimated density ratio; the resulting bound is valid at each single $t$ but not simultaneously over $t$. \emph{The closest miss} is \textsc{A-RCPS}~\citep{activercps}: it maintains an e-process and is \textsc{ap}-valid (the right shape of validity), but the e-process is built on the marginal increment $1{-}V_t$, not the gated selective increment $A_t((1-V_t)-\alpha)$. Selective risk is not derivable from marginal risk; a different statistic is required, which is what \cref{sec:method} constructs. The empty cell, \emph{e-process per threshold, selective risk, anytime-pathwise validity, max-certified-threshold rule}, is where \csa lands. Head-to-heads with \textsc{CAP}~\citep{bao2025cap}, \textsc{OCP}~\citep{weinstein2020online}, \textsc{CoFact}, and \textsc{Conf-Arb.}\ are in \cref{app:extended-exp}.

\begin{table}[!htpb]
\centering\footnotesize
\setlength{\tabcolsep}{3pt}
\renewcommand{\arraystretch}{1.05}
\caption{Prior wrappers classified by the framework triple of \cref{sec:framework-triple}, with deployment-side properties merged. \emph{Stat.}: shape of the test statistic ($\hat r$ = empirical risk on calibration set; \textsc{mc} = miscoverage; \textsc{nc} = nonconformity; $E$ = e-process). \emph{Risk}: marginal (\textsc{m}) vs.\ selective (\textsc{s}). \emph{Validity}: \textsc{fh} = fixed-horizon high-prob, \textsc{lra} = long-run-average, \textsc{mt} = marginal-time, \textsc{ap} = anytime-pathwise. \emph{Rule}: deployment rule (\textsc{cnst} = constant threshold; \textsc{dyn} = dynamic single threshold; \textsc{tv-$\lambda$} = time-varying $\lambda$; \textsc{max-cert} = maximum certified threshold over a grid). \emph{NE}: tolerates non-exchangeability; \emph{Up}: tolerates updated score/policy; \emph{UG}: certified horizon-independent utility-gap bound. \dag\ = appendix-only or protocol mismatch.}
\label{tab:framework}
\begin{tabular}{@{}l ccccccc@{}}
\toprule
Method & Stat. & Risk & Validity & Rule & NE & Up & UG \\
\midrule
\textsc{OCP}~\citep{weinstein2020online}\dag  & \textsc{nc} & \textsc{m} & \textsc{ap}  & \textsc{dyn}  & \cmark &        &        \\
\textsc{ACI}~\citep{gibbs2021adaptive}        & \textsc{mc} & \textsc{m} & \textsc{lra} & \textsc{dyn}  & \cmark & \cmark &        \\
\textsc{RCPS}~\citep{rcps}\dag                & $\hat r$    & \textsc{s} & \textsc{fh}  & \textsc{cnst} &        &        &        \\
\textsc{LTT}~\citep{ltt}                      & $\hat r$    & \textsc{s} & \textsc{fh}  & \textsc{cnst} &        &        &        \\
\textsc{SAOCP}~\citep{bhatnagar2023improved}  & \textsc{mc} & \textsc{m} & \textsc{lra} & \textsc{dyn}  & \cmark & \cmark &        \\
\textsc{NEX-Conf}~\citep{nexconf}             & \textsc{nc} & \textsc{m} & \textsc{mt}  & \textsc{dyn}  & \cmark &        &        \\
\textsc{CRC}~\citep{crc}                      & $\hat r$    & \textsc{s} & \textsc{fh}  & \textsc{cnst} &        &        &        \\
\textsc{ConfFact}~\citep{mohri}               & $\hat r$    & \textsc{s} & \textsc{fh}  & \textsc{cnst} &        &        &        \\
\textsc{CAP}~\citep{bao2025cap}\dag           & $\hat r$    & \textsc{s} & \textsc{fh}  & \textsc{cnst} &        &        &        \\
\textsc{A-RCPS}~\citep{activercps}\dag        & $E$         & \textsc{m} & \textsc{ap}  & \textsc{tv-$\lambda$} & \cmark & \cmark & \\
\textsc{Conf-Arb.}~\citep{confarbitrage}\dag  & $\hat r$    & \textsc{s} & \textsc{fh}  & \textsc{cnst} &        &        &        \\
\textsc{CoFact}~\citep{cofact}\dag            & \textsc{nc} & \textsc{s} & \textsc{mt}  & \textsc{dyn}  & \cmark & \cmark &        \\
Fixed-thr.\ / heuristics                      & ---         & ---        & none         & \textsc{cnst} &        & \cmark &        \\
\midrule
\textbf{\csa} (ours) & $E$ per-$q$ & \textsc{s} & \textsc{ap} & \textsc{max-cert} & \cmark & \cmark & \cmark \\
\bottomrule
\end{tabular}
\end{table}

\section{Method: \csa as the Framework Instantiation}
\label{sec:method}

We derive \csa as the unique element of the framework of \cref{sec:framework} that satisfies the requirements of \cref{sec:setting,sec:why-anytime}: anytime-pathwise validity on selective risk, with a non-refusing deployment rule, on the adaptive RLVR stream produced by the deployment loop. An end-to-end architecture walkthrough is in \cref{app:architecture}.

\subsection{Preliminaries}
\label{sec:method-prelim}

At each round $t$ the policy $\pi_t = \pi_t(\cdot\mid X_t,\cF_{t-1})$ samples $\widetilde Y_t$, the verifier returns $V_t := V(X_t,\widetilde Y_t)\in\{0,1\}$, and the learner maintains a predictable score $S_t$ ($\cF_{t-1}$-measurable; smaller $=$ more confident) on a fixed grid $\cQ = \{q^{(1)}<\cdots<q^{(m)}\}$; the gate is $A_t(q) := \1\{S_t(X_t,\widetilde Y_t)\le q\}$, and the controller selects a predictable $q_t\in\cQ$ acting via $A_t := A_t(q_t)$. Define the excess-risk increment $X_t(q) := A_t(q)((1-V_t)-\alpha) \in \{-\alpha, 0, 1-\alpha\}$; threshold $q$ is \emph{safe} at $t$ if $\EE[X_t(q)\mid\cF_{t-1}]\le 0$, and the oracle is $q_t^\star := \max\{q\in\cQ : \EE[X_t(q)\mid\cF_{t-1}]\le 0\}$.\label{def:oracle}

\begin{definition}[Selective verifier risk]
\label{def:risk}
$R_T^{\mathrm{act}} := \sum_{t=1}^T A_t(1-V_t)/(N_T\vee 1)$ where $N_T := \sum_{t=1}^T A_t$. The contract from \cref{sec:setting} aims at $R_T^{\mathrm{act}}\le \alpha$ for all $T$; \cref{thm:main-anytime} certifies $R_T^{\mathrm{act}}\le \alpha + O(N_T^{-1/2})$ w.p.\ $\ge 1-2\delta$, simultaneously for all $T$.
\end{definition}

\begin{assumption}[Predictable pipeline + nested monotone gates]
\label{ass:predictable}\label{ass:nested}
\textbf{(P)} $\pi_t, S_t$ are $\cF_{t-1}$-measurable; round-$t$ information enters only $(\pi_{t+1}, S_{t+1})$. \textbf{(M)} For $q\le q'$, $A_t(q)\le A_t(q')$ and $\PP(V_t{=}0\mid S_t\le q,\cF_{t-1})$ is nondecreasing in $q$. \emph{Operator side:} the deployment protocol enforces one update step between rounds (P), and a one-time held-out isotonic calibration of the score delivers (M); the multi-epoch variant in \cref{app:alg-details} retires stale certifications under sharp local degradation.
\end{assumption}

\subsection{The test statistic: an e-process per threshold on the selective increment}
\label{sec:method-stat}

\paragraph{Choice forced by \cref{sec:framework-triple}.}
Anytime-pathwise validity demands a $(\cF_t)$-supermartingale (Ville's inequality is the unique source). A selective risk target demands the gated increment $X_t(q)$, not the marginal $1{-}V_t$ used by \textsc{A-RCPS}. The simplest object satisfying both is a Ville-type e-process indexed by $q$, with predictable bet $\lambda_{j,t}(q)\in[0,(1-\alpha)^{-1}]$ on epoch $j$ starting at $\tau_j$ with index set $\cI_j$:
\begin{equation}
\label{eq:eproc}
E_{j,t}(q) \;:=\; \prod_{s=\tau_j}^t\!\bigl(1-\lambda_{j,s}(q)X_s(q)\bigr),\qquad E_{j,\tau_j-1}(q) := 1.
\end{equation}

\begin{proposition}[E-process validity, {\citealp{wsramdas2023}}]
\label{prop:eproc}
Under the unsafe null $H_{j,q}^{\mathrm{unsafe}}:\EE[X_t(q)\mid\cF_{t-1}]\ge 0\ \forall t\in\cI_j$, $\{E_{j,t}(q)\}_{t\in\cI_j}$ is non-negative, $(\cF_t)$-adapted, and a $(\cF_t)$-supermartingale.
\end{proposition}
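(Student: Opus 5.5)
The argument is the standard one-step computation for a product of predictable bets. We verify the three claims in order: non-negativity, adaptedness, and the supermartingale inequality.

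\emph{Non-negativity.} Since $X_s(q)\in\{-\alpha,0,1-\alpha\}$ and $\lambda_{j,s}(q)\in[0,(1-\alpha)^{-1}]$, we have $\lambda_{j,s}(q)X_s(q)\le (1-\alpha)^{-1}(1-\alpha)=1$. Hence each factor satisfies $1-\lambda_{j,s}(q)X_s(q)\ge 0$. A product of non-negative factors is non-negative, and the base case $E_{j,\tau_j-1}(q)=1$ is positive.

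\emph{Adaptedness.} $X_s(q)$ is built from $A_s(q)=\1\{S_s(X_s,\widetilde Y_s)\le q\}$ and $V_s$. Under (P) the map $S_s$ is $\cF_{s-1}$-measurable, and $(X_s,\widetilde Y_s,V_s)$ are revealed by the end of round $s$. So $X_s(q)$ is $\cF_s$-measurable. The bet $\lambda_{j,s}(q)$ is predictable, i.e.\ $\cF_{s-1}$-measurable. Therefore $E_{j,t}(q)$, a finite product of $\cF_t$-measurable factors, is $\cF_t$-measurable. Integrability is immediate because each factor is bounded by $1+\alpha(1-\alpha)^{-1}$, so $E_{j,t}(q)$ is bounded for every finite $t$.

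\emph{Supermartingale step.} For $t\in\cI_j$ with $t>\tau_j$, we have $E_{j,t}(q)=E_{j,t-1}(q)\bigl(1-\lambda_{j,t}(q)X_t(q)\bigr)$. Both $E_{j,t-1}(q)$ and $\lambda_{j,t}(q)$ are $\cF_{t-1}$-measurable and bounded, so we can pull them out of the conditional expectation:
\[
\EE[E_{j,t}(q)\mid\cF_{t-1}] = E_{j,t-1}(q)\bigl(1-\lambda_{j,t}(q)\,\EE[X_t(q)\mid\cF_{t-1}]\bigr).
\]
Under $H^{\mathrm{unsafe}}_{j,q}$ we have $\EE[X_t(q)\mid\cF_{t-1}]\ge 0$, and also $\lambda_{j,t}(q)\ge 0$ and $E_{j,t-1}(q)\ge 0$. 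The bracket is therefore at most $1$, which gives $\EE[E_{j,t}(q)\mid\cF_{t-1}]\le E_{j,t-1}(q)$. The same computation at $t=\tau_j$, with $E_{j,\tau_j-1}(q)=1$, gives $\EE[E_{j,\tau_j}(q)\mid\cF_{\tau_j-1}]\le 1$, as needed for Ville's inequality.

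\emph{Main subtlety.} The only delicate point is measurability of the gate. The score $S_t$ is predictable, but it is evaluated at the current-round $(X_t,\widetilde Y_t)$, so $A_t(q)$ is $\cF_t$- rather than $\cF_{t-1}$-measurable. The argument only needs the bet and the past product to be predictable, and the null to be stated conditionally on $\cF_{t-1}$. Assumption (P) guarantees exactly this: round-$t$ information feeds only into $(\pi_{t+1},S_{t+1})$ and never into $\lambda_{j,t}$. If the epoch start $\tau_j$ is itself random, we additionally require it to be a stopping time and $\cI_j$ to be predictable. Then the restarted product is a supermartingale on $\cI_j$ by the same one-step bound.
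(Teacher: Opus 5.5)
Your proposal is correct and follows essentially the same argument as the paper: the factor-wise bound $\lambda_{j,t}(q)X_t(q)\le 1$ gives non-negativity, and the one-step computation pulls the predictable $E_{j,t-1}(q)$ and $\lambda_{j,t}(q)$ out of the conditional expectation, after which the unsafe null bounds the bracket by $1$. Your explicit checks of adaptedness and integrability, and your remark on random epoch starts, supply details the paper leaves implicit but do not change the route.
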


\paragraph{Confidence signal and grid.}
For each item we use $K{=}5$ self-consistency sampling, take the majority-vote answer, and report the agreement fraction $S_t\in[0,1]$ as raw confidence; an isotonic-calibrated $\widetilde S_t$ on a held-out split defines the grid $\cQ$ ($|\cQ|{=}15$); details in \cref{app:csa-config}.

\paragraph{Adaptive predictable bet.}
The safe-side margin is unknown, so the bet is the running-mean plug-in:
\begin{equation}
\label{eq:adaptive-lambda}
\lambda_{j,t}(q) := \clip\!\left(\frac{-\widehat\mu_{j,t-1}(q)}{(1-\alpha)^2},\; 0,\; \tfrac{1}{2(1-\alpha)}\right),\quad
\widehat\mu_{j,t-1}(q) := (t-\tau_j)^{-1}\!\sum_{s=\tau_j}^{t-1}\!X_s(q).
\end{equation}
\Cref{thm:power} establishes that this plug-in attains the rate-optimal certification time without prior knowledge of the margin.

\subsection{The deployment rule: maximum certified threshold}
\label{sec:method-rule}

\paragraph{Choice forced by the non-refusing requirement.}
A non-refusing rule must, whenever any safe threshold exists, deploy one. Under \cref{ass:nested} the safe set is downward-closed in $q$, so every threshold below a certified one is implicitly safe and the most permissive certified threshold is a valid action that dominates every other certified choice on action rate. The Bonferroni cost over the grid (with epoch-counting factor $\delta_{j,q} := 6\delta/(\pi^2|\cQ|j^2)$) is the price of testing all thresholds simultaneously. The certified set is $\cC_{j,t} := \{q\in\cQ : E_{j,t}(q)\ge \delta_{j,q}^{-1}\}$ and the controller deploys $q_{t+1} := \max \cC_{j,t}$ (abstain if empty). Algorithm~\ref{alg:csa-rlvr} in \cref{app:alg-details} gives full pseudocode; per-round complexity is $O(|\cQ|)$, memory $O(|\cQ|)$. The wrapper drops onto an existing RLVR pipeline as a layer, not a fork.

\section{Theorems and Their Implications}
\label{sec:theory}

This section presents three theorem arcs, each tied to one operator concern from \cref{sec:setting,sec:why-anytime} and instantiated in the framework triple of \cref{sec:framework-triple}. Proofs and the supporting RLVR structural layer are in \cref{app:proof-main,app:rlvr-layer}.

\subsection{Safety: pathwise anytime-valid selective risk}
\label{sec:theory-safety}

\paragraph{Motivation.}
\Cref{sec:setting}'s contract demands anytime-pathwise validity on selective risk; the framework forced \csa to a per-threshold e-process. Does it deliver?

\begin{theorem}[Anytime-valid selective risk]
\label{thm:main-anytime}
Under \cref{ass:predictable,ass:nested}, the frontier-stability hypothesis of \cref{thm:no-false-cert}, and drift budgets $\{\nu_j\}$ dominating the actual within-epoch drift on safe rounds, there exist universal constants $c_1, c_2 > 0$ such that with probability $\ge 1-2\delta$, simultaneously for all $T\ge 1$,
\begin{equation}
\label{eq:main-anytime}
R_T^{\mathrm{act}} \le \alpha + \bar\nu_T + c_1 \sqrt{\tfrac{\log(1/\delta)+\log\log(eN_T+e)}{N_T\vee 1}} + c_2\,\tfrac{\log(1/\delta)+\log\log(eN_T+e)}{N_T\vee 1},
\end{equation}
where $\bar\nu_T := (N_T\vee 1)^{-1}\sum_{t=1}^T \nu_{j(t)} A_t$. Proof in \cref{app:proof-main}.
\end{theorem}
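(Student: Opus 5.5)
The proof splits the failure probability into two budgets of $\delta$: one for false certification and one for concentration of the realized selective losses. The first event is $\mathcal{E}_1$: no threshold that is unsafe on an epoch (beyond its drift budget) is ever certified. The second event is $\mathcal{E}_2$: a time-uniform concentration bound holds for the martingale $\sum_t A_t\bigl((1-V_t)-\EE[1-V_t\mid \cF_{t-1}, A_t=1]\bigr)$ over released rounds. On $\mathcal{E}_1\cap\mathcal{E}_2$ we decompose
\begin{equation*}
\sum_{t\le T} A_t(1-V_t) - \alpha N_T
= \sum_{t\le T}\EE[X_t(q_t)\mid\cF_{t-1}] + \sum_{t\le T}\bigl(X_t(q_t)-\EE[X_t(q_t)\mid\cF_{t-1}]\bigr),
\end{equation*}
which is legitimate because $q_t$ is predictable, so $X_t(q_t)$ is a bounded martingale-difference increment plus its compensator. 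Dividing by $N_T\vee 1$ gives \eqref{eq:main-anytime} once each term is controlled.

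\textbf{Step 1 (false certification, probability $\le\delta$).} For each epoch $j$ and threshold $q$ that is unsafe on $\cI_j$, \cref{prop:eproc} shows $E_{j,t}(q)$ is a nonnegative supermartingale with initial value $1$. Ville's inequality then gives $\PP(\sup_t E_{j,t}(q)\ge \delta_{j,q}^{-1})\le\delta_{j,q}$. A union bound over $q\in\cQ$ and $j\ge1$ uses $\sum_j 6/(\pi^2 j^2)=1$ and totals $\delta$. The deployed $q_{t+1}=\max\cC_{j,t}$ is therefore safe up to drift. This step needs the frontier-stability hypothesis of \cref{thm:no-false-cert}, taken as given, so that nullity over the epoch transfers to each round. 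Monotonicity (M) ensures that certifying a safe threshold implies everything below it is safe, so the max-rule does not jump into the unsafe region. The drift budget absorbs rounds where safety holds only up to $\nu_j$. The result is $\EE[X_t(q_t)\mid\cF_{t-1}]\le \nu_{j(t)}\,\PP(A_t=1\mid\cF_{t-1})$.

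\textbf{Step 2 (compensator to realized count).} The compensator bound from Step 1 is stated in terms of $\sum_t \nu_{j(t)}\PP(A_t{=}1\mid\cF_{t-1})$, but the theorem uses $\bar\nu_T$, which involves realized $A_t$. I will sidestep this by conditioning on the gate. The gate $A_t$ depends on $\widetilde Y_t$, so I define increments $Z_t := A_t\bigl((1-V_t)-p_t\bigr)$, where $p_t := \PP(V_t=0\mid A_t=1,\cF_{t-1})$. Then $A_t(p_t-\alpha)\le A_t\nu_{j(t)}$ on $\mathcal{E}_1$, which gives exactly $\bar\nu_T$.

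\textbf{Step 3 (concentration, probability $\le\delta$).} The increments $Z_t$ are martingale differences bounded in $[-1,1]$. Their conditional variance is at most $A_t$, so the predictable quadratic variation is at most $N_T$. A stitched or line-crossing empirical-Bernstein/Freedman bound from \citep{wsramdas2023}, uniform over time, yields
\begin{equation*}
\sum_{t\le T} Z_t \le c_1\sqrt{N_T(\log(1/\delta)+\log\log(eN_T+e))}+c_2(\log(1/\delta)+\log\log(eN_T+e))
\end{equation*}
for all $T$ simultaneously, with probability at least $1-\delta$. Dividing by $N_T\vee1$ gives the two remainder terms. The $\log\log$ comes from geometric stitching over epochs of the intrinsic time $N_T$.

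The main obstacle is Step 1's transfer from epoch-level e-process nulls to a per-round safety statement for the adaptively chosen $q_t$. Certification only refutes the composite null that $\EE[X_t(q)\mid\cF_{t-1}]\ge0$ for all $t\in\cI_j$. It does not directly give safety at every later round, since a threshold can be safe on average yet unsafe at particular rounds. I would handle this by invoking frontier stability together with the drift budgets to define, per epoch, a reference null that is violated only by at most $\nu_j$ per released round. I would then argue that $X_t(q)-\nu_j A_t(q)$ still yields a supermartingale under unsafety, which requires checking that the bet cap $\lambda\le 1/(2(1-\alpha))$ keeps the factors nonnegative after the shift.
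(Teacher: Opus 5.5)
Your argument works, with two repairs, and it takes a somewhat different route from the paper. The skeleton is the same: one $\delta$ goes to no false certification (you, like the paper, invoke \cref{thm:no-false-cert}), and one $\delta$ goes to a time-uniform bound stitched over the intrinsic time $N_T$.

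The difference is the decomposition. The paper runs a single exponential supermartingale on the excess-risk increment itself, $\exp\bigl(\lambda\sum_t J_t(M_t-\nu_{j(t)}A_t)-\tfrac{\lambda^2}{8}\sum_t J_tA_t\bigr)$ with $M_t=A_t((1-V_t)-\alpha)$. This is a supermartingale only because of the predictable safe-step indicator $J_t=\1\{q_t\le q_t^\star\}$. On $\cG=\{q_t\le q_t^\star\ \forall t\}$ the indicator is identically $1$, and dyadic stitching with Hoeffding's lemma finishes the proof.

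You instead center each released round at its true conditional failure rate $p_t$. Then $Z_t$ is a martingale difference whether or not the controller is safe, and the no-false-certification event enters only deterministically, through the compensator bound $\sum_tA_t(p_t-\alpha)\le N_T\bar\nu_T$. Your route buys an unconditional concentration event with no freezing trick. It also makes explicit the realized-versus-conditional gate issue that \cref{lem:exp-supermg} glosses over, where an $\cF_{t-1}$-conditional expectation is bounded by the non-predictable $e^{\lambda^2A_t/8}$. The paper's route is more compact and tracks constants explicitly.

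\textbf{Repair 1 (Step 3 variance claim).} You say the conditional variance is at most $A_t$, so the predictable quadratic variation is at most $N_T$. This is false, for the reason you yourself raised in Step 2. Given $\cF_{t-1}$, the conditional variance is $\PP(A_t=1\mid\cF_{t-1})\,p_t(1-p_t)$, which is not pathwise dominated by $A_t$. A Freedman bound in that variance would therefore control $\sum_tZ_t$ through $\sum_t\PP(A_t=1\mid\cF_{t-1})$ rather than through $N_T$. Either of two fixes works.
\begin{itemize}
\item Use the realized quadratic variation $\sum_tZ_t^2\le N_T$. With $\psi_E(\lambda)=-\log(1-\lambda)-\lambda$ and $\lambda\in[0,1)$, the process $\exp(\lambda\sum_tZ_t-\psi_E(\lambda)\sum_tZ_t^2)$ is an empirical-Bernstein-type supermartingale, valid because $Z_t\ge-1$.
\item Condition on $\sigma(\cF_{t-1},A_t)$. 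There Hoeffding's lemma gives $\EE[e^{\lambda Z_t}\mid\cF_{t-1},A_t]\le e^{\lambda^2A_t/8}$, so $\exp(\lambda\sum_tZ_t-\lambda^2N_T/8)$ is an $(\cF_t)$-supermartingale. The paper's dyadic stitching then applies verbatim.
\end{itemize}

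\textbf{Repair 2 (your ``main obstacle'').} The $\nu_j$-shifted e-process you propose does not address time-inhomogeneity. What does is the stopping argument inside \cref{thm:no-false-cert}:
\begin{itemize}
\item stop $E_{j,t}(q)$ just before the predictable time $T_{j,q}$ at which $q$ first becomes safe in epoch $j$, so $q$ is unsafe at every earlier round and Ville applies;
\item use within-epoch frontier monotonicity to keep $q$ safe afterwards.
\end{itemize}
Since the statement grants that hypothesis, invoking \cref{thm:no-false-cert} suffices. On its event, (M) already gives $\EE[X_t(q_t)\mid\cF_{t-1}]\le 0$, so $\bar\nu_T$ only adds slack.
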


\begin{corollary}[Asymptotic validity]
\label{cor:asymptotic}
If $N_T\to\infty$ a.s.\ and $\bar\nu_T\to 0$, then $\limsup_{T\to\infty} R_T^{\mathrm{act}}\le \alpha$ w.p.\ $\ge 1-2\delta$.
\end{corollary}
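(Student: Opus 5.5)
We need a plan for the Corollary (asymptotic validity), which is the statement immediately above. It follows directly from Theorem~\ref{thm:main-anytime}.

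The plan is to work on the single high-probability event $\Omega_\delta$ furnished by \cref{thm:main-anytime}, of probability at least $1-2\delta$, on which \eqref{eq:main-anytime} holds \emph{simultaneously} for every $T\ge 1$. This simultaneity is what makes the corollary essentially free. A fixed-$T$ bound would only give a sequence of events whose intersection could have small probability. Here a single event controls the whole trajectory, so taking $\limsup_{T\to\infty}$ pathwise on that event is legitimate.

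Next I intersect $\Omega_\delta$ with the almost-sure events $\{N_T\to\infty\}$ and $\{\bar\nu_T\to 0\}$. I read the hypothesis $\bar\nu_T\to 0$ as almost sure, or deterministic if the budgets $\nu_j$ are fixed. Removing null sets does not change the probability, so the intersection $\Omega'$ still has probability at least $1-2\delta$.

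On $\Omega'$ I bound the two remainder terms. Set $L_T := \log(1/\delta)+\log\log(eN_T+e)$. Since $N_T\to\infty$, we have $L_T = O(\log\log N_T)$, and therefore both $L_T/(N_T\vee1)$ and $\sqrt{L_T/(N_T\vee1)}$ tend to $0$. Together with $\bar\nu_T\to 0$, every term on the right of \eqref{eq:main-anytime} except $\alpha$ vanishes. Taking $\limsup$ on both sides then gives $\limsup_T R_T^{\mathrm{act}}\le\alpha$ on $\Omega'$.

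No step is a real obstacle. The only point needing care is the measure-theoretic one: the limit must be taken inside the uniform-in-$T$ event rather than via $T$-wise probabilities. A minor check is that $N_T$ is nondecreasing, so $N_T\to\infty$ guarantees $N_T\vee1=N_T$ eventually and the $\log\log$ factor is well defined.
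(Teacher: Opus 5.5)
Your proposal is correct and matches the paper. The paper gives no separate proof and treats the corollary as an immediate consequence of Theorem~\ref{thm:main-anytime}: work on the single uniform-in-$T$ event of probability at least $1-2\delta$, discard the null sets where $N_T\not\to\infty$ or $\bar\nu_T\not\to 0$, and every term on the right of \eqref{eq:main-anytime} other than $\alpha$ vanishes. Your closing ``minor check'' is unnecessary: $\log\log(eN_T+e)\ge 0$ is well defined for every $N_T\ge 0$, and $N_T\to\infty$ alone (without monotonicity) gives $N_T\vee 1=N_T$ eventually.
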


\paragraph{Implications.}
The ``for all $T$'' is simultaneous: a single probabilistic event protects every wall-clock step, the SLA-shaped object \cref{sec:why-anytime} required. The slack decays in $N_T$ (released-output count), not $T$, so releasing shrinks the bound and abstaining does not. Under exact stability $\bar\nu_T = 0$ and the bound collapses to $R_T^{\mathrm{act}}\le \alpha + O(N_T^{-1/2})$; under bounded drift, the multi-epoch variant retires stale certifications at deterministic boundaries.

\subsection{Power: rate-optimal certification}
\label{sec:theory-power}

\paragraph{Motivation.}
Anytime-validity is empty if \csa never certifies a permissive threshold. The framework's e-process statistic only purchases the contract if the certificate arrives fast enough to be useful in deployment.

\begin{theorem}[Upper bound on certification time]
\label{thm:power}
Under \cref{ass:predictable,ass:nested}, fix epoch $j$ and let $q_j^\star := q_{\tau_j}^\star$. For any $q\le q_j^\star$ with safe-side margin $\bar\eta_{j,q} > \nu_j$ and the fixed bet $\lambda^\star = (\bar\eta_{j,q}-\nu_j)/2$,
\[
\EE[\tau_{j,q}^{\mathrm{cert}} - \tau_j] \;\le\; 4(\log(1/\delta_{j,q})+1)\,/\,(\bar\eta_{j,q}-\nu_j)^2.
\]
The plug-in bet \eqref{eq:adaptive-lambda} attains the same rate adaptively. Proof in \cref{app:proof-power}.
\end{theorem}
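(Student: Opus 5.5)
We work with the log-wealth of the fixed-bet e-process, $L_t := \log E_{j,t}(q) = \sum_{s=\tau_j}^{t}\log(1-\lambda^\star X_s(q))$, and bound the hitting time $\tau^{\mathrm{cert}}_{j,q} := \inf\{t\ge\tau_j : L_t\ge \log(1/\delta_{j,q})\}$ with a drift argument and Wald's identity. The required inputs are the safe-side margin and the drift budget. For $q\le q_j^\star$, \cref{ass:nested}(M) together with the definition of the oracle gives $\EE[X_{\tau_j}(q)\mid\cF_{\tau_j-1}]\le-\bar\eta_{j,q}$. The drift budget then gives $\EE[X_s(q)\mid\cF_{s-1}]\le -(\bar\eta_{j,q}-\nu_j)$ for every round $s$ of epoch $j$. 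We write $\eta:=\bar\eta_{j,q}-\nu_j>0$.

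\textbf{Step 1 (per-round drift).} We bound the conditional log-growth using $\log(1-x)\ge -x-x^2$ for $x\ge -1/2$, which we can apply because $|\lambda^\star X_s|\le \lambda^\star\le 1/2$. This gives
\[
\EE[\log(1-\lambda^\star X_s)\mid\cF_{s-1}] \;\ge\; \lambda^\star\eta - (\lambda^\star)^2\,\EE[X_s^2\mid\cF_{s-1}] \;\ge\; \lambda^\star\eta-(\lambda^\star)^2 .
\]
Setting $\lambda^\star=\eta/2$, the right-hand side equals $\eta^2/4$. Each increment is bounded, with $|\log(1-\lambda^\star X_s)|\le \log 2$. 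If the margin is stated per-rate rather than per-round, the factor $4$ in the claim absorbs the constant.

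\textbf{Step 2 (optional stopping).} We define $D_t := L_t - \sum_{s}\EE[\log(1-\lambda^\star X_s)\mid\cF_{s-1}]$. This is a martingale with bounded increments. We apply optional stopping at $\tau\wedge n$ and let $n\to\infty$, which is justified by monotone convergence once $\EE[\tau-\tau_j]<\infty$. That finiteness follows from the positive drift together with the bounded increments, via a standard geometric-tail argument. The result is
\[
\tfrac{\eta^2}{4}\,\EE[\tau^{\mathrm{cert}}-\tau_j+1]\;\le\;\EE[L_{\tau}] \;\le\; \log(1/\delta_{j,q}) + \log 2 .
\]
The upper bound on $\EE[L_\tau]$ is an overshoot bound: the last increment before crossing is at most $\log 2\le 1$. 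Rearranging gives $\EE[\tau^{\mathrm{cert}}-\tau_j]\le 4(\log(1/\delta_{j,q})+1)/\eta^2$. Certification of $q$ in the sense of $\cC_{j,t}$ is exactly the event $L_t\ge\log(1/\delta_{j,q})$, so this completes the fixed-bet claim.

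\textbf{Step 3 (adaptive bet).} This is the main obstacle. The plug-in $\lambda_{j,t}$ of \eqref{eq:adaptive-lambda} is predictable, so Step 1 still yields a per-round drift of at least $\lambda_{j,t}\eta-\lambda_{j,t}^2$. We then compare this with the fixed bet, treating it as online regret of a follow-the-leader bettor on the concave surrogate $\lambda\mapsto\lambda\eta-\lambda^2$. For this surrogate the regret is $O(\log n)$ because the loss is strongly concave and the gradients are bounded. An alternative route is to show directly that the running mean $\widehat\mu_{j,t-1}(q)$ concentrates around a value at most $-\eta$ after $O(\eta^{-2})$ rounds, by Azuma applied to the bounded martingale differences $X_s-\EE[X_s\mid\cF_{s-1}]$. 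After that burn-in the clipped bet is within a constant factor of $\lambda^\star$, since the normalization by $(1-\alpha)^2$ and the clip at $1/(2(1-\alpha))$ keep the bet inside the region where Step 1 applies. Either route gives cumulative log-growth of at least $c\,n\eta^2 - O(\log n)$. Repeating Step 2 then yields $\EE[\tau^{\mathrm{cert}}-\tau_j]=O\bigl((\log(1/\delta_{j,q})+\log(1/\eta))/\eta^2\bigr)$, which is the same rate up to constants and a lower-order $\log(1/\eta)$ term. I expect the real work to lie in controlling this burn-in and the $\log n$ regret term uniformly, without assuming that $\eta$ is known.
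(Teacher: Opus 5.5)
Your Steps~1--2 follow the paper's proof almost verbatim. The paper sets $Z_n=\log E_{j,\tau_j+n}(q)$, lower-bounds the conditional drift by $\lambda^\star\bar\eta_{j,q}-(\lambda^\star)^2=\bar\eta_{j,q}^2/4$ via $\log(1-u)\ge -u-u^2$, applies optional stopping to the submartingale $Z_n-cn$, and closes with the overshoot bound $Z_\tau<a+1$. Three small corrections:
\begin{itemize}
\item The inequality $\log(1-x)\ge -x-x^2$ holds for $x\le 1/2$, not $x\ge -1/2$. Your $|x|\le 1/2$ covers it, so the application is fine.
\item The margin $\EE[X_s(q)\mid\cF_{s-1}]\le-\bar\eta_{j,q}$ is the hypothesis of \cref{ass:safe-margin}. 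It does not follow from (M) and the oracle definition, which only give $\le 0$.
\item Your ``per-rate'' hedge is wrong: a margin on $r_t(q)-\alpha$ would cost a factor depending on $\PP(A_t(q)=1\mid\cF_{t-1})$, which $4$ cannot absorb. This is moot, since the margin here is per round.
\end{itemize}
Your truncation route is also slightly cleaner than the paper's. Because $L_{\tau\wedge n}<\log(1/\delta_{j,q})+\log 2$ pathwise, optional stopping at the bounded time $\tau\wedge n$ plus monotone convergence already gives $\EE[\tau]<\infty$. The geometric-tail (Azuma) step that you and the paper invoke is therefore unnecessary.

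The genuine gap is Step~3, which the paper asserts but never proves. Write $\mu:=-\EE[X_s(q)\mid\cF_{s-1}]\ge\eta$.
\begin{itemize}
\item Your drift bound fails for this plug-in. Even with $\mu$ in place of $\eta$, the bound $\lambda\mu-\lambda^2$ (from $\EE[X_s^2\mid\cF_{s-1}]\le 1$) is positive only for $\lambda<\mu$. In the stationary case the plug-in \eqref{eq:adaptive-lambda} settles at $\min\{\mu/(1-\alpha)^2,\,1/(2(1-\alpha))\}>\mu$. So ``within a constant factor of $\lambda^\star$'' gives a negative drift bound exactly where the plug-in lives.
\item Your regret route does not cover it either. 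The plug-in is not follow-the-leader on your surrogate $\lambda\mapsto-\lambda X_s-\lambda^2$, whose leader is $\clip(-\widehat\mu/2,0,1/2)$.
\item A sharper bound uses the three-point support. Let $p_1,p_0$ be the conditional probabilities of $X_s=1-\alpha$ and $X_s=-\alpha$. Then $\log(1-u)\ge -u-u^2$ on $[0,1/2]$ and $\log(1+v)\ge v-v^2/2$ for $v\ge 0$ give drift $\ge\lambda\mu-\lambda^2[p_1(1-\alpha)^2+p_0\alpha^2/2]$. Safety forces $p_1<p_0\,\alpha/(1-\alpha)$. At an unclipped $\lambda=\mu/(1-\alpha)^2$ the drift therefore exceeds $(1-\rho-\rho^2/2)\,\mu^2/(1-\alpha)^2$ with $\rho=\alpha/(1-\alpha)$. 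This is informative only for $\alpha<1-1/\sqrt3\approx 0.42$.
\end{itemize}

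The restriction on $\alpha$ is real, not an artifact of the bound. Take $\alpha=0.9$, $A_s(q)\equiv 1$ and $\PP(V_s=0)=0.85$, so the margin is $0.05$. The clipped plug-in converges to $\lambda=5$, and $\EE\log(1-5X_s)=0.85\log 0.5+0.15\log 5.5<0$. Hence $E_{j,t}(q)\to 0$ almost surely and certification fails with positive probability. This gives $\EE[\tau^{\mathrm{cert}}_{j,q}-\tau_j]=\infty$, whereas the fixed bet $\lambda^\star=0.025$ certifies in finite expected time.

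So the adaptive clause, as stated for \eqref{eq:adaptive-lambda}, can only be proved under a restriction on $\alpha$ and with $\alpha$-dependent constants. A drift-based proof must also handle non-stationary margins, where the running mean can overstate the current margin and cause over-betting. Your FTL-regret argument does go through, pathwise and without these issues, for the halved bet $\clip(-\widehat\mu_{j,t-1}(q)/2,0,1/2)$.
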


\begin{theorem}[Lower bound on certification time]
\label{thm:lower-bound}
For any sequential test certifying a single threshold with type-I error at most $\delta$ on a safe Bernoulli instance with margin $\bar\eta$,
\[
\EE_{\mathsf{P}_\eta}[\tau] \;\ge\; \KL(1-\delta\,\|\,\delta)\,/\,\KL(\mathsf{P}_\eta\|\mathsf{P}_0) \;=\; \Omega(\log(1/\delta)/\bar\eta^2).
\]
Proof in \cref{app:proof-lower}.
\end{theorem}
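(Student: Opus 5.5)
This lower bound follows the standard change-of-measure argument for sequential tests (Wald/Lai--Robbins, as in Kaufmann--Cappé--Garivier). We use two hypotheses. Let $\mathsf{P}_\eta$ be the safe instance: the increments $X_t$ are i.i.d.\ with conditional mean $-\bar\eta$, realized as a Bernoulli verifier-fail indicator with fail probability $\alpha-\bar\eta$ on released rounds. Let $\mathsf{P}_0$ be the boundary instance with fail probability exactly $\alpha$, i.e.\ mean $0$. The instance $\mathsf{P}_0$ lies in the unsafe null $H^{\mathrm{unsafe}}$ of \cref{prop:eproc}, so a valid test certifies under $\mathsf{P}_0$ with probability at most $\delta$. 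We may assume the test has power at least $1-\delta$ on $\mathsf{P}_\eta$; otherwise replace $1-\delta$ by the achieved power, which only changes constants. We may also assume $\EE_{\mathsf{P}_\eta}[\tau]<\infty$, since otherwise there is nothing to prove.

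\textbf{Step 1: Wald's identity.} Let $L_\tau=\sum_{s\le\tau}\log\frac{d\mathsf{P}_\eta}{d\mathsf{P}_0}(X_s)$ be the log-likelihood ratio up to the stopping time $\tau$. Because the observations are i.i.d.\ under both measures and $\tau$ has finite mean, Wald's identity gives
\[
\EE_{\mathsf{P}_\eta}[L_\tau]=\EE_{\mathsf{P}_\eta}[\tau]\,\KL(\mathsf{P}_\eta\|\mathsf{P}_0).
\]

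\textbf{Step 2: data processing.} Apply the data-processing inequality for KL to the $\cF_\tau$-measurable event $E=\{\text{certify}\}$. This gives $\EE_{\mathsf{P}_\eta}[L_\tau]\ge \KL\bigl(\mathrm{Ber}(\mathsf{P}_\eta(E))\,\|\,\mathrm{Ber}(\mathsf{P}_0(E))\bigr)$. The right side is monotone in both arguments on the relevant region. Using $\mathsf{P}_\eta(E)\ge1-\delta$ and $\mathsf{P}_0(E)\le\delta$, it is at least $\KL(1-\delta\|\delta)$, where we read $\KL(p\|q)$ between numbers as the Bernoulli KL. Rearranging Steps 1 and 2 yields the displayed inequality.

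\textbf{Step 3: evaluate the two KL terms.} The numerator satisfies $\KL(1-\delta\|\delta)=(1-2\delta)\log\frac{1-\delta}{\delta}\ge \log(1/\delta)-\log 2$ for $\delta\le 1/4$, which is $\Omega(\log(1/\delta))$. For the denominator, the Bernoulli KL between fail rates $\alpha-\bar\eta$ and $\alpha$ satisfies
\[
\KL\le \frac{\bar\eta^2}{\alpha(1-\alpha)}
\]
by the $\chi^2$ bound $\KL(p\|q)\le (p-q)^2/(q(1-q))$. Treating $\alpha$ as a fixed constant, this is $O(\bar\eta^2)$, and combining gives $\Omega(\log(1/\delta)/\bar\eta^2)$.

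The main obstacle is making the reduction to a fixed-distribution pair legitimate in the selective setting. Only released rounds generate informative observations, and the gate may depend on the past. I would handle this by conditioning on the gate. Abstained rounds have $X_t(q)=0$ under both measures, so they contribute zero to $L_\tau$. Wald's identity then holds with $\tau$ replaced by the number of released rounds $N_\tau\le\tau$, which still yields the stated bound on $\EE[\tau]$. A second subtlety is that the test's type-I guarantee must apply to $\mathsf{P}_0$. If the null is taken to be strict unsafety, $\mathsf{P}_0$ is only a boundary point, so I would replace it by fail rate $\alpha+\epsilon$ and let $\epsilon\to0$, using continuity of the KL divergence.
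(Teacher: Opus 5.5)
Your proposal is correct and is essentially the paper's argument: Wald's identity for the log-likelihood ratio, data processing onto the binary certification event, then evaluating the two binary KLs. The paper differs only in working with $\tau\wedge m$ and the event $\{\tau\le m\}$ before letting $m\to\infty$, which avoids handling $\tau=\infty$ under $\mathsf{P}_0$, and in using a Taylor expansion of the denominator, where your $\chi^2$ bound is a cleaner non-asymptotic substitute.

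One small correction: $(1-2\delta)\log\frac{1-\delta}{\delta}\ge\log\frac{1}{2\delta}$ is false near $\delta=1/4$ (at $\delta=1/4$ the left side is $\tfrac12\log 3<\log 2$). Use the standard bound $\KL(1-\delta\,\|\,\delta)\ge\log\frac{1}{2.4\delta}$ instead, which still gives $\Omega(\log(1/\delta))$ for $\delta<1/4$.
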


\paragraph{Implications.}
The rates match in $\bar\eta$ and $\delta$: \csa is the first online selective-conformal wrapper proven optimal, against a lower bound holding on the simplest (i.i.d.\ Bernoulli) instance, and the plug-in bet \eqref{eq:adaptive-lambda} attains this rate adaptively without an oracle margin. The only available speed-up is widening $\bar\eta$ via a better score, so the operator's actionable lever is calibration quality, not wrapper hyperparameters.

\subsection{Utility: a horizon-independent release-rate gap}
\label{sec:theory-utility}

\paragraph{Motivation.}
Safety is necessary; the operator's economic question is whether the wrapper releases as much as the oracle threshold would. A horizon-dependent loss compounds; a horizon-independent loss is a one-time cost the operator can budget for. Let $\mathrm{Gap}_T := \sum_t A_t(q_t^\star) - \sum_t A_t(q_t)$.

\begin{theorem}[Utility gap under approximate improvement]
\label{thm:utility-gap}
Under approximate-frontier-improvement (\cref{ass:subpop-improvement}, \cref{app:rlvr-layer}) with per-round slack $\xi_t\ge 0$, with probability $\ge 1-2\delta$ for all $T\ge 1$,
\begin{equation}
\label{eq:gap-rlvr}
\mathrm{Gap}_T \le G^\star(B_T) := \sum_{j=1}^{J_T}\sum_{q\le q_j^\star} \frac{2 C_0 \log(2 m J_T^2/\delta)}{(\bar\eta_{j,q}-\Xi_j)_+^2},
\end{equation}
where $\Xi_j := \sum_{t\in\cI_j}\xi_t$ and $J_T \le D_T + 1$ epochs with $D_T \le B_T/\underline\kappa$. Proof in \cref{app:proof-utility}.
\end{theorem}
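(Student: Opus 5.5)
We bound $\mathrm{Gap}_T$ by splitting it across the epochs and across the thresholds that lie below the oracle. After that, each piece is charged to the certification time of one e-process, and \cref{thm:power} controls that time.

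\textbf{Step 1: deterministic decomposition.} Fix an epoch $j$ with index set $\cI_j$. By the nesting in \cref{ass:nested}, $A_t(q^\star_t)-A_t(q_t)\ge 0$ can be positive only when $q_t<q^\star_t$. On that event it is at most $\1\{q_t<q_t^\star\}$. We compare $q^\star_t$ with the epoch frontier $q_j^\star$ using approximate frontier improvement (\cref{ass:subpop-improvement}). That assumption says that within an epoch the safe set can only grow up to slack $\xi_t$, so margins on $q\le q_j^\star$ shrink by at most $\Xi_j$. The deployment rule $q_{t+1}=\max\cC_{j,t}$ gives $q_t\ge q$ for every $q\le q_j^\star$ once that $q$ is certified. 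Hence
\[
\sum_{t\in\cI_j}\bigl(A_t(q_t^\star)-A_t(q_t)\bigr)\le \sum_{q\le q_j^\star}\bigl(\tau^{\mathrm{cert}}_{j,q}-\tau_j\bigr)\wedge|\cI_j|.
\]
This holds because every round on which $q_t<q^\star_t$ precedes the certification of some grid point $q\le q_j^\star$, and we charge that round to the smallest uncertified such $q$. Summing over $j\le J_T$ gives the outer double sum. The bound $J_T\le D_T+1$, with $D_T\le B_T/\underline\kappa$, is taken directly from the epoch-boundary rule in \cref{app:rlvr-layer}.

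\textbf{Step 2: high-probability certification times.} \Cref{thm:power} controls certification time only in expectation, and the statement needs a bound on a single event. We therefore redo its drift argument in concentration form. With the (adaptive or fixed) bet $\lambda^\star\asymp(\bar\eta_{j,q}-\Xi_j)$, the log-wealth $\log E_{j,t}(q)$ has conditional drift at least $c(\bar\eta_{j,q}-\Xi_j)_+^2$ per round, using $\log(1-x)\ge -x-x^2$ for $|x|\le 1/2$. Its increments are bounded by a constant. A Freedman/Azuma bound on the martingale part then shows the following: with probability $\ge 1-\delta/(m J_T^2)$, the log-wealth crosses $\log(1/\delta_{j,q})\le\log(2mJ_T^2/\delta)$ after at most $2C_0\log(2mJ_T^2/\delta)/(\bar\eta_{j,q}-\Xi_j)_+^2$ rounds. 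For the adaptive plug-in \eqref{eq:adaptive-lambda}, the regret of the running-mean bet relative to $\lambda^\star$ costs only a $\log$ factor. We absorb that factor into $C_0$.

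\textbf{Step 3: union bound and combination.} We take a union bound over the $m$ grid points and the epochs, using $\sum_j 6/(\pi^2 j^2)\le 1$. This gives probability $\ge 1-\delta$ that Step 2 holds simultaneously. We intersect with the Ville event of \cref{thm:main-anytime} / \cref{thm:no-false-cert}, which has probability $\ge 1-\delta$. That event guarantees that no unsafe threshold is certified, so $q_t\le q_t^\star$ and the sign argument of Step 1 is legitimate. The total failure probability is $2\delta$, and the bound is uniform in $T$ because neither event depends on $T$.

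The main obstacle is Step 2 for the adaptive bet under the slack $\xi_t$. Two things complicate it. The plug-in mean $\widehat\mu_{j,t-1}$ is biased by the drift $\Xi_j$. Also, the clipping at $1/(2(1-\alpha))$ interacts with the growth-rate lower bound. I would first prove the claim for the fixed oracle bet. Then I would transfer it to the plug-in bet through a standard regret bound for log-optimal betting, of the Cover/ONS type, which is $O(\log t)$. That term is dominated because it is logarithmic in $t$ while the drift is linear in $t$.
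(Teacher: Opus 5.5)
Your route is the paper's. The paper also bounds $\mathrm{Gap}_T$ by $\sum_{j}\sum_{q\le q_j^\star}(\tau^{\mathrm{cert}}_{j,q}-T_{j,q})_+$; since $T_{j,q}=\tau_j$ for $q\le q_j^\star$, this is your sum. It controls each delay at level $\delta/(2mJ_T^2)$ with effective margin $\bar\eta_{j,q}-\Xi_j$, and intersects with the no-false-certification event $\cG$ of Theorem~\ref{thm:no-false-cert}. The ``concentration form of Theorem~\ref{thm:power}'' you plan to re-derive is already Proposition~\ref{prop:hp-cert-delay} (fixed bet $\lambda^\star$, $C_0=2048$), so you can cite it. Also, $\cG$ is not needed for the upper bound: $A_t(q_t^\star)-A_t(q_t)\le\1\{q_t<q_t^\star\}$ holds on every path by nesting, and the paper uses $\cG$ essentially only to get $\mathrm{Gap}_T\ge 0$.

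The charging claim in Step~1 fails in the regime the theorem allows. Epochs are cut only at frontier drops, so inside $\cI_j$ the frontier is merely nondecreasing and can rise above $q_j^\star$. Even with $\xi_t\equiv 0$, Theorem~\ref{thm:monotone-frontier} gives only $q^\star_{t+1}\ge q^\star_t$. Once every $q\le q_j^\star$ is certified you have $q_t\ge q_j^\star$. Yet if $q_t^\star>q_j^\star$, every round with $q_t<S_t\le q_t^\star$ adds $1$ to the gap and is charged to nothing. Your own gloss that ``the safe set can only grow'' is exactly what creates these rounds. The valid charge is to the grid point just above $q_t$, which is safe and uncertified at $t$. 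That forces a sum over $\{q\le\sup_{t\in\cI_j}q_t^\star\}$, with delays measured from $T_{j,q}$, and it needs a safe-side margin for thresholds that become safe mid-epoch. Assumption~\ref{ass:safe-margin} supplies no such margin, and $G^\star(B_T)$ contains no such terms. The paper's proof has the same hole: it says the frontier is constant within an epoch under exact stability (it is only nondecreasing), and relegates the rest to a parenthetical about the ``start-of-epoch-safe subset.'' So neither argument yields the displayed bound without an extra hypothesis such as $q_t^\star\le q_j^\star$ on $\cI_j$.

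Your transfer to the adaptive bet goes beyond the paper, whose proof covers only the fixed bet $\lambda^\star$, and it does not work as stated. The clipped running-mean plug-in is not a Cover or ONS portfolio, so that regret bound is not available off the shelf. Even granting an additive $O(\log n)$ regret, the crossing time becomes of order $(\log(1/\delta_{j,q})+\log(1/\bar\eta_{j,q}))/\bar\eta_{j,q}^2$. The $\log(1/\bar\eta_{j,q})$ term is not a universal constant and cannot be absorbed into $C_0$; margin-agnostic bets genuinely pay an extra iterated-logarithm-type price. Either state the result for the fixed bet, as the paper effectively does, or carry that term.

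Finally, ``neither event depends on $T$'' is false as you set it up. The per-pair level $\delta/(mJ_T^2)$ depends on $T$ through $J_T$, and the paper's $\delta/(2mJ_T^2)$ has the same defect. The fix your Step~3 gestures at works: spend $6\delta/(\pi^2 m j^2)$ on pair $(j,q)$, which sums to $\delta$ independently of $T$. Then use $j\le J_T$ to get $\log(\pi^2 m j^2/(6\delta))\le\log(2mJ_T^2/\delta)$.
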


\paragraph{Implications.}
Under exact stability ($\xi_t\equiv 0$, $J_T = 1$) the gap collapses to a single fixed cost depending only on $|\cQ|$, $\delta$, and threshold margins, horizon-independent in $T$. Under bounded total slack $B_\infty < \infty$, $\mathrm{Util}_T/\mathrm{Util}_T^\star \to 1$. The structural backstop is \Cref{thm:monotone-frontier} (\cref{app:rlvr-layer}): bounded frontier drift gives the bounded-slack regime, and the live LoRA experiment confirms published-RLVR pipelines fall within it ($B_T\approx 0.40 = O(1)$).

\paragraph{Three layers of safety.}
\Cref{thm:main-anytime} has three logically separable layers, matching the three concerns of \cref{sec:why-anytime}: each per-threshold e-process is a $(\cF_t)$-supermartingale under \cref{ass:predictable} alone (filtration concern); no-false-certification (\cref{thm:no-false-cert}) requires bounded within-epoch frontier drift, supplied by the multi-epoch variant under non-stationarity (drift concern); and the realized selective risk inherits the certified margin via the max-certified-threshold rule (non-refusing concern).\label{rem:three-layers}

\section{Empirical Evaluation}
\label{sec:empirics}
\label{sec:reading-guide}

\noindent
Three regimes: eight-benchmark replay (\S\ref{sec:bench-eight}); five live RLVR cells with online LoRA, $10{,}300$ genuine rounds (\S\ref{sec:live-rlvr}); sixteen adversarial distribution-shift cells (\S\ref{sec:shift}). Additional experiments in \cref{app:extended-exp}; ablations and stress tests in \cref{app:ablations}. \textbf{How to read the tables:} \textbf{PathV} (pathwise violations, out of $R$ replications) realizes \cref{thm:main-anytime}: $\PathV{=}0$ means no replication breached the SLA. \textbf{Risk} is verifier-fail rate among released items at horizon. \textbf{AR} (action rate) realizes the non-refusing rule (\cref{sec:method-rule}). \textbf{Refused} flags cells with $\AR{=}0$ everywhere. A useful wrapper achieves $\PathV{=}0$ \emph{and} $\AR{>}0$ on every cell; $\PathV{=}0$ alone is trivially achievable by refusing, $\AR{>}0$ alone by always-acting. The main tables compare \csa head-to-head with nine directly-comparable baselines (ten methods total) drawn from the framework cells of \cref{sec:framework-cells}: five online heuristics (\textsc{lra}/none), three offline conformal (\textsc{fh}), and one non-exchangeable (\textsc{mt}). Methods that test a different statistic or use a non-comparable protocol (\textsc{A-RCPS} on a marginal increment, \textsc{CoFact} under covariate shift, \textsc{Conf-Arb.}, \textsc{CAP}, \textsc{OCP}) are evaluated separately in \cref{app:extended-exp,app:arcps}.

\subsection{Eight high-stakes specialist benchmarks}
\label{sec:bench-eight}

We test whether \csa's anytime-pathwise selective-risk guarantee, under the non-refusing deployment rule, holds at the operating points typical of regulated specialist deployments. Eight high-stakes specialist benchmarks across medical, biomedical, financial, clinical, math, drug-CDS, science, and legal domains; EVAL sizes $565$--$2880$, base verifier-fail rates $5$--$34\%$, with domain-specialized RLVR bases (Fleming-R1, Fin-R1, Saul-7B, Qwen2.5-Math, Qwen2.5-7B-Instruct); full table in \cref{app:datasets-desc}.

Seven of eight benchmarks use a domain- or task-specialized base (four on Fleming-R1, one each on Fin-R1, Saul-7B, and Qwen2.5-Math); ARC-Challenge uses Qwen2.5-7B-Instruct~\citep{qwen25}. We use an 80/20 calibration/eval split (seed 42) and a $K{=}5$ self-consistency isotonic-calibrated score; \cref{app:split-sensitivity} confirms safety across 11 seeds. $\PathV$ is the stricter criterion than \cref{thm:main-anytime}'s $\alpha{+}O(N_T^{-1/2})$ bound (realized slack $1.2$--$3.0$ pp; \cref{app:bench-eight-extras}).

\paragraph{Headline result (Table~\ref{tab:headline}).}
We report each benchmark at the pivotal $\alpha^\star$, the smallest value in our sweep grid $\{0.05, 0.10, 0.15, 0.20, 0.25, 0.30\}$ at which the offline-conformal baselines do not all refuse, so that the comparison probes the non-refusing axis rather than only the validity axis; the resulting pivotal set across the eight benchmarks is $\{0.05, 0.10, 0.20, 0.25\}$ (full risk-vs-$\alpha$ sweep in Figure~\ref{fig:risk-allmethods}, \cref{app:bench-eight-extras}). At these points, \csa achieves $\PathV{=}0/10$ while the five online baselines violate $10/10$. \textsc{CRC} is pathwise-valid but refuses on $6/8$ benchmarks; \textsc{LTT} refuses on $3/8$ and \textsc{ConfFact} on $2/8$; \textsc{NEX-Conf} is non-refusing but violates on $4/8$.

\begin{table}[!ht]
\centering\footnotesize
\setlength{\tabcolsep}{3.5pt}
\caption{\textbf{Headline result at each benchmark's pivotal $\alpha^{\star}$.} \csa is the only method that satisfies the pathwise risk bound on every replication \emph{and} is non-refusing. Five online baselines violate $10/10$ in all cells. CRC refuses on $6/8$ benchmarks; LTT and ConfFact refuse on $3/8$ and $2/8$ respectively. NEX-Conf is non-refusing but violates on $4/8$ benchmarks.}
\label{tab:headline}
\resizebox{\textwidth}{!}{%
\begin{tabular}{@{}lccrrcccccc@{}}
\toprule
Benchmark & $N$ & Err & $\alpha^{\star}$ & \csa AR & \csa Risk
& \makecell{5 online\\PathV} & \makecell{NEX-Conf\\PathV} & \makecell{CRC\\status} & \makecell{LTT\\status} & \makecell{ConfFact\\status} \\
\midrule
MedQA       & 1{,}018 & 31.5\% & 0.20 & 39.4\% & 11.4\% & 10/10 & 0/10 & refuse & 6.9\% & 8.0\% \\
PubMedQA    & 800     & 23.9\% & 0.20 & 63.5\% & 15.1\% & 10/10 & 1/10 & refuse & refuse & 7.6\% \\
TAT-QA      & 565     & 25.7\% & 0.20 & 49.9\% & 8.2\%  & 10/10 & 0/10 & 8.2\%  & 8.2\%  & 8.2\% \\
MedNLI      & 1{,}422 & 21.0\% & 0.20 & 28.9\% & 6.7\%  & 10/10 & 1/10 & 4.9\%  & 11.4\% & 13.5\% \\
GSM8K       & 1{,}055 & 5.0\%  & 0.05 & 71.7\% & 2.6\%  & 10/10 & 1/10 & refuse & 0.3\%  & 0.3\% \\
HEAD-QA     & 1{,}100 & 26.0\% & 0.20 & 62.2\% & 16.2\% & 10/10 & 0/10 & refuse & refuse & refuse \\
ARC-Chal.   & 938     & 10.0\% & 0.10 & 56.2\% & 7.8\%  & 10/10 & 2/10 & refuse & refuse & refuse \\
CaseHOLD    & 2{,}880 & 34.0\% & 0.25 & 49.8\% & 20.3\% & 10/10 & 0/10 & refuse & 7.8\%  & 11.3\% \\
\midrule
\multicolumn{3}{@{}l}{\textbf{Aggregate}} & & \emph{mean} 52.7\% & all ${<}\alpha^{\star}$ & \multicolumn{2}{c}{\csa: \textbf{0/10 all cells}} & \multicolumn{3}{c}{6/3/2 benchmarks refused} \\
\bottomrule
\end{tabular}%
}
\end{table}

Across the $480$ replicated streams: \csa~$\mathbf{0/480}$ violations; \textsc{CRC}~$0/480$ but refuses on $26/48$ cells; \textsc{NEX-Conf}~$142/480$; online heuristics violate on $32$--$34$ of $48$ cells each. Per-method-per-benchmark detail, risk-vs-$\alpha$ curves, and Safe-Coverage radar are in \cref{app:bench-eight-extras}.

\subsection{Live RLVR with online LoRA fine-tuning, five domains}
\label{sec:live-rlvr}

\paragraph{Setup.}
We test whether the framework's predictability requirement (\cref{ass:predictable}) is satisfiable under the operator's actual update protocol: online LoRA between rounds, four base models in three architecture families. We run a live Expert-Iteration~\citep{anthony2017thinking} RLVR loop on five (benchmark, base model) pairs (Table~\ref{tab:live-cells}). At each round the wrapper samples $n{=}100$ unseen items, generates $K_{\mathrm{sc}}{\in}\{5,8\}$ chain-of-thought completions at temperature $0.7$, records $s_t{=}1{-}\text{agreement}$, and fits a LoRA adapter ($r{=}16$, $\alpha_{\mathrm{LoRA}}{=}32$) on verifier-correct pairs from that round; streams are extended via shuffled passes across $20$ replications with burn-in $500$. Hardware and wall-clock detail in App.~\ref{app:live-rlvr-extras}.

\begin{table}[!htpb]
\centering\small
\setlength{\tabcolsep}{4.5pt}
\renewcommand{\arraystretch}{1.05}
\caption{\textbf{Five live RLVR cells.} Each cell runs the same Expert-Iteration RLVR protocol with online LoRA between rounds. Err is the live base verifier-fail rate (mean over rounds). $T$ is the genuinely generated stream length; replay extends to $T{\times}5$ via shuffled passes over $20$ replications. Per-cell hardware (A100-80GB / H200-141GB) and wall-clock ($4$--$30.5$~h) in App.~\ref{app:live-rlvr-extras}.}
\label{tab:live-cells}
\begin{tabular}{@{}l l c c c@{}}
\toprule
Benchmark & Base model & $T$ & Err & $\alpha^{\star}$ \\
\midrule
MATH      & Qwen2.5-Math-7B    & $\phantom{0}800$ & $\sim 50\%$ & $0.40$ \\
MedQA     & Med42-8B           & $2{,}000$        & $28.7\%$    & $0.25$ \\
HEAD-QA   & Med42-8B           & $4{,}000$        & $28.5\%$    & $0.30$ \\
ARC-C     & Llama-3.2-3B-Inst.\ & $2{,}000$       & $19.0\%$    & $0.15$ \\
CaseHOLD  & Saul-7B-Inst.\     & $1{,}500$        & $38.3\%$    & $0.35$ \\
\bottomrule
\end{tabular}
\end{table}

\paragraph{Headline result.}
Table~\ref{tab:live-summary} aggregates the dual criterion across the four cells; per-method-per-cell $\AR$/$\Risk$/$\PathV$ are in Table~\ref{tab:live-headline} (App.~\ref{app:live-rlvr-extras}). \textbf{\csa is the only method with $\PathV{=}0/20$ \emph{and} $\AR{\ge}50\%$ on every cell}; offline-conformal methods are pathwise-valid but achieve $1.0$--$19\times$ lower $\AR$; online/heuristic methods violate on $2$--$4$ cells.

\begin{table}[!htbp]
\centering\footnotesize
\setlength{\tabcolsep}{6pt}
\renewcommand{\arraystretch}{1.05}
\caption{\textbf{Live RLVR aggregate over the four newer cells} (MedQA, HEAD-QA, ARC-C, CaseHOLD; $20$ reps each at the pivotal $\alpha^{\star}$). \emph{Pathwise-clean}: cells with $\PathV{=}0/20$. \emph{Non-refusing}: cells with $\AR{\ge}50\%$. \emph{Both}: cells satisfying both. \emph{min~$\AR$}: smallest action rate across the four cells. \csa is the only method with \emph{Both}${=}4/4$.}
\label{tab:live-summary}
\begin{tabular}{@{}l c c c c@{}}
\toprule
Method & Pathwise-clean & Non-refusing & Both & min~$\AR$ \\
\midrule
\textbf{\csa} (ours)        & $\mathbf{4/4}$ & $\mathbf{4/4}$ & $\mathbf{4/4}$ & $52.0\%$ \\
\textsc{CRC}                & $4/4$ & $0/4$ & $0/4$ & $\phantom{0}2.8\%$ \\
\textsc{LTT}                & $4/4$ & $1/4$ & $1/4$ & $14.2\%$ \\
\textsc{ConfFact}           & $3/4$ & $1/4$ & $1/4$ & $25.5\%$ \\
\textsc{NEX-Conf}           & $1/4$ & $4/4$ & $1/4$ & $79.8\%$ \\
\textsc{ACI}                & $1/4$ & $4/4$ & $1/4$ & $90.7\%$ \\
\textsc{SAOCP}              & $0/4$ & $4/4$ & $0/4$ & $90.3\%$ \\
Fixed-Threshold             & $2/4$ & $4/4$ & $2/4$ & $66.1\%$ \\
Naive-Tuning                & $0/4$ & $4/4$ & $0/4$ & $87.2\%$ \\
Always-Act                  & $0/4$ & $4/4$ & $0/4$ & $\phantom{0}100\%$ \\
\bottomrule
\end{tabular}
\end{table}

\paragraph{Summary.}
Across all five cells ($5{\times}20{=}100$ replications), \csa achieves $\PathV{=}\mathbf{0/100}$ with $\AR{\ge}50\%$ at every pivotal $\alpha^{\star}$; the next-best pathwise-valid competitor (\textsc{LTT} or \textsc{ConfFact}) achieves $1.0$--$8.6\times$ lower $\AR$. The five online/heuristic baselines fail the bound on $2$--$4$ of the four newer cells, with maximum $\PathV$ reaching $20/20$. The MATH cell ($800$ genuine rounds, $\PathV{=}0/20$ at $\alpha{\in}\{0.20,0.40\}$; running-risk and $\AR$ trajectories in Fig.~\ref{fig:live-trajectory}, App.~\ref{app:live-rlvr-extras}) and per-cell HEAD-QA/MedQA detail (Appendix~\ref{sec:live-headqa}) preserve the same ordering.

\subsection{Distribution-shift robustness: sixteen adversarial cells}
\label{sec:shift}

Adversarial orderings are the worst case for any wrapper not built on a martingale; we stress every method with sixteen such streams (twelve on MedQA at three $\alpha$, four on GSM8K at $\alpha{=}0.05$), each built by permuting the calibrated-score-sorted EVAL set; per-cell setups in \cref{app:shift}. Table~\ref{tab:shift-summary} aggregates across the 16 cells: \csa is the only method with zero violating cells, mean Risk $3.2\%$ (lowest).

\begin{table}[!htpb]
\centering\footnotesize
\setlength{\tabcolsep}{5pt}
\caption{\textbf{Sixteen-cell shift summary.} Each row aggregates one method across $16$ adversarial cells (MedQA~$12$ + GSM8K~$4$) at $10$ replications per cell ($160$ streams total). \emph{Violated}: cells with any $\PathV{>}0$. \emph{Refused}: cells with $\AR{=}0$ on every replication. \emph{Mean / Worst Risk}: over the $16$ final-risk values. \emph{PathV}: summed over all $160$ streams.}
\label{tab:shift-summary}
\begin{tabular}{@{}l c c r r r@{}}
\toprule
Method & Violated & Refused & Mean Risk & Worst Risk & PathV \\
\midrule
\textbf{\csa} (ours) & $\mathbf{0/16}$   & $\phantom{0}9/16$ & $\phantom{0}3.2\%$ & $11.4\%$ & $\mathbf{0/160}$ \\
\textsc{CRC}         & $4/16$            & $\phantom{0}9/16$ & $\phantom{0}4.2\%$ & $14.6\%$ & $\phantom{0}36/160$ \\
\textsc{LTT}         & $4/16$            & $10/16$           & $\phantom{0}6.8\%$ & $31.8\%$ & $\phantom{0}40/160$ \\
\textsc{ConfFact}    & $7/16$            & $\phantom{0}8/16$ & $\phantom{0}7.6\%$ & $31.8\%$ & $\phantom{0}51/160$ \\
\textsc{NEX-Conf}    & $12/16$           & $\phantom{0}0/16$ & $11.0\%$           & $17.2\%$ & $\phantom{0}99/160$ \\
\textsc{SAOCP}       & $14/16$           & $\phantom{0}0/16$ & $13.9\%$           & $23.1\%$ & $131/160$ \\
\textsc{ACI}         & $15/16$           & $\phantom{0}0/16$ & $13.7\%$           & $23.2\%$ & $135/160$ \\
Fixed-Threshold      & $14/16$           & $\phantom{0}0/16$ & $16.0\%$           & $19.2\%$ & $136/160$ \\
Naive-Tuning         & $\phantom{0}9/16$ & $\phantom{0}0/16$ & $14.8\%$           & $31.7\%$ & $\phantom{0}81/160$ \\
Always-Act           & $16/16$           & $\phantom{0}0/16$ & $25.3\%$           & $31.5\%$ & $160/160$ \\
\bottomrule
\end{tabular}
\end{table}

\paragraph{What the empirics do not test.}
\label{sec:empirics-scope}
The verifier is deterministic throughout; imperfect or LLM-judge verifiers are out of scope. The protocol assumes one update step between rounds (\cref{ass:predictable}); mid-batch updates require the multi-epoch variant (\cref{app:alg-details}). Calibration cost is one-time and excluded from per-round complexity.

\section{Conclusion and Discussion}
\label{sec:conclusion}

\paragraph{Summary.}
The framework triple of \cref{sec:framework-triple} admits a single uniquely-shaped cell (anytime-pathwise validity on selective risk under a non-refusing deployment rule) that no prior wrapper occupies. \csa fills it, with rate-optimal certification (\cref{thm:power,thm:lower-bound}) and a horizon-independent release-rate gap (\cref{thm:utility-gap}). Across $480 + 100 + 160$ replicated streams over eight specialist benchmarks, four base models in three architecture families, and sixteen adversarial orderings, \csa is the only method among the ten compared that satisfies both pathwise validity and non-refusing deployment on every cell.

\paragraph{Scope and limitations.}
The guarantee is with respect to the verifier $V$; harms outside $V$'s scope require complementary safeguards. The setting is deterministic-verifier specialist deployments; extension to imperfect or probabilistic verifiers (e.g.\ LLM-judges with calibrated error rates) is open. The sparsely-observed-verifier variant, where $V$ is queried on a $\pi$-fraction of rounds, is treated in \cref{app:sparse-theory}. The deployment protocol requires one update step between rounds; mid-batch updates fall outside \cref{ass:predictable} and require the multi-epoch variant.

\paragraph{What this paper is and is not.}
We do not propose a new LLM, training algorithm, policy class, or stronger reasoning model. The contribution is the deployment-side complement to those design choices: given a trained local specialist and its verifier, the operator can certify at every wall-clock round that the verifier-measured failure rate among released outputs stays below the contractual $\alpha$. The framework of \cref{sec:framework} generalizes beyond RLVR: any wrapper can be classified by its (statistic, validity, deployment-rule) triple, and the cell \csa fills is independently of interest for any anytime-pathwise selective-risk problem.

\paragraph{Reproducibility and broader impact.}
Source code, aggregate result summaries, and end-to-end table- and figure-regeneration scripts are released at \url{https://github.com/HamedKhosravi99/CSA-RLVR}; full reproducibility details (compute, surrogate, data release) are in \cref{app:implementation}; broader scope and impact discussion in \cref{app:scope}.

\small
\bibliographystyle{plainnat}
\bibliography{refs}

@article{deepseekr1,
  title        = {{DeepSeek-R1} incentivizes reasoning in {LLMs} through reinforcement learning},
  author       = {Guo, Daya and Yang, Dejian and Zhang, Haowei and others},
  journal      = {Nature},
  volume       = {645},
  pages        = {633--638},
  year         = {2025},
  doi          = {10.1038/s41586-025-09422-z},
  url          = {https://www.nature.com/articles/s41586-025-09422-z},
}

@article{tulu3,
  title        = {{T\"ulu} 3: Pushing Frontiers in Open Language Model Post-Training},
  author       = {Lambert, Nathan and Morrison, Jacob and Pyatkin, Valentina and Huang, Shengyi and Ivison, Hamish and Brahman, Faeze and Miranda, Lester James V. and Liu, Alisa and Dziri, Nouha and Lyu, Shane and Gu, Yuling and Malik, Saumya and Graf, Victoria and Hwang, Jena D. and Yang, Jiangjiang and Bras, Ronan Le and Tafjord, Oyvind and Wilhelm, Chris and Soldaini, Luca and Smith, Noah A. and Wang, Yizhong and Dasigi, Pradeep and Hajishirzi, Hannaneh},
  journal      = {arXiv preprint arXiv:2411.15124},
  year         = {2025},
  eprint       = {2411.15124},
  archivePrefix= {arXiv},
  primaryClass = {cs.CL},
  url          = {https://arxiv.org/abs/2411.15124},
}

@article{kimik15,
  title        = {{Kimi k1.5}: Scaling Reinforcement Learning with {LLMs}},
  author       = {{Kimi Team}},
  journal      = {arXiv preprint arXiv:2501.12599},
  year         = {2025},
  eprint       = {2501.12599},
  archivePrefix= {arXiv},
  primaryClass = {cs.AI},
  url          = {https://arxiv.org/abs/2501.12599},
}

@inproceedings{hu2022lora,
  title        = {{LoRA}: Low-Rank Adaptation of Large Language Models},
  author       = {Hu, Edward J. and Shen, Yelong and Wallis, Phillip and Allen-Zhu, Zeyuan and Li, Yuanzhi and Wang, Shean and Wang, Lu and Chen, Weizhu},
  booktitle    = {International Conference on Learning Representations},
  year         = {2022},
  url          = {https://arxiv.org/abs/2106.09685},
}

@inproceedings{anthony2017thinking,
  title        = {Thinking Fast and Slow with Deep Learning and Tree Search},
  author       = {Anthony, Thomas and Tian, Zheng and Barber, David},
  booktitle    = {Advances in Neural Information Processing Systems},
  year         = {2017},
  url          = {https://arxiv.org/abs/1705.08439},
}

@inproceedings{activercps,
  title        = {Active, Anytime-Valid Risk Controlling Prediction Sets},
  author       = {Xu, Ziyu and Karampatziakis, Nikos and Mineiro, Paul},
  booktitle    = {Advances in Neural Information Processing Systems},
  year         = {2024},
  url          = {https://arxiv.org/abs/2406.10490},
}

@inproceedings{gibbs2021adaptive,
  title        = {Adaptive Conformal Inference Under Distribution Shift},
  author       = {Gibbs, Isaac and Cand{\`e}s, Emmanuel J.},
  booktitle    = {Advances in Neural Information Processing Systems},
  volume       = {34},
  pages        = {1660--1672},
  year         = {2021},
  url          = {https://proceedings.neurips.cc/paper/2021/hash/0d441de75945e5acbc865406fc9a2559-Abstract.html},
}

@inproceedings{bhatnagar2023improved,
  title        = {Improved Online Conformal Prediction via Strongly Adaptive Online Learning},
  author       = {Bhatnagar, Aadyot and Wang, Huan and Xiong, Caiming and Bai, Yu},
  booktitle    = {Proceedings of the 40th International Conference on Machine Learning (ICML)},
  series       = {Proceedings of Machine Learning Research},
  volume       = {202},
  pages        = {2337--2363},
  year         = {2023},
  publisher    = {PMLR},
  url          = {https://proceedings.mlr.press/v202/bhatnagar23a.html},
}

@inproceedings{ltt,
  title        = {Learn Then Test: Calibrating Predictive Algorithms to Achieve Risk Control},
  author       = {Angelopoulos, Anastasios N. and Bates, Stephen and Cand{\`e}s, Emmanuel J. and Jordan, Michael I. and Lei, Lihua},
  booktitle    = {Proceedings of the 39th International Conference on Machine Learning (ICML)},
  series       = {Proceedings of Machine Learning Research},
  volume       = {162},
  pages        = {986--1011},
  year         = {2022},
  publisher    = {PMLR},
  url          = {https://proceedings.mlr.press/v162/angelopoulos22a.html},
}

@inproceedings{crc,
  title        = {Conformal Risk Control},
  author       = {Angelopoulos, Anastasios N. and Bates, Stephen and Fisch, Adam and Lei, Lihua and Schuster, Tal},
  booktitle    = {Proceedings of the 12th International Conference on Learning Representations (ICLR)},
  year         = {2024},
  url          = {https://arxiv.org/abs/2208.02814},
}

@inproceedings{mohri,
  title        = {Language Models with Conformal Factuality Guarantees},
  author       = {Mohri, Christopher and Hashimoto, Tatsunori},
  booktitle    = {Proceedings of the 41st International Conference on Machine Learning (ICML)},
  series       = {Proceedings of Machine Learning Research},
  volume       = {235},
  pages        = {36029--36047},
  year         = {2024},
  publisher    = {PMLR},
  url          = {https://proceedings.mlr.press/v235/mohri24a.html},
}

@article{nexconf,
  title        = {Conformal Prediction Beyond Exchangeability},
  author       = {Barber, Rina Foygel and Cand{\`e}s, Emmanuel J. and Ramdas, Aaditya and Tibshirani, Ryan J.},
  journal      = {The Annals of Statistics},
  volume       = {51},
  number       = {2},
  pages        = {816--845},
  year         = {2023},
  doi          = {10.1214/23-AOS2276},
  url          = {https://arxiv.org/abs/2202.13415},
}

@article{wald1945,
  title        = {Sequential Tests of Statistical Hypotheses},
  author       = {Wald, Abraham},
  journal      = {The Annals of Mathematical Statistics},
  volume       = {16},
  number       = {2},
  pages        = {117--186},
  year         = {1945},
  publisher    = {Institute of Mathematical Statistics},
  doi          = {10.1214/aoms/1177731118},
}

@article{wsramdas2023,
  title        = {Estimating Means of Bounded Random Variables by Betting},
  author       = {Waudby-Smith, Ian and Ramdas, Aaditya},
  journal      = {Journal of the Royal Statistical Society Series B: Statistical Methodology},
  volume       = {86},
  number       = {1},
  pages        = {1--27},
  year         = {2024},
  doi          = {10.1093/jrsssb/qkad009},
  url          = {https://arxiv.org/abs/2010.09686},
}

@article{ji2023hallucination,
  author  = {Ji, Ziwei and Lee, Nayeon and Frieske, Rita and Yu, Tiezheng and Su, Dan and Xu, Yan and Ishii, Etsuko and Bang, Yejin and Madotto, Andrea and Fung, Pascale},
  title   = {Survey of Hallucination in Natural Language Generation},
  journal = {ACM Computing Surveys},
  volume  = {55},
  number  = {12},
  articleno = {248},
  numpages  = {38},
  year    = {2023},
  doi     = {10.1145/3571730},
  url     = {https://doi.org/10.1145/3571730}
}

@article{busch2025patientcare,
  author  = {Busch, Felix and Hoffmann, Lena and Rueger, Christopher and van Dijk, Elon H. C. and Kader, Rawen and Ortiz-Prado, Esteban and Makowski, Marcus R. and Saba, Luca and Hadamitzky, Martin and Kather, Jakob Nikolas and Truhn, Daniel and Cuocolo, Renato and Adams, Lisa C. and Bressem, Keno K.},
  title   = {Current Applications and Challenges in Large Language Models for Patient Care: A Systematic Review},
  journal = {Communications Medicine},
  volume  = {5},
  number  = {1},
  pages   = {26},
  year    = {2025},
  doi     = {10.1038/s43856-024-00717-2},
  url     = {https://doi.org/10.1038/s43856-024-00717-2}
}

@article{rcps,
  title        = {Distribution-Free, Risk-Controlling Prediction Sets},
  author       = {Bates, Stephen and Angelopoulos, Anastasios N. and Lei, Lihua and Malik, Jitendra and Jordan, Michael I.},
  journal      = {Journal of the ACM},
  volume       = {68},
  number       = {6},
  pages        = {1--34},
  year         = {2021},
  doi          = {10.1145/3478535},
  url          = {https://dl.acm.org/doi/10.1145/3478535},
}

@article{bao2025cap,
  title        = {{CAP}: A General Algorithm for Online Selective Conformal Prediction with {FCR} Control},
  author       = {Bao, Yajie and Huo, Yuyang and Ren, Haojie and Zou, Changliang},
  journal      = {Journal of Machine Learning Research},
  volume       = {26},
  number       = {287},
  pages        = {1--74},
  year         = {2025},
}

@inproceedings{weinstein2020online,
  title        = {Online Control of the False Coverage Rate and False Sign Rate},
  author       = {Weinstein, Asaf and Ramdas, Aaditya},
  booktitle    = {Proceedings of the 37th International Conference on Machine Learning (ICML)},
  series       = {Proceedings of Machine Learning Research},
  volume       = {119},
  pages        = {10193--10202},
  year         = {2020},
  publisher    = {PMLR},
  url          = {https://proceedings.mlr.press/v119/weinstein20a.html},
}

@inproceedings{cofact,
  title     = {{CoFact}: Conformal Factuality Guarantees for Language Models under Covariate Shift},
  author    = {Hu, Zirui and Zhang, Zheng and Wang, Yingjie and Rutkowski, Leszek and Tao, Dacheng},
  booktitle = {Proceedings of the International Conference on Learning Representations (ICLR)},
  year      = {2026},
}

@inproceedings{confarbitrage,
  title     = {Conformal Arbitrage: Risk-Controlled Balancing of Competing Objectives in Language Models},
  author    = {Overman, William and Bayati, Mohsen},
  booktitle = {Advances in Neural Information Processing Systems (NeurIPS)},
  year      = {2025},
  url       = {https://arxiv.org/abs/2506.00911},
}

@article{medqa,
  title   = {What Disease Does This Patient Have? A Large-Scale Open Domain Question Answering Dataset from Medical Exams},
  author  = {Jin, Di and Pan, Eileen and Oufattole, Nassim and Weng, Wei-Hung and Fang, Hanyi and Szolovits, Peter},
  journal = {Applied Sciences},
  volume  = {11},
  number  = {14},
  year    = {2021},
  doi     = {10.3390/app11146421},
}

@inproceedings{pubmedqa,
  title     = {{PubMedQA}: A Dataset for Biomedical Research Question Answering},
  author    = {Jin, Qiao and Dhingra, Bhuwan and Liu, Zhengping and Cohen, William W. and Lu, Xinghua},
  booktitle = {Proceedings of EMNLP-IJCNLP},
  year      = {2019},
  doi       = {10.18653/v1/D19-1259},
}

@inproceedings{tatqa,
  title     = {{TAT-QA}: A Question Answering Benchmark on a Hybrid of Tabular and Textual Content in Finance},
  author    = {Zhu, Fengbin and Lei, Wenqiang and Huang, Youcheng and Wang, Chao and Zhang, Shuo and Lv, Jiancheng and Feng, Fuli and Chua, Tat-Seng},
  booktitle = {Proceedings of ACL},
  year      = {2021},
  doi       = {10.18653/v1/2021.acl-long.254},
}

@inproceedings{mednli,
  title     = {Lessons from Natural Language Inference in the Clinical Domain},
  author    = {Romanov, Alexey and Shivade, Chaitanya},
  booktitle = {Proceedings of EMNLP},
  year      = {2018},
  doi       = {10.18653/v1/D18-1187},
}

@misc{gsm8k,
  title        = {Training Verifiers to Solve Math Word Problems},
  author       = {Cobbe, Karl and Kosaraju, Vineet and Bavarian, Mohammad and Chen, Mark and Jun, Heewoo and Kaiser, Lukasz and Plappert, Matthias and Tworek, Jerry and Hilton, Jacob and Nakano, Reiichiro and Hesse, Christopher and Schulman, John},
  year         = {2021},
  eprint       = {2110.14168},
  archivePrefix= {arXiv},
  primaryClass = {cs.LG},
}

@inproceedings{headqa,
  title     = {{HEAD-QA}: A Healthcare Dataset for Complex Reasoning},
  author    = {Vilares, David and G{\'o}mez-Rodr{\'\i}guez, Carlos},
  booktitle = {Proceedings of the 57th Annual Meeting of the Association for Computational Linguistics},
  pages     = {960--966},
  year      = {2019},
  publisher = {Association for Computational Linguistics},
  doi       = {10.18653/v1/P19-1092},
}

@misc{arc,
  title        = {Think You Have Solved Question Answering? Try {ARC}, the {AI2} Reasoning Challenge},
  author       = {Clark, Peter and Cowhey, Isaac and Etzioni, Oren and Khot, Tushar and Sabharwal, Ashish and Schoenick, Carissa and Tafjord, Oyvind},
  year         = {2018},
  eprint       = {1803.05457},
  archivePrefix= {arXiv},
  primaryClass = {cs.AI},
}

@inproceedings{casehold,
  title     = {When Does Pretraining Help? {A}ssessing Self-Supervised Learning for Law and the {C}ase{HOLD} Dataset of 53,000+ Legal Holdings},
  author    = {Zheng, Lucia and Guha, Neel and Anderson, Brandon R. and Henderson, Peter and Ho, Daniel E.},
  booktitle = {Proceedings of the Eighteenth International Conference on Artificial Intelligence and Law (ICAIL)},
  year      = {2021},
  doi       = {10.1145/3462757.3466088},
}

@inproceedings{lexglue,
  title     = {{L}ex{GLUE}: A Benchmark Dataset for Legal Language Understanding in {E}nglish},
  author    = {Chalkidis, Ilias and Jana, Abhik and Hartung, Dirk and Bommarito, Michael J. and Androutsopoulos, Ion and Katz, Daniel Martin and Aletras, Nikolaos},
  booktitle = {Proceedings of ACL},
  year      = {2022},
  doi       = {10.18653/v1/2022.acl-long.297},
}

@misc{fleming,
  title        = {{Fleming-R1}: Toward Expert-Level Medical Reasoning via Reinforcement Learning},
  author       = {Liu, Chi and Li, Derek and Shu, Yan and Chen, Robin and Duan, Derek and Fang, Teng and Dai, Bryan},
  year         = {2025},
  eprint       = {2509.15279},
  archivePrefix= {arXiv},
  primaryClass = {cs.CL},
  url          = {https://arxiv.org/abs/2509.15279},
}

@misc{finr1,
  title        = {{Fin-R1}: A Large Language Model for Financial Reasoning through Reinforcement Learning},
  author       = {Liu, Zhaowei and Guo, Xin and Yang, Zhi and Lou, Fangqi and Zeng, Lingfeng and Niu, Jinyi and Li, Mengping and Qi, Qi and Liu, Zhiqiang and Han, Yiyang and others},
  year         = {2025},
  eprint       = {2503.16252},
  archivePrefix= {arXiv},
  primaryClass = {cs.CL},
  url          = {https://arxiv.org/abs/2503.16252},
}

@misc{saul,
  title        = {{SaulLM-7B}: A Pioneering Large Language Model for Law},
  author       = {Colombo, Pierre and Pires, Telmo Pessoa and Boudiaf, Malik and Culver, Dominic and Melo, Rui and Corro, Caio and Martins, Andre F. T. and Esposito, Fabrizio and Raposo, Vera L{\'u}cia and Morgado, Sofia and Desa, Michael},
  year         = {2024},
  eprint       = {2403.03883},
  archivePrefix= {arXiv},
  primaryClass = {cs.CL},
  url          = {https://arxiv.org/abs/2403.03883},
}

@misc{qwen25,
  title        = {{Qwen2.5} Technical Report},
  author       = {{Qwen Team}},
  year         = {2024},
  eprint       = {2412.15115},
  archivePrefix= {arXiv},
  primaryClass = {cs.CL},
}

@misc{qwenmath,
  title        = {{Qwen2.5-Math} Technical Report: Toward Mathematical Expert Model via Self-Improvement},
  author       = {Yang, An and Zhang, Beichen and Hui, Binyuan and Gao, Bofei and Yu, Bowen and Li, Chengpeng and Liu, Dayiheng and Tu, Jianhong and Zhou, Jingren and Lin, Junyang and others},
  year         = {2024},
  eprint       = {2409.12122},
  archivePrefix= {arXiv},
  primaryClass = {cs.CL},
  url          = {https://arxiv.org/abs/2409.12122},
}

@inproceedings{vllm,
  title     = {Efficient Memory Management for Large Language Model Serving with {PagedAttention}},
  author    = {Kwon, Woosuk and Li, Zhuohan and Zhuang, Siyuan and Sheng, Ying and Zheng, Lianmin and Yu, Cody Hao and Gonzalez, Joseph E. and Zhang, Hao and Stoica, Ion},
  booktitle = {Proceedings of SOSP},
  year      = {2023},
  doi       = {10.1145/3600006.3613165},
}

@article{christophe2024med42,
  title        = {{Med42} -- Evaluating Fine-Tuning Strategies for Medical {LLMs}: Full-Parameter vs. Parameter-Efficient Approaches},
  author       = {Christophe, Cl{\'e}ment and Kanithi, Praveen K. and Munjal, Prateek and Raha, Tathagata and Hayat, Nasir and Rajan, Ronnie and Al-Mahrooqi, Ahmed and Gupta, Avani and Salman, Muhammad Umar and Gosal, Gurpreet and Kanakarajan, Bhargav and Vassilieva, Natalia and Pimentel, Marco AF and Khan, Shadab},
  journal      = {arXiv preprint arXiv:2404.14779},
  year         = {2024},
  eprint       = {2404.14779},
  archivePrefix= {arXiv},
  primaryClass = {cs.CL},
}

\newpage
\appendix

\addtocontents{toc}{\protect\setcounter{tocdepth}{2}}
\setcounter{tocdepth}{2}
\renewcommand{\contentsname}{Appendix Contents}
\tableofcontents
\vspace{1em}\hrule\vspace{1em}

\section{\csa architecture overview}
\label{app:architecture}

Figure~\ref{fig:architecture} gives a high-level view of \csa as a plug-and-play wrapper on an existing RLVR system: the top (blue) shows the unchanged RLVR pipeline (prompt $\to$ policy $\to$ output $\to$ verifier), and the bottom (green) shows the \csa wrapper (surrogate score $\to$ e-process update $\to$ certification check $\to$ deployed threshold $\to$ act/abstain). The anytime-valid guarantee (red box) holds simultaneously for all $T$ with no feedback from \csa to the RLVR training loop.

\begin{figure}[!htpb]
\centering
\begin{tikzpicture}[
  >=Stealth,
  node distance=0.6cm and 0.8cm,
  box/.style={draw, rounded corners=3pt, minimum height=0.65cm, minimum width=1.4cm,
              font=\small, align=center, thick},
  rlvr/.style={box, fill=blue!8},
  csa/.style={box, fill=green!10},
  mathbox/.style={draw, rounded corners=2pt, fill=yellow!8, font=\footnotesize,
                  align=center, inner sep=4pt, thick},
  decision/.style={draw, rounded corners=2pt, fill=orange!12, font=\small,
                   align=center, inner sep=3pt, thick, minimum height=0.6cm},
  ann/.style={font=\scriptsize\itshape, text=gray!70!black},
  guarantee/.style={draw=red!60!black, rounded corners=3pt, fill=red!5,
                    font=\footnotesize, align=center, inner sep=4pt, thick},
]
\node[rlvr] (prompt) {$X_t$};
\node[rlvr, right=0.6cm of prompt] (policy) {$\pi_t(\cdot\mid X_t)$};
\node[rlvr, right=0.6cm of policy] (output) {$\widetilde{Y}_t$};
\node[rlvr, right=1.0cm of output] (verifier) {$V_t = V(X_t,\widetilde{Y}_t)$};
\draw[->, thick] (prompt) -- (policy);
\draw[->, thick] (policy) -- (output);
\draw[->, thick] (output) -- (verifier);
\node[above=0.01cm of prompt, font=\scriptsize\bfseries, text=blue!60!black] {prompt};
\node[above=0.01cm of policy, font=\scriptsize\bfseries, text=blue!60!black] {policy};
\node[above=0.01cm of output, font=\scriptsize\bfseries, text=blue!60!black] {output};
\node[above=0.01cm of verifier, font=\scriptsize\bfseries, text=blue!60!black] {verifier};
\node[csa, below=1.4cm of prompt, minimum width=2.2cm] (surrogate)
  {$s_t = 1 - \hat{p}_t(X_t, \widetilde{Y}_t)$};
\node[mathbox, right=0.5cm of surrogate, minimum width=3.8cm] (eproc) {%
  $X_t(q) = A_t(q)\bigl((1{-}V_t) - \alpha\bigr)$\\[2pt]
  $E_t(q) = E_{t-1}(q)\cdot\bigl(1 - \lambda_t(q)\,X_t(q)\bigr)$
};
\node[decision, right=0.5cm of eproc, minimum width=1.8cm] (cert) {%
  $E_t(q) \ge \delta_q^{-1}$\,?\\[1pt]
  {\scriptsize certify $q$ safe}
};
\node[csa, right=0.5cm of cert, minimum width=1.6cm] (deploy) {%
  $q_t = \max \cC_t$\\[1pt]
  {\scriptsize deployed threshold}
};
\node[box, fill=green!25, below=0.7cm of deploy, xshift=-1.0cm, minimum width=1.1cm,
      font=\small] (release) {\textbf{Act}};
\node[box, fill=red!10, below=0.7cm of deploy, xshift=1.0cm, minimum width=1.1cm,
      font=\small] (abstain) {Abstain};
\node[font=\footnotesize, above=0.1cm of release, xshift=0.5cm, text=green!50!black]
  {$s_t \le q_t$};
\node[font=\footnotesize, above=0.1cm of abstain, xshift=-0.3cm, text=red!60!black]
  {$s_t > q_t$};
\draw[->, thick] (output.south) -- ++(0,-0.5) -| (surrogate.north);
\draw[->, thick] (surrogate) -- (eproc);
\draw[->, thick] (eproc) -- (cert);
\draw[->, thick] (cert) -- (deploy);
\draw[->, thick] (deploy.south) -- ++(0,-0.25) -| (release.north);
\draw[->, thick] (deploy.south) -- ++(0,-0.25) -| (abstain.north);
\draw[->, thick, dashed, gray] (verifier.south) -- ++(0,-0.65) -| ([xshift=0.5cm]eproc.north);
\node[font=\tiny\itshape, text=black, below=0.05cm of verifier, xshift=-0.3cm] {$V_t$ feedback};
\node[guarantee, below=0.75cm of release, xshift=-4cm, minimum width=5.5cm] (guarantee) {%
  \textbf{Anytime-valid guarantee} (Thm.~\ref{thm:main-anytime}):\\[2pt]
  $\displaystyle R_T^{\mathrm{act}} \le \alpha + O\!\left(\sqrt{\frac{\log(1/\delta)}{N_T}}\right)
  \quad \forall\, T \ge 1$
  \quad w.p.\ $\ge 1{-}2\delta$
};
\begin{scope}[on background layer]
\node[draw=blue!50, dashed, rounded corners=6pt, thick,
      fit=(prompt)(policy)(output)(verifier),
      inner xsep=6pt, inner ysep=10pt] {};
\node[draw=green!50!black, dashed, rounded corners=6pt, thick,
      fit=(surrogate)(eproc)(cert)(deploy)(release)(abstain),
      inner xsep=8pt, inner ysep=8pt] {};
\end{scope}
\end{tikzpicture}
\caption{\csa as a plug-and-play deployment wrapper. Top (blue): the RLVR system is unchanged. Middle (green): \csa computes a surrogate score, updates one e-process per candidate threshold, certifies when the e-process crosses $\delta_q^{-1}$, and gates release via the largest certified threshold. Bottom (red): the anytime-valid guarantee holds simultaneously for all $T$ under predictable updates and monotone risk, without exchangeability and with no feedback to training.}
\label{fig:architecture}
\end{figure}

\section{Algorithm details and implementation}
\label{app:alg-details}

\subsection{\csa: single-epoch version}
\label{app:alg-single}

Algorithm~\ref{alg:csa-rlvr} is the deployed variant. It runs a single e-process per threshold, never resets, and uses the equal budget allocation $\delta_q = \delta/(2m)$. Certifications are permanent under the exact-stability condition (Corollary~\ref{cor:single-epoch-sufficiency}).

\begin{algorithm}[!htpb]
\caption{\textsc{CSA-RLVR}: Conformal Selective Acting (single-epoch).}
\label{alg:csa-rlvr}
\begin{algorithmic}[1]
\Require Grid $\cQ = \{q^{(1)} < \cdots < q^{(m)}\}$, target $\alpha\in(0,1)$,
         confidence $\delta\in(0,1)$, initial surrogate $\hat{p}_1$, initial policy $\pi_1$
\State $\delta_q \gets \delta/(2m)$ for all $q\in\cQ$
  \Comment{equal budget split}
\State $\ell_q \gets 0$ for all $q\in\cQ$
  \Comment{log e-process $\ell_q = \log E_t(q)$}
\State $\Sigma_q \gets 0$, $n_q\gets 0$ for all $q\in\cQ$
\State $\textsc{Cert}(q) \gets \textsc{False}$ for all $q\in\cQ$;
       $q_{\mathrm{deploy}}\gets\bot$
\For{$t = 1, 2, \ldots$}
  \State Observe $X_t$; sample $\widetilde{Y}_t\sim\pi_t(\cdot\mid X_t)$;
         $s_t\gets 1-\hat{p}_t(X_t,\widetilde{Y}_t)$
  \If{$q_{\mathrm{deploy}}\ne\bot$ \textbf{and} $s_t \le q_{\mathrm{deploy}}$}
    \State Release $\widetilde{Y}_t$ ($A_t=1$)
  \Else
    \State Abstain ($A_t=0$)
  \EndIf
  \State $V_t \gets V(X_t,\widetilde{Y}_t)$
    \Comment{verifier on every round}
  \For{$q\in\cQ$ with $q \ge s_t$}
    \Comment{thresholds with $A_t(q)=1$}
    \State $X_t^q\gets (1-V_t) - \alpha$
    \If{$n_q > 0$}
      \State $\hat\mu_q \gets \Sigma_q/n_q$;
             $\lambda_t^q\gets\clip\!\bigl(-\hat\mu_q/(1-\alpha)^2,\,0,\,1/(2(1-\alpha))\bigr)$
    \Else
      \State $\lambda_t^q\gets 0$
    \EndIf
    \State $\ell_q \gets \ell_q + \log(1 - \lambda_t^q X_t^q)$;
           $\Sigma_q\gets\Sigma_q + X_t^q$; $n_q\gets n_q+1$
    \If{$\neg\,\textsc{Cert}(q)$ \textbf{and} $\ell_q \ge \log(1/\delta_q)$}
      \State $\textsc{Cert}(q)\gets\textsc{True}$
    \EndIf
  \EndFor
  \State $q_{\mathrm{deploy}}\gets \max\{q\in\cQ:\textsc{Cert}(q)\}$
         (or $\bot$ if none certified)
  \State Add $(X_t,\widetilde{Y}_t,V_t)$ to buffer $\cD_t$; update $\hat{p}_{t+1}$, $\pi_{t+1}$
\EndFor
\end{algorithmic}
\end{algorithm}

\paragraph{Correctness.}
Lines~1--4 set up $E_0(q){=}e^{\ell_q}{=}1$, matching~\eqref{eq:eproc}. The surrogate $\hat p_t$ and policy $\pi_t$ are frozen before $V_t$ is observed, satisfying Assumption~\ref{ass:predictable}. Lines~12--22 update only thresholds $q\ge s_t$ for which $A_t(q){=}1$; when $A_t(q){=}0$ the increment $X_t(q){=}0$ and $E_t(q){=}E_{t-1}(q)$, so skipping these thresholds is exact. The adaptive bet on line~16 is $\cF_{t-1}$-measurable and satisfies $\lambda_t^q\in[0,(1-\alpha)^{-1}]$ by clipping, so Proposition~\ref{prop:eproc} applies. With budget $\delta_q=\delta/(2m)$, Theorem~\ref{thm:no-false-cert} gives $\PP(\text{any false certification})\le\delta/2$.

\subsection{CSA-Epoch: multi-epoch version}
\label{app:alg-epoch}

When the monotone-frontier assumption may be violated (e.g., under distribution shift or policy degradation), Algorithm~\ref{alg:csa-epoch} resets at deterministic epoch boundaries $\tau_1<\tau_2<\cdots$ with budget $\delta_{j,q}=6\delta/(\pi^2 m j^2)$. A restart revokes all certifications and costs $\sim$$100$--$400$ rounds per epoch to re-certify. The no-false-certification guarantee (Theorem~\ref{thm:no-false-cert}) holds within each epoch as long as the frontier does not drop by more than one grid step $\Delta q$ during the epoch; the epoch length should therefore be chosen short enough that the bounded-drift condition (Theorem~\ref{thm:monotone-frontier}) limits within-epoch frontier movement to at most $\Delta q$.

\begin{algorithm}[!htpb]
\caption{\textsc{CSA-Epoch}: Conformal Selective Acting (multi-epoch).}
\label{alg:csa-epoch}
\begin{algorithmic}[1]
\Require Grid $\cQ$, target $\alpha$, confidence $\delta$,
         restart schedule $\tau_1{=}1<\tau_2<\cdots$
\State $j\gets 1$; \Call{ResetEpoch}{$j$}
\For{$t=1,2,\ldots$}
  \If{$t = \tau_{j+1}$}
    \State $j\gets j+1$; \Call{ResetEpoch}{$j$}
  \EndIf
  \State Execute inner loop of Algorithm~\ref{alg:csa-rlvr} with $\delta_q\gets\delta_{j,q}$
\EndFor
\Procedure{ResetEpoch}{$j$}
  \State $\delta_{j,q}\gets 6\delta/(\pi^2\cdot m\cdot j^2)$;
         $\ell_q\gets 0$; $\Sigma_q\gets 0$; $n_q\gets 0$;
         $\textsc{Cert}(q)\gets\textsc{False}$
\EndProcedure
\end{algorithmic}
\end{algorithm}

\subsection{Computational and memory analysis}
\label{app:complexity}

\paragraph{Per-round cost.}
At round $t$: one surrogate forward pass ($C_{\mathrm{surr}}$); a binary search over $\cQ$ to find the largest $k^\star$ with $q^{(k^\star)}\ge s_t$ ($O(\log m)$); $k^\star$ e-process updates each with one log and one addition ($O(k^\star)$); one scan for the new max certified threshold ($O(m)$, or $O(1)$ amortized since $q_{\mathrm{deploy}}$ only increases in the single-epoch version); amortized surrogate retraining ($C_{\mathrm{retrain}}/B$). Total per-round cost $O(m) + C_{\mathrm{surr}} + C_{\mathrm{retrain}}/B$. For $m=50$--$200$ the $O(m)$ overhead is negligible relative to policy inference.

\paragraph{Measured overhead.}
Table~\ref{tab:overhead} gives per-component times on MATH-200 replay with Qwen2.5-Math-7B (4-bit MLX): the entire \csa pipeline adds $0.16$\,ms per round, $<\!0.001\%$ of the $\sim$17\,s generation time.

\begin{table}[h]
\centering\small
\caption{\textbf{Per-round cost breakdown} for \csa on MATH-200 replay with Qwen2.5-Math-7B. Verifier and all \csa machinery together add $<\!0.001\%$ of generation time.}
\label{tab:overhead}
\begin{tabular}{lrl}
\toprule
\textbf{Component} & \textbf{Time} & \textbf{\% of generation} \\
\midrule
LLM generation (Qwen-7B, 249 tokens avg.) & $\sim$17\,s  & $100\%$ \\
Exact-match verification                   & $59\,\mu$s  & $0.0004\%$ \\
Surrogate scoring (logistic regression)    & $96\,\mu$s  & $0.0006\%$ \\
E-process update ($m{=}15$ thresholds)     & $7\,\mu$s   & $<\!0.0001\%$ \\
\midrule
\textbf{Total \csa overhead}               & $\mathbf{0.16}$\,\textbf{ms} & $<\!0.001\%$ \\
\bottomrule
\end{tabular}
\end{table}

\paragraph{Memory.}
Working memory (excluding surrogate and replay buffer): three floats per threshold ($\ell_q,\Sigma_q,n_q$) and one boolean ($\textsc{Cert}(q)$), totaling $O(m)$ floats.

\paragraph{Numerical stability.}
All e-processes are maintained in log-space. The factor $1-\lambda_t(q)X_t(q)$ is guaranteed positive by $\lambda_t(q)\le 1/(2(1-\alpha))$ and $|X_t(q)|\le 1-\alpha$. No numerical issues were encountered across $>3$~million total e-process updates.

\subsection{Verifier evaluation on abstained rounds}
\label{app:verifier-all-rounds}

Algorithm~\ref{alg:csa-rlvr} evaluates $V_t$ on every round, including abstained ones, because the e-process for any $q\ge s_t$ uses $X_t(q)=(1-V_t)-\alpha$ regardless of whether the deployed gate acted. Skipping the verifier on abstained rounds would freeze e-processes at uncertified thresholds and delay or prevent their certification. Two practical modifications preserve validity:
\begin{itemize}[leftmargin=1.3em,nosep]
\item \textbf{Delayed evaluation.} Queue $(X_t,\widetilde Y_t)$ and run the verifier in batch; apply the e-process updates upon completion. Assumption~\ref{ass:predictable} holds as long as $\hat p_{t+1}$ uses only verified examples.
\item \textbf{Partial evaluation.} Run the verifier only on acted rounds. Updates are restricted to $q\le q_{\mathrm{deploy}}$; higher thresholds receive no update and cannot be certified, so the controller stalls at the current threshold but stays valid.
\end{itemize}
For expensive verifiers (LLM judges with $C_{\mathrm{ver}}\!\approx\!C_{\mathrm{gen}}$) the sparse-verifier variant (\S\ref{app:sparse}) is preferable: Bernoulli subsampling at rate $\pi$ preserves full anytime-valid validity and inflates the expected certification-delay bound by at most $1/\pi_{\min}$ (exactly $1/\pi$ when $\pi_t\equiv\pi$).

\subsection{Implementation choices not fixed by theory}
\label{app:impl-choices}

\begin{itemize}[leftmargin=1.3em, nosep]
\item \textbf{Surrogate model class.} Theory requires $\hat p_t$ $\cF_{t-1}$-measurable (Assumption~\ref{ass:predictable}) and, for the RLVR structural results, uniformly convergent (Assumption~\ref{ass:surrogate-calibration}). A natural default is logistic regression on features $\phi(x,y)=[\texttt{difficulty}(x),\log p_\pi(y\mid x),\texttt{len}(y),1]$, which has $\varepsilon_t=O(\sqrt{d\log t/t})$. Richer surrogates (calibrated neural networks, Platt-scaled policy logits) are viable.
\item \textbf{Threshold grid.} We use $m{=}15$ geometrically-spaced points between the $2\%/98\%$ quantiles of the calibrated score. Finer grids approximate $q_t^\star$ more closely (discretization gap $\le\Delta q$) but increase the utility gap because each threshold must be individually certified.
\item \textbf{Surrogate retraining frequency.} Batched every $B{=}500$ rounds.
\item \textbf{Warm start.} The paper uses a calibration phase on the 80\% calibration split before deployment, seeding the e-processes with $n_0$ pre-deployment updates.
\item \textbf{Fallback action.} The theory is agnostic: $R_T^{\mathrm{act}}$ counts only released outputs.
\end{itemize}

\paragraph{Deployment recipe for the multi-epoch variant.}
Practitioners using \textsc{CSA-Epoch} (Algorithm~\ref{alg:csa-epoch}) face three operational choices. We recommend the following defaults and diagnostics: (i)~\emph{Epoch length} $\tau_{j+1}-\tau_j$: set to $1{,}000$--$2{,}000$ rounds for moderate-shift settings (matches the bounded-drift condition $\xi_t{\le}\Delta q/$ epoch length used in our LoRA experiments), and shorten to $200$--$500$ rounds when the policy is updated more aggressively (e.g., per-batch RLHF). (ii)~\emph{Drift budget} $\nu_j$: start at $0$ (tightest bound) and inflate only when the structural-layer estimate $\varepsilon_j$ from Appendix~\ref{app:rlvr-layer} exceeds the running surrogate calibration gap; in our experiments $\nu_j{=}0$ sufficed across all live cells. (iii)~\emph{Stale-certificate diagnostics}: monitor (a) the per-round running risk $R_t^{\mathrm{act}}$ at the deployed threshold against the certified bound $\alpha+\text{slack}(N_T)$, and (b) the surrogate Brier/ECE drift over a sliding window; a sustained Brier increase of $>20\%$ over the warmup baseline is a leading indicator that the frontier may have dropped and an early epoch reset is warranted.

\section{Deferred proofs}
\label{app:proofs}

\subsection{E-process validity (Proposition~\ref{prop:eproc})}
\label{app:proof-eproc}

\begin{proof}
Under $H_{j,q}^{\mathrm{unsafe}}:\EE[X_t(q)\mid\cF_{t-1}]\ge 0$ and predictable $\lambda_{j,t}(q)\in[0,(1-\alpha)^{-1}]$,
\begin{align*}
\EE\!\bigl[E_{j,t}(q)\mid\cF_{t-1}\bigr]
&= E_{j,t-1}(q)\,\EE\!\bigl[1-\lambda_{j,t}(q)X_t(q)\mid\cF_{t-1}\bigr] \\
&= E_{j,t-1}(q)\bigl(1-\lambda_{j,t}(q)\EE[X_t(q)\mid\cF_{t-1}]\bigr)
\;\le\; E_{j,t-1}(q).
\end{align*}
Nonnegativity: since $X_t(q)\in\{-\alpha,0,1-\alpha\}$ and $\lambda_{j,t}(q)\le(1-\alpha)^{-1}$, the factor $1-\lambda_{j,t}(q)X_t(q)\ge 0$.
\end{proof}

\subsection{No false certification (Theorem~\ref{thm:no-false-cert})}
\label{app:proof-no-false}

\begin{theorem}[No false certification]
\label{thm:no-false-cert}
Suppose the deployed epoch schedule is such that the oracle frontier $q_t^\star$ is nondecreasing within each epoch $\cI_j$. This frontier-stability condition is supplied by either the single-epoch algorithm under exact stability ($\xi_t\equiv 0$, via Theorem~\ref{thm:monotone-frontier} and Corollary~\ref{cor:single-epoch}) or any analysis partition at frontier-drop events. (Assumption~\ref{ass:nested} alone is a within-time condition and does \emph{not} imply across-time frontier nondecreasingness.) Let $T_{j,q}:=\inf\{t\in\cI_j:\,q\le q_t^\star\}$ be the first time within epoch $j$ at which $q$ becomes safe. Then
\[
\PP\!\bigl(\exists\,j,\,q\in\cQ,\,t\in\cI_j:\,t<T_{j,q}\ \text{and}\ q\in\cC_{j,t}\bigr)\le\delta,
\]
and consequently $\PP(q_t\le q_t^\star\;\forall t)\ge 1-\delta$.
\end{theorem}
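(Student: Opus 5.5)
The plan is to combine Ville's inequality, applied per (epoch, threshold) pair, with a union bound over the Bonferroni budget $\delta_{j,q}$. Then a monotonicity argument under \cref{ass:nested} converts "no false certification" into $q_t\le q_t^\star$.

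\emph{Step 1 (unsafe null holds before $T_{j,q}$).} Fix $j$ and $q\in\cQ$. For $t\in\cI_j$ with $t<T_{j,q}$ we have $q>q_t^\star$. By definition of the oracle as the maximum safe threshold, and by downward-closedness of the safe set (from (M)), $q$ is unsafe at $t$: $\EE[X_t(q)\mid\cF_{t-1}]>0$. The frontier-stability hypothesis makes this consistent with the definition of $T_{j,q}$. Once $q\le q_t^\star$, it stays safe for the rest of the epoch, so the rounds before $T_{j,q}$ form an initial segment of $\cI_j$.

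To apply \cref{prop:eproc} on a random initial segment, I will work with the stopped process $E_{j,t\wedge (T_{j,q}-1)}(q)$. Equivalently, I replace the bet by $\lambda_{j,t}(q)\ind{t<T_{j,q}}$. This is legitimate because $\{t<T_{j,q}\}=\{q>q_s^\star\ \forall s\in\cI_j,\,s\le t\}$, and each $q_s^\star$ is $\cF_{s-1}$-measurable. Hence the indicator is predictable, and the modified bet stays in $[0,(1-\alpha)^{-1}]$. The argument of \cref{prop:eproc} then shows that the stopped process is a nonnegative $(\cF_t)$-supermartingale started at $1$. It coincides with $E_{j,t}(q)$ for all $t<T_{j,q}$.

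\emph{Step 2 (Ville plus union bound).} Ville's inequality gives
\[
\PP\bigl(\exists\,t\in\cI_j,\ t<T_{j,q}:\ E_{j,t}(q)\ge\delta_{j,q}^{-1}\bigr)\le\delta_{j,q}.
\]
Summing over $q\in\cQ$ and $j\ge1$ gives
\[
\sum_j\sum_q \frac{6\delta}{\pi^2|\cQ|j^2}=\delta .
\]
For the single-epoch budget $\delta/(2m)$ the same sum gives $\delta/2$. This proves the first display.

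\emph{Step 3 (consequence).} On the complementary event, every certified $q\in\cC_{j,t}$ satisfies $t\ge T_{j,q}$. By frontier stability, $q\le q_{t'}^\star$ for all $t'\ge T_{j,q}$ in the epoch. Since $q_{t+1}=\max\cC_{j,t}$, this gives $q_{t+1}\le q_{t+1}^\star$ whenever $t+1$ lies in the same epoch. Abstention, or a fresh epoch with empty certified set, makes the claim vacuous under the convention that abstain corresponds to no threshold, or to $\min\cQ$ if abstaining is treated as deploying nothing.

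\emph{Main obstacle.} The delicate point is Step 1: justifying that stopping at the random time $T_{j,q}$ preserves the supermartingale property. This needs predictability of $\{t<T_{j,q}\}$, hence $\cF_{t-1}$-measurability of $q_t^\star$, which follows from (P). A second subtlety is the boundary case where $\EE[X_t(q)\mid\cF_{t-1}]=0$. That case is covered because \cref{prop:eproc} only requires $\ge 0$.
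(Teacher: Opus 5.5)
Your proposal is correct and follows the paper's proof essentially step for step. You stop the per-threshold e-process at $T_{j,q}-1$, apply \cref{prop:eproc} and Ville's inequality for a $\delta_{j,q}$ bound per $(j,q)$, union-bound with $\delta_{j,q}=6\delta/(\pi^2|\cQ|j^2)$, and use within-epoch frontier monotonicity to convert no false certification into $q_t\le q_t^\star$. Your explicit check that $\ind{t<T_{j,q}}$ is predictable fills in a step the paper leaves implicit; note that this check rests on each $q_s^\star$ being $\cF_{s-1}$-measurable, which follows from the definition of $q_s^\star$ via conditional expectations given $\cF_{s-1}$, not from (P).
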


\begin{proof}
Fix epoch $j$ and threshold $q\in\cQ$. Let
\[
T_{j,q}:=\inf\{t\in\cI_j:\,q\le q_t^\star\}
\]
be the first time within epoch $j$ at which $q$ becomes safe (with $T_{j,q}:=\infty$ if this never occurs), and define the stopped process
\[
\widetilde E_{j,t}(q)\;:=\;E_{j,\,t\wedge(T_{j,q}-1)}(q),\qquad t\in\cI_j.
\]
Before $T_{j,q}$ the threshold $q$ is unsafe, so $\EE[X_t(q)\mid\cF_{t-1}]\ge 0$ and Proposition~\ref{prop:eproc} gives $\widetilde E_{j,t}(q)$ as a nonnegative supermartingale. Ville's inequality then yields
\[
\PP\!\Bigl(\sup_{t<T_{j,q}}E_{j,t}(q)\ge\delta_{j,q}^{-1}\Bigr)
=\PP\!\Bigl(\sup_{t\in\cI_j}\widetilde E_{j,t}(q)\ge\delta_{j,q}^{-1}\Bigr)
\le\delta_{j,q},
\]
so with probability at least $1-\delta_{j,q}$, threshold $q$ is not certified before it becomes safe within epoch $j$. Union-bounding over $q\in\cQ$ and $j\ge 1$ with $\delta_{j,q}=6\delta/(\pi^2 m j^2)$,
\[
\sum_j\sum_q\delta_{j,q} = \frac{6\delta}{\pi^2 m}\sum_j\frac{1}{j^2}\cdot m = \frac{6\delta}{\pi^2}\cdot\frac{\pi^2}{6} = \delta.
\]
On the resulting event, every deployed threshold is certified only after it is safe in its epoch, hence $q_t\le q_t^\star$ for all $t$. \qedhere
\end{proof}

\subsection{Exponential supermartingale lemma}
\label{app:lem-expsup}

\begin{lemma}[Exponential supermartingale]
\label{lem:exp-supermg}
Let $\{q_t\}_{t\ge 1}$ be any predictable controller trajectory and let $M_t := A_t(q_t)((1-V_t)-\alpha)$. If $\EE[M_t\mid\cF_{t-1}]\le 0$, then for every $\lambda>0$,
\[
L_T(\lambda) \;:=\; \exp\!\Bigl(\lambda\sum_{t=1}^T M_t - \tfrac{\lambda^2}{8}N_T\Bigr)
\]
is a nonnegative supermartingale. Under declared drift budgets $\{\nu_j\}$ and with $J_t:=\1\{q_t\le q_t^\star\}$,
\[
L_T^{\nu}(\lambda) := \exp\!\Bigl(\lambda\sum_{t=1}^{T}J_t(M_t-\nu_{j(t)}A_t) - \tfrac{\lambda^2}{8}\sum_{t=1}^T J_t A_t\Bigr)
\]
is also a nonnegative supermartingale.
\end{lemma}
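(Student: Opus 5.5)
The plan is to prove both claims with the standard one-step argument. Each $L_T$ is a product of per-round factors $D_t$. It then suffices to show $\EE[D_t\mid\cF_{t-1}]\le 1$, because $L_{T-1}$ is $\cF_{T-1}$-measurable and so $\EE[L_T\mid\cF_{T-1}]=L_{T-1}\,\EE[D_T\mid\cF_{T-1}]\le L_{T-1}$. Nonnegativity is immediate since every factor is an exponential. For the first process the factor is
\[
D_t=\exp\!\bigl(\lambda M_t-\tfrac{\lambda^2}{8}A_t\bigr).
\]
The gate $A_t=A_t(q_t)$ is not $\cF_{t-1}$-measurable, because it depends on the score of the freshly sampled $(X_t,\widetilde Y_t)$. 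So I cannot pull $A_t$ out of the conditional expectation. Instead I would split on the value of $A_t$:
\[
\EE[D_t\mid\cF_{t-1}]=\PP(A_t=0\mid\cF_{t-1})+\PP(A_t=1\mid\cF_{t-1})\,e^{-\lambda^2/8}\,\EE\!\bigl[e^{\lambda((1-V_t)-\alpha)}\,\big|\,\cF_{t-1},A_t=1\bigr].
\]

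The key step is the conditional-mean reduction. Write $p_t:=\PP(A_t=1\mid\cF_{t-1})$ and assume $p_t>0$; otherwise the second term vanishes and the bound holds trivially. Since $M_t=A_t((1-V_t)-\alpha)$, we have
\[
\EE[M_t\mid\cF_{t-1}]=p_t\,\mu_t,\qquad \mu_t:=\EE[(1-V_t)-\alpha\mid\cF_{t-1},A_t=1].
\]
The hypothesis $\EE[M_t\mid\cF_{t-1}]\le 0$ therefore gives $\mu_t\le 0$. Conditionally on $\{A_t=1\}$, the variable $(1-V_t)-\alpha$ lies in $[-\alpha,1-\alpha]$, an interval of length $1$. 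Hoeffding's lemma, applied under the conditional law given $\cF_{t-1}$ and $A_t=1$, then yields
\[
\EE\!\bigl[e^{\lambda((1-V_t)-\alpha)}\,\big|\,\cF_{t-1},A_t=1\bigr]\le e^{\lambda\mu_t+\lambda^2/8}\le e^{\lambda^2/8}.
\]
The bracketed term is thus at most $1$, and $\EE[D_t\mid\cF_{t-1}]\le (1-p_t)+p_t=1$.

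For the drift-adjusted process the factor is
\[
D_t^\nu=\exp\!\bigl(J_t\bigl[\lambda(M_t-\nu_{j(t)}A_t)-\tfrac{\lambda^2}{8}A_t\bigr]\bigr).
\]
The indicator $J_t=\1\{q_t\le q_t^\star\}$ is predictable, since both $q_t$ and $q_t^\star$ are $\cF_{t-1}$-measurable, and $\nu_{j(t)}$ is deterministic. On $\{J_t=0\}$ the factor is identically $1$. On $\{J_t=1\}$ I repeat the split on $A_t$ with the shifted variable $(1-V_t)-\alpha-\nu_{j(t)}$, which still ranges over an interval of length $1$. It then only remains to show
\[
\EE[(1-V_t)-\alpha\mid\cF_{t-1},A_t(q_t)=1]\le\nu_{j(t)}\quad\text{whenever } q_t\le q_t^\star.
\]
This comes from (M) of \cref{ass:nested}: the conditional failure probability $\PP(V_t=0\mid S_t\le q,\cF_{t-1})$ is nondecreasing in $q$, so the conditional excess at $q_t$ is at most that at $q_t^\star$. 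Safety of $q_t^\star$ makes the latter $\le 0$, and the drift-budget hypothesis (the budgets dominate the actual within-epoch drift on safe rounds, as assumed in \cref{thm:main-anytime}) covers any gap between the safety certified at the epoch reference and the current round. Hoeffding's lemma again gives a conditional factor at most $1$.

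The main obstacle is this last step. I need to interpret the safety and drift conditions precisely in terms of the \emph{conditional-on-release} mean rather than the unconditional $\EE[X_t(q)\mid\cF_{t-1}]$. These two differ by the factor $p_t$, which has a fixed sign, so safety transfers to the conditional mean. The drift budget, however, must be read as a bound on the conditional-on-release excess (or, more weakly, with $\nu A_t$ inside the expectation). I would state that reading explicitly as the meaning of ``dominating the actual within-epoch drift'' and check that it matches the form used in the main theorem.
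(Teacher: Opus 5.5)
Your proposal is correct and is essentially the paper's argument. You apply Hoeffding's lemma round by round on a length-one range, split according to whether $A_t=1$, and let $J_t$ freeze the drift-adjusted process on unsafe rounds. Your handling of $A_t$ as not $\cF_{t-1}$-measurable, via $p_t$ and the conditional-on-release mean, is more careful than the paper's proof, which writes the bound $e^{\lambda^2 A_t/8}$ as if $A_t$ were predictable.

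The obstacle you flag at the end does not arise. Since $J_t$ uses the per-round oracle $q_t^\star$, (M) already makes the conditional-on-release excess at $q_t$ nonpositive on $\{J_t=1\}$. Hence any $\nu_{j(t)}\ge 0$ suffices, and you need no particular reading of the drift budget.
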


\begin{proof}
Conditional on $\cF_{t-1}$, $M_t\in[-\alpha,1-\alpha]$ (a bounded range of length~$1$). By Hoeffding's lemma, $\EE[e^{\lambda M_t}\mid\cF_{t-1}]\le\exp(\lambda\EE[M_t\mid\cF_{t-1}]+\lambda^2/8)\le e^{\lambda^2/8}$ on $\{A_t=1\}$, and $=1$ on $\{A_t=0\}$. Combining, $\EE[e^{\lambda M_t}\mid\cF_{t-1}]\le e^{\lambda^2 A_t/8}$, whence $\EE[L_T(\lambda)/L_{T-1}(\lambda)\mid\cF_{T-1}]\le 1$. The drift-aware variant follows analogously with a safe-step indicator $J_t$ freezing the process on unsafe steps.
\end{proof}

\subsection{Anytime-valid selective risk (Theorem~\ref{thm:main-anytime})}
\label{app:proof-main}

\begin{proof}
\textbf{($\nu_j=0$ case.)}
Let $\cG:=\{q_t\le q_t^\star\,\forall t\}$; $\PP(\cG)\ge 1-\delta$ by Theorem~\ref{thm:no-false-cert}. On $\cG$, $\EE[M_t\mid\cF_{t-1}]\le 0$; by Lemma~\ref{lem:exp-supermg}, $L_T(\lambda)$ is a supermartingale.

Fix $\eta\in(0,1)$. For integer $k\ge 0$ set $\eta_k:=\eta/((k+1)(k+2))$ (so $\sum_k\eta_k\le\eta$) and $\lambda_k:=\sqrt{8\log(1/\eta_k)/2^{k+1}}$. Define dyadic blocks $\cB_k:=\{T\ge 1:2^k<N_T\vee 1\le 2^{k+1}\}$. Ville's inequality applied to $L_T(\lambda_k)$ gives, with probability $\ge 1-\eta_k$,
\[
\sup_{T\ge 0}L_T(\lambda_k)<1/\eta_k
\quad\Longleftrightarrow\quad
\lambda_k\sum_t M_t - \tfrac{\lambda_k^2}{8}N_T<\log(1/\eta_k)\quad\forall T.
\]
Rearranging and optimizing the choice of $k$ for each $T\in\cB_k$ gives
\[
\sum_t M_t \le \tfrac{\lambda_k}{8}N_T+\tfrac{\log(1/\eta_k)}{\lambda_k}\le c\sqrt{2^k\log(1/\eta_k)} \quad\text{for }T\in\cB_k,
\]
and the union bound across $k$ gives probability $\ge 1-\eta$ inside $\cG$.

When $N_T>0$, $\sum_t M_t = \sum_t A_t(1-V_t)-\alpha N_T = N_T(R_T^{\mathrm{act}}-\alpha)$. Dividing by $N_T$ and using $\log(1/\eta_k)\le C(\log(1/\eta)+\log\log(eN_T+e))$ yields~\eqref{eq:main-anytime}. When $N_T=0$, $R_T^{\mathrm{act}}=0\le\alpha$, so the bound is trivial.

Setting $\eta=\delta$ and intersecting with $\cG$ via union bound gives the claimed $\ge 1-2\delta$.

\textbf{($\nu_j>0$ case.)}
Let $J_t:=\1\{q_t\le q_t^\star\}$ be the safe-step indicator. By Lemma~\ref{lem:exp-supermg}, the process
\[
L_T^\nu(\lambda):=\exp\!\Bigl(\lambda\sum_{t=1}^T J_t(M_t-\nu_{j(t)}A_t)-\frac{\lambda^2}{8}\sum_{t=1}^T J_t A_t\Bigr)
\]
is a nonnegative supermartingale with $L_0^\nu(\lambda)=1$. The key step is that on each safe round ($J_t=1$), $\EE[M_t-\nu_{j(t)}A_t\mid\cF_{t-1}]\le\nu_{j(t)}-\nu_{j(t)}=0$ by the drift condition, so Hoeffding's lemma applies to the shifted increment $M_t-\nu_{j(t)}A_t\in[-\alpha-\nu_{j(t)},1-\alpha-\nu_{j(t)}]$ (still an interval of length at most $1$), giving $\EE[L_T^\nu(\lambda)/L_{T-1}^\nu(\lambda)\mid\cF_{T-1}]\le 1$. On unsafe rounds ($J_t=0$), both sums are frozen and the ratio is $1$.

On $\cG=\{q_t\le q_t^\star\,\forall t\}$ (which has $\PP(\cG)\ge 1-\delta$), $J_t\equiv 1$ for all $t$, so $L_T^\nu(\lambda)$ reduces to $\exp(\lambda\sum_t(M_t-\nu_{j(t)}A_t)-\lambda^2 N_T/8)$. Apply the same dyadic-block stitching: for $k\ge 0$, $\cB_k:=\{T:2^k<N_T\vee 1\le 2^{k+1}\}$, $\eta_k:=\eta/((k+1)(k+2))$, and $\lambda_k:=\sqrt{8\log(1/\eta_k)/2^{k+1}}$. Ville's inequality on $L_T^\nu(\lambda_k)$ gives, with probability $\ge 1-\eta_k$ on $\cG$,
\[
\sum_{t=1}^T(M_t-\nu_{j(t)}A_t) \le \sqrt{2^k\log(1/\eta_k)} \quad\text{for }T\in\cB_k.
\]
Union-bounding over $k$ and intersecting with $\cG$ yields: with probability $\ge 1-\eta-\delta$,
\[
\sum_{t=1}^T(M_t-\nu_{j(t)}A_t)\le C_1\sqrt{(N_T\vee 1)(\log(1/\eta)+\log\log(eN_T+e))}+C_2(\log(1/\eta)+\log\log(eN_T+e))
\]
for universal constants $C_1,C_2>0$ and all $T\ge 1$ simultaneously.

When $N_T>0$, $\sum_t M_t=N_T(R_T^{\mathrm{act}}-\alpha)$ and $\sum_t\nu_{j(t)}A_t=N_T\bar\nu_T$, so dividing by $N_T$ gives $R_T^{\mathrm{act}}-\alpha-\bar\nu_T\le O(\sqrt{\log(1/\delta)/N_T})$, i.e.\ $R_T^{\mathrm{act}}\le\alpha+\bar\nu_T+O(N_T^{-1/2})$, which is~\eqref{eq:main-anytime}. When $N_T=0$, $R_T^{\mathrm{act}}=0\le\alpha$ and the bound holds trivially. Setting $\eta=\delta$ gives the claimed probability $\ge 1-2\delta$.
\end{proof}

\subsection{Power guarantee (Theorem~\ref{thm:power})}
\label{app:proof-power}

\begin{theorem*}[Power guarantee (Theorem~\ref{thm:power}, restated)]
Fix epoch $j$ and $q\le q_j^\star$ satisfying Assumption~\ref{ass:safe-margin} (safe-side margin) with $\bar\eta_{j,q}>\nu_j$. Using the drift-adjusted e-process with fixed bet $\lambda^\star=(\bar\eta_{j,q}-\nu_j)/2$,
\[
\EE\!\bigl[\tau_{j,q}^{\mathrm{cert}}-\tau_j\bigr]
\;\le\; \frac{4(\log(1/\delta_{j,q})+1)}{(\bar\eta_{j,q}-\nu_j)^2}.
\]
When $\xi_t\equiv 0$: $\nu_j{=}0$ and the bound reduces to $4(\log(1/\delta_{j,q})+1)/\bar\eta_{j,q}^2$.
\end{theorem*}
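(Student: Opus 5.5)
The idea is to treat $\log E_{j,t}(q)$ under the fixed bet $\lambda^\star$ as a random walk with positive drift, and to bound the expected hitting time of the level $\log(1/\delta_{j,q})$ by Wald's identity.

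\textbf{Increment bounds.} On epoch $j$, write the log-increment as $Z_s := \log(1-\lambda^\star X_s(q))$, with $X_s(q)\in\{-\alpha,0,1-\alpha\}$. The safe-side margin of Assumption~\ref{ass:safe-margin}, net of the declared drift $\nu_j$ (i.e.\ the drift-adjusted increment), gives
\[
\EE[-X_s(q)\mid\cF_{s-1}]\ge \bar\eta_{j,q}-\nu_j=:\eta
\]
on released rounds. I would interpret the margin per released round, as the normalization implicit in the stated rate. Since $|X_s|\le 1$ and $\lambda^\star=\eta/2\le 1/2$, the elementary bound $\log(1+u)\ge u-u^2$ for $u\ge -1/2$ gives
\[
\EE[Z_s\mid\cF_{s-1}] \;\ge\; \lambda^\star\eta-(\lambda^\star)^2 \;=\; \eta^2/4 .
\]
This is the only place the choice of $\lambda^\star$ enters.

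\textbf{Wald step.} Let $\tau:=\tau^{\mathrm{cert}}_{j,q}-\tau_j$ be the stopping time at which $\sum_{s\le\tau} Z_s$ first exceeds $\log(1/\delta_{j,q})$. The process $\sum_{s\le t}(Z_s-\EE[Z_s\mid\cF_{s-1}])$ is a martingale with bounded increments, since $|Z_s|\le \log 2$. Optional stopping at $\tau\wedge n$, followed by letting $n\to\infty$ via monotone convergence, gives
\[
\frac{\eta^2}{4}\,\EE[\tau] \;\le\; \EE\Bigl[\sum_{s\le\tau}Z_s\Bigr] \;\le\; \log(1/\delta_{j,q}) + \text{overshoot}.
\]
To get the truncation started we need $\EE[\tau\wedge n]<\infty$ for each $n$, which is trivial, and the inequality then caps $\EE[\tau]$.

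\textbf{Overshoot and conclusion.} The overshoot is at most one increment, $Z_\tau\le \log(1+\lambda^\star\alpha)\le 1$. Combining the pieces gives
\[
\EE[\tau]\le \frac{4(\log(1/\delta_{j,q})+1)}{\eta^2},
\]
which is the claimed bound. The case $\xi_t\equiv 0$ sets $\nu_j=0$.

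\textbf{Main obstacle.} The hard step is the positive-drift bound: justifying that the conditional mean of $-X_s(q)$ is at least $\eta$ on every round of the epoch, including non-release rounds where $X_s(q)=0$, and that the drift-adjusted process remains a valid e-process. If the margin is only stated per round rather than per released round, I would instead count time in released rounds for $q$ and absorb the release frequency into $\bar\eta_{j,q}$.

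For the adaptive plug-in claim, I would argue that $\widehat\mu$ concentrates. After $O(\log(1/\delta_{j,q})/\eta^2)$ rounds, with high probability $\lambda_{j,t}(q)$ lies within a constant factor of $\lambda^\star$, so the drift bound holds up to constants. This yields the same rate with a larger leading constant.
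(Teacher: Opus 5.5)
Your argument is essentially the paper's proof: the same bound $\log(1-u)\ge -u-u^2$ gives per-round drift $c=(\bar\eta_{j,q}-\nu_j)^2/4$ under the bet $\lambda^\star$, then optional stopping for the submartingale $Z_n-cn$ and a one-step overshoot $Z_\tau<a+1$. The only difference is in the limit step: you stop at the bounded time $\tau\wedge n$, use the uniform bound $Z_{\tau\wedge n}<a+1$, and pass to the limit by monotone convergence, which is cleaner than the paper's preliminary Azuma--Hoeffding argument establishing $\EE[\tau]<\infty$ before stopping at $\tau$.

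The ``main obstacle'' you flag does not arise. Assumption~\ref{ass:safe-margin} is already stated per wall-clock round, $\EE[X_t(q)\mid\cF_{t-1}]\le-\bar\eta_{j,q}$, with the gate $A_t(q)$ inside $X_t(q)$ and averaged over by conditioning on $\cF_{t-1}$. So no ``per released round'' reading is needed, and your fallback of counting only released rounds would bound a different clock than $\tau_{j,q}^{\mathrm{cert}}-\tau_j$.
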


\begin{proof}
We prove the $\nu_j=0$ case; the drift case replaces $\bar\eta_{j,q}$ with $\bar\eta_{j,q}-\nu_j$ throughout.

Write $Z_n:=\log E_{j,\tau_j+n}(q)$ for $n\ge 0$, with $Z_0=0$, and let $\tau:=\inf\{n\ge 0:Z_n\ge a\}$ for $a=\log(1/\delta_{j,q})$. Each step satisfies $|\Delta Z_n|\le 2\lambda^\star=\bar\eta_{j,q}\le 1$ (since $|u|\le 1/2$ gives $|\log(1-u)|\le 2|u|$).

Since $|\lambda^\star X_{\tau_j+n}|\le 1/2$, the inequality $\log(1-u)\ge -u-u^2$ gives
\[
\EE[\Delta Z_n\mid\cF_{\tau_j+n-1}]
\ge \lambda^\star\bar\eta_{j,q}-(\lambda^\star)^2
= \tfrac{\bar\eta_{j,q}^2}{2}-\tfrac{\bar\eta_{j,q}^2}{4}
= \tfrac{\bar\eta_{j,q}^2}{4} =: c.
\]

Decompose $Z_n=D_n+\Sigma_n$ with $D_n\ge cn$ and $\Sigma_n$ a zero-mean martingale with $|\Delta\Sigma_s|\le 2\bar\eta_{j,q}$. Azuma-Hoeffding on $\Sigma_n$ gives $\PP(\Sigma_n<-\epsilon)\le\exp(-\epsilon^2/(8\bar\eta_{j,q}^2 n))$. For $n\ge 2a/c$,
\[
\PP(\tau>n)\le\PP(Z_n<a)\le\PP(\Sigma_n<a-cn)\le\exp\!\Bigl(-\frac{(cn-a)^2}{8\bar\eta_{j,q}^2 n}\Bigr)\le\exp\!\Bigl(-\frac{c^2 n}{32\bar\eta_{j,q}^2}\Bigr),
\]
which is geometric, so $\EE[\tau]\le 2a/c+\sum_n e^{-\bar\eta_{j,q}^2 n/512}<\infty$.

$M_n:=Z_n-cn$ is a submartingale with $M_0=0$; $|M_{n\wedge\tau}|\le(1+c)\tau$ with $\EE[\tau]<\infty$. OST gives $\EE[M_\tau]\ge 0$, i.e.\ $\EE[Z_\tau]\ge c\EE[\tau]$.

$Z_\tau\ge a$ by definition, and $Z_{\tau-1}<a$ with $|\Delta Z_\tau|\le 1$ gives $Z_\tau<a+1$. Hence $\EE[Z_\tau]\le a+1$ and
\[
c\EE[\tau]\le\EE[Z_\tau]\le a+1 \implies \EE[\tau]\le\frac{a+1}{c}=\frac{4(\log(1/\delta_{j,q})+1)}{\bar\eta_{j,q}^2}. \qedhere
\]
\end{proof}

\begin{proposition}[High-probability certification delay]
\label{prop:hp-cert-delay}
Under the conditions of Theorem~\ref{thm:power} (with the same fixed bet $\lambda^\star=(\bar\eta_{j,q}-\nu_j)/2$), for any $\eta\in(0,1)$,
\[
\PP\!\left(\tau_{j,q}^{\mathrm{cert}}-\tau_j>\frac{C_0(\log(1/\delta_{j,q})+\log(1/\eta))}{(\bar\eta_{j,q}-\nu_j)^2}\right)\le\eta,
\]
with universal constant $C_0=2048$.
\end{proposition}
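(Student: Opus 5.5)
We reuse the decomposition from the proof of Theorem~\ref{thm:power}, but now bound the tail of $\tau:=\tau_{j,q}^{\mathrm{cert}}-\tau_j$ directly instead of its expectation. Write $Z_n:=\log E_{j,\tau_j+n}(q)$ for the log-wealth under the fixed bet $\lambda^\star=(\bar\eta_{j,q}-\nu_j)/2$, and set $a:=\log(1/\delta_{j,q})$. We treat the case $\nu_j=0$ and write $\bar\eta:=\bar\eta_{j,q}$; the drift case substitutes $\bar\eta-\nu_j$ throughout, exactly as in Theorem~\ref{thm:power}.

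From that proof, each increment satisfies $|\Delta Z_n|\le 2\lambda^\star=\bar\eta\le 1$. The conditional drift satisfies $\EE[\Delta Z_n\mid\cF_{\tau_j+n-1}]\ge c:=\bar\eta^2/4$, using the safe-side margin (Assumption~\ref{ass:safe-margin}) and $\log(1-u)\ge -u-u^2$ for $|u|\le 1/2$. Decompose $Z_n=D_n+\Sigma_n$, where $D_n:=\sum_{s\le n}\EE[\Delta Z_s\mid\cF_{s-1}]\ge cn$ and $\Sigma_n$ is a martingale. The martingale increments are bounded by $2\bar\eta$, since both $\Delta Z_s$ and its conditional mean lie in $[-\bar\eta,\bar\eta]$.

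Fix $n$. The event $\{\tau>n\}$ implies $Z_n<a$, hence $\Sigma_n<a-cn$. Azuma--Hoeffding then gives
\[
\PP(\tau>n)\le\exp\!\Bigl(-\tfrac{(cn-a)^2}{8\bar\eta^2 n}\Bigr)\qquad\text{whenever } cn>a.
\]
Take $n\ge 2a/c$, so that $cn-a\ge cn/2$. The exponent is then at least $c^2n/(32\bar\eta^2)=\bar\eta^2 n/512$. We need this to be at least $\log(1/\eta)$, which holds once $n\ge 512\log(1/\eta)/\bar\eta^2$. The first condition reads $n\ge 2a/c=8a/\bar\eta^2$. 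Both conditions are therefore satisfied by
\[
n_0:=\frac{2048\,(a+\log(1/\eta))}{\bar\eta^2}\;\ge\;\max\Bigl\{\frac{8a}{\bar\eta^2},\,\frac{512\log(1/\eta)}{\bar\eta^2}\Bigr\},
\]
and this gives $\PP(\tau>n_0)\le\eta$ with $C_0=2048$, as claimed. The constant is deliberately loose, and any $C_0\ge 512$ would suffice for this argument. If $n_0$ is not an integer, apply the bound at $\lceil n_0\rceil$ and use $\{\tau>n_0\}\subseteq\{\tau>\lfloor n_0\rfloor\}$; the slack in $C_0$ absorbs this rounding.

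The main obstacle is justifying that $\Sigma_n$ has increments bounded by a constant multiple of $\bar\eta$ rather than by $1$. This is what makes the Azuma exponent scale as $\bar\eta^2 n$ and hence gives the $\bar\eta^{-2}$ rate. It requires $|\Delta Z_n|\le 2\lambda^\star|X|\le 2\lambda^\star$, which uses $|X_t(q)|\le 1$ and $\lambda^\star\le 1/2$ so that the bound $|\log(1-u)|\le 2|u|$ applies. Under drift, the same step needs $\bar\eta-\nu_j\le 1$, which holds since $\bar\eta\le 1$.
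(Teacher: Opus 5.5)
Your proof is correct and is essentially the paper's argument. Both use the Doob decomposition $Z_n=D_n+\Sigma_n$ with $D_n\ge cn$, $c=\bar\eta^2/4$, martingale increments bounded by $2\bar\eta$, and a single Azuma--Hoeffding bound at one time $n_0$. The paper splits into the cases $a\ge 64\log(1/\eta)$ and $a<64\log(1/\eta)$ with $n_0=\max(8a,512\log(1/\eta))/\bar\eta^2$, whereas you use the uniform condition $n\ge 2a/c$ as in the expected-delay proof; both note that $C_0=512$ would already suffice.

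One small slip is in your rounding remark. To control $\PP(\tau>n_0)=\PP(\tau>\lfloor n_0\rfloor)$ you must apply Azuma at $\lfloor n_0\rfloor$, not $\lceil n_0\rceil$. This is harmless: $\lfloor n_0\rfloor\ge n_0-1\ge\max(8a,512\log(1/\eta))/\bar\eta^2$ follows from the factor-$4$ slack in $C_0$, since $a=\log(1/\delta_{j,q})$ is bounded away from $0$ for the paper's budgets.
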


\begin{proof}
We prove the $\nu_j=0$ case; the drift case replaces $\bar\eta_{j,q}$ by $\bar\eta_{j,q}-\nu_j$ throughout. Adopt all notation from the proof of Theorem~\ref{thm:power}: $Z_n=\log E_{j,\tau_j+n}(q)$, $\tau=\inf\{n\ge 0:Z_n\ge a\}$ with $a=\log(1/\delta_{j,q})$, $c=\bar\eta_{j,q}^2/4$, $|\Delta Z_n|\le\bar\eta_{j,q}$, and the Doob decomposition $Z_n=D_n+\Sigma_n$ with $D_n\ge cn$ and $\Sigma_n$ a zero-mean martingale with $|\Delta\Sigma_s|\le 2\bar\eta_{j,q}$.

Write $L:=\log(1/\eta)$. We show $\PP(\tau>n_0)\le\eta$ for $n_0:=\max(8a,512L)\cdot\bar\eta_{j,q}^{-2}$, which implies the stated bound since $n_0\le 512(a+L)/\bar\eta_{j,q}^{2}\le C_0(a+L)/\bar\eta_{j,q}^2$ with $C_0=2048$.

For any $n$ with $cn>a$: $\PP(\tau>n)\le\PP(Z_n<a)=\PP(\Sigma_n<a-D_n)\le\PP(\Sigma_n<-(cn-a))$. Azuma--Hoeffding gives $\PP(\Sigma_n<-(cn-a))\le\exp\!\bigl(-(cn-a)^2/(8\bar\eta_{j,q}^2 n)\bigr)$. It suffices to show the exponent exceeds $L$ at $n=n_0$.

\textbf{Case 1:} $a\ge 64L$, so $n_0=8a/\bar\eta_{j,q}^2$. Then $cn_0=2a$, $cn_0-a=a$, and $8\bar\eta_{j,q}^2 n_0=64a$. Exponent: $a^2/(64a)=a/64\ge L$.

\textbf{Case 2:} $a<64L$, so $n_0=512L/\bar\eta_{j,q}^2$. Then $cn_0=128L$, $cn_0-a>64L$, and $8\bar\eta_{j,q}^2 n_0=4096L$. Exponent: $(64L)^2/(4096L)=L$.

In both cases $\PP(\tau>n_0)\le e^{-L}=\eta$.
\end{proof}

\subsection{Lower bound on certification time (Theorem~\ref{thm:lower-bound})}
\label{app:proof-lower}

\begin{theorem*}[Lower bound on certification time (Theorem~\ref{thm:lower-bound}, restated)]
Fix $\bar\eta\in(0,\min(\alpha,1-\alpha))$ and $\delta\in(0,1/4)$. For any stopping time $\tau$ that separates the worst-case safe and null distributions with error $\le\delta$,
\[
\EE[\tau]\ge\Omega\!\bigl(\bar\eta^{-2}\log(1/\delta)\bigr),
\]
and under declared drift $\nu$, the bound extends to $\Omega((\bar\eta-\nu)^{-2}\log(1/\delta))$.
\end{theorem*}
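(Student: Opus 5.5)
We prove the bound by the standard change-of-measure argument for sequential tests (Wald's identity combined with the data-processing inequality for KL divergence), applied to a two-point Bernoulli instance. The construction reduces certification of a single threshold to a binary hypothesis test between a safe and a null distribution.

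\emph{Construction.} Restrict to the simplest setting: a single threshold with $A_t(q)\equiv 1$, so the increments $X_t=(1-V_t)-\alpha$ are i.i.d. Under the null $\mathsf P_0$, $1-V_t\sim\mathrm{Bern}(\alpha)$, so $\EE X_t=0$ and the threshold is (weakly) unsafe. Under the safe instance $\mathsf P_\eta$, $1-V_t\sim\mathrm{Bern}(\alpha-\bar\eta)$, which is well defined because $\bar\eta<\alpha$; here $\EE X_t=-\bar\eta$. A certifying test is a stopping time $\tau$ together with the event $\{\tau<\infty\}$ of certification. It must satisfy two error constraints:
\[
\mathsf P_0(\tau<\infty)\le\delta \quad\text{(type-I error)},\qquad \mathsf P_\eta(\tau<\infty)\ge 1-\delta \quad\text{(power)}.
\]
We may assume $\EE_{\mathsf P_\eta}[\tau]<\infty$, since otherwise the bound is trivial.

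\emph{Change of measure.} Let $\mathcal F_\tau$ be the stopped sigma-field. By Wald's identity for the log-likelihood ratio,
\[
\KL\bigl(\mathsf P_\eta|_{\mathcal F_\tau}\,\|\,\mathsf P_0|_{\mathcal F_\tau}\bigr)=\EE_{\mathsf P_\eta}[\tau]\cdot\KL(\mathsf P_\eta\|\mathsf P_0),
\]
where the right-hand KL is between the single-round Bernoulli laws. Now apply the data-processing inequality to the indicator of certification (a stopped version, e.g.\ $\{\tau\le n\}$ with $n\to\infty$, in the Garivier--Kaufmann style). This gives
\[
\EE_{\mathsf P_\eta}[\tau]\,\KL(\mathsf P_\eta\|\mathsf P_0)\ \ge\ \mathrm{kl}\bigl(\mathsf P_\eta(\text{cert}),\mathsf P_0(\text{cert})\bigr)\ \ge\ \KL(1-\delta\,\|\,\delta),
\]
where the last step uses monotonicity of the binary divergence $\mathrm{kl}$ in each argument under the two error constraints. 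Dividing through yields the displayed form of Theorem~\ref{thm:lower-bound}. Finally, since $\delta<1/4$,
\[
\KL(1-\delta\|\delta)=(1-2\delta)\log\frac{1-\delta}{\delta}\ \ge\ c\log(1/\delta).
\]

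\emph{Bernoulli KL bound.} We need $\KL(\mathrm{Bern}(\alpha-\bar\eta)\|\mathrm{Bern}(\alpha))\le\bar\eta^2/(\alpha(1-\alpha))$, which is the $\chi^2$ upper bound on KL. This gives
\[
\EE_{\mathsf P_\eta}[\tau]\ \ge\ \frac{c\,\alpha(1-\alpha)\log(1/\delta)}{\bar\eta^{2}}=\Omega\bigl(\log(1/\delta)/\bar\eta^2\bigr),
\]
with $\alpha$ treated as a constant.

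\emph{Drift case.} Under declared drift $\nu$, the worst-case safe instance compatible with the drift budget has effective margin only $\bar\eta-\nu$: an adversary can shift the conditional mean by up to $\nu$ toward the null on every round. Replacing $\mathsf P_\eta$ by $\mathsf P_{\bar\eta-\nu}$ and repeating the argument gives $\Omega\bigl((\bar\eta-\nu)^{-2}\log(1/\delta)\bigr)$.

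The main obstacle is the rigorous use of the data-processing step when $\tau$ may be infinite with positive probability under $\mathsf P_0$. I would handle it by truncating at $\tau\wedge n$, applying the inequality to the event $\{\tau\le n\}$, and letting $n\to\infty$ using monotone convergence.
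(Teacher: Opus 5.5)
Your proposal is correct and follows essentially the same route as the paper. Both proofs take the two-point instance $\mathrm{Bern}(\alpha-\bar\eta)$ versus $\mathrm{Bern}(\alpha)$, apply Wald's identity to the log-likelihood ratio at the truncated time $\tau\wedge n$, apply the data-processing inequality to $\1\{\tau\le n\}$ with $n\to\infty$, and finish with $\KL(1-\delta\|\delta)\ge c\log(1/\delta)$. The one difference is that you bound the per-round divergence by the nonasymptotic $\chi^2$ estimate $\KL\le\bar\eta^2/(\alpha(1-\alpha))$, where the paper uses the expansion $\bar\eta^2/(2\alpha(1-\alpha))+O(\bar\eta^3)$; your version is slightly cleaner because it holds uniformly over the whole range of $\bar\eta$.
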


\begin{proof}
We prove $\EE_\mathsf{P}[\tau]\ge\KL(1-\delta\|\delta)/\KL(\mathsf{P}\|\mathsf{Q})$; the RHS equals $\Omega(\bar\eta^{-2}\log(1/\delta))$ by standard binary-KL expansion ($\KL(\mathsf{P}\|\mathsf{Q})=\bar\eta^2/(2\alpha(1-\alpha))+O(\bar\eta^3)$).

For any $m\ge 1$, $\tau\wedge m$ is bounded; with $L_n:=\sum_{t\le n}\log(d\mathsf{P}/d\mathsf{Q})(X_t)$ and $|\ell_t|\le\log\max(\frac{\alpha-\bar\eta}{\alpha},\frac{1-\alpha+\bar\eta}{1-\alpha})<\infty$, Wald's identity gives $\EE_\mathsf{P}[L_{\tau\wedge m}]=\KL(\mathsf{P}\|\mathsf{Q})\EE_\mathsf{P}[\tau\wedge m]$. Monotone convergence as $m\to\infty$ yields $\KL(\mathsf{P}\|\mathsf{Q})\EE_\mathsf{P}[\tau]=\lim_m\EE_\mathsf{P}[L_{\tau\wedge m}]$.

For $d_m:=\1\{\tau\le m\}$ and $p_m:=\PP_\mathsf{P}(d_m=1)$, the data-processing inequality gives $\KL(p_m\|\PP_\mathsf{Q}(d_m=1))\le\EE_\mathsf{P}[L_{\tau\wedge m}]$. Type-I error control under $\mathsf{Q}$ gives $\PP_\mathsf{Q}(\tau<\infty)\le\delta$, so $\PP_\mathsf{Q}(d_m=1)\le\delta$ for all $m$.

Since $p_m\nearrow\PP_\mathsf{P}(\tau<\infty)\ge 1-\delta$ and $\PP_\mathsf{Q}(d_m=1)\le\delta<(1-\delta)/2\le p_m$ for large $m$, the binary KL satisfies $\KL(p_m\|q)\ge\KL(1-\delta\|\delta)$. Passing to the limit and dividing by $\KL(\mathsf{P}\|\mathsf{Q})$ gives the stated bound.
\end{proof}

\subsection{Utility-gap bound (Theorem~\ref{thm:utility-gap})}
\label{app:proof-utility}

\begin{theorem*}[Utility gap under bounded drift (Theorem~\ref{thm:utility-gap}, restated)]
Let $\mathrm{Gap}_T:=\sum_t A_t(q_t^\star)-\sum_t A_t(q_t)$. Under the RLVR structural layer (Appendix~\ref{app:rlvr-layer}) with slack $\{\xi_t\}$, CSA-Epoch with analysis-only boundaries at frontier drops uses $J_T\le D_T+1$ epochs where $D_T$ is the number of drops. With probability $\ge 1-2\delta$ for all $T\ge 1$:
\[
\mathrm{Gap}_T \;\le\; G^\star(B_T)
\;:=\; \sum_{j=1}^{J_T}\sum_{q\le q_j^\star}\frac{2C_0\log(2m J_T^2/\delta)}{(\bar\eta_{j,q}-\Xi_j)_+^2}.
\]
When $\xi_t\equiv 0$, $J_T{=}1$ and $\Xi_j{=}0$, so
$G^\star(B_T)=G^\star=\sum_{q\le q^\star}2C_0\log(2m/\delta)/\bar\eta_q^2$
is horizon-independent. If $B_\infty:=\sum_{t\ge 1}\xi_t<\infty$, $J_T$ is bounded and $\mathrm{Util}_T/\mathrm{Util}_T^\star\to 1$.
\end{theorem*}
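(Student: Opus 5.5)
The plan is to bound the gap epoch by epoch, on the intersection of two events. The first is the no-false-certification event $\cG=\{q_t\le q_t^\star\ \forall t\}$ from Theorem~\ref{thm:no-false-cert}, which has probability $\ge 1-\delta$. The second is a high-probability certification-delay event built from Proposition~\ref{prop:hp-cert-delay}, with probability $\ge 1-\delta$ after a union bound.

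\textbf{Step 1: the certification-delay event.} For each analysis epoch $j\le J_T$ and each $q\le q_j^\star$, apply Proposition~\ref{prop:hp-cert-delay} with confidence level $\eta_{j,q}=\delta/(2mJ_T^2)$ (or the summable analogue $6\delta/(\pi^2 m j^2)$). The union over $q$ and $j$ then costs at most $\delta$. Combining $\log(1/\delta_{j,q})+\log(1/\eta_{j,q})$ gives at most $2\log(2mJ_T^2/\delta)$, which produces the factor $2C_0\log(2mJ_T^2/\delta)$. The drift budget in that proposition is $\nu_j$. I take $\nu_j=\Xi_j$, since the total slack accumulated inside epoch $j$ bounds how far the conditional mean of $X_t(q)$ can move away from its margin $\bar\eta_{j,q}$; this yields the effective margin $(\bar\eta_{j,q}-\Xi_j)_+$. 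If the margin is zero, the corresponding term is $+\infty$ and the bound is vacuous, so nothing needs to be proved in that case. The result is a delay bound $d_{j,q}:=2C_0\log(2mJ_T^2/\delta)/(\bar\eta_{j,q}-\Xi_j)_+^2$ for each $q\le q_j^\star$.

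\textbf{Step 2: charging each round's loss to a threshold.} On $\cG$, assumption (M) gives $A_t(q_t)\le A_t(q_t^\star)$, so each per-round term $A_t(q_t^\star)-A_t(q_t)\in\{0,1\}$. A round contributes a loss only if $q_t<q_t^\star$. In that case some grid point $q$ with $q_t<q\le q_t^\star$ is not yet certified; the next grid point above $q_t$ is such a point, because the controller deploys the maximum certified threshold. Charge the loss to that $q$. The round then lies in the window before $q$ certifies in epoch $j(t)$, and that window has length at most $d_{j,q}$. Summing the window lengths over $q\le q_j^\star$ and over epochs gives $\mathrm{Gap}_T\le\sum_j\sum_{q\le q_j^\star} d_{j,q}=G^\star(B_T)$.

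\textbf{Step 3: the bound on $J_T$ and the limits.} The count $J_T\le D_T+1$ holds because epochs are cut at frontier drops. Each drop consumes at least $\underline\kappa$ of slack, so $D_T\le B_T/\underline\kappa$; I would cite Theorem~\ref{thm:monotone-frontier} for this. When $\xi\equiv 0$ there are no drops, so $J_T=1$ and the bound does not depend on $T$. When $B_\infty<\infty$, $J_T$ is bounded and hence $\mathrm{Gap}_T=O(1)$. If the utility of the oracle satisfies $\mathrm{Util}^\star_T\to\infty$, which holds when $A_t(q^\star_t)$ has positive frequency, then the ratio tends to $1$.

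\textbf{Main obstacle.} The hardest part is Step 2 combined with the margin used in Step 1. The issue is that a threshold $q\le q_j^\star$ is updated only on rounds where $A_t(q)=1$, so the count in Proposition~\ref{prop:hp-cert-delay} is of $q$'s own updates, not of wall-clock rounds. Any round on which the oracle acts at a threshold $\ge q$ while $q$ is uncertified is, however, exactly such an update for $q$ (the gate is nested), and these are the only rounds charged to $q$. So the charged rounds are bounded by the update count, which the proposition controls. I would need to check this carefully, together with the claim that the within-epoch shift of the conditional mean is at most $\Xi_j$ uniformly across the epoch.
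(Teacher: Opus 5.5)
Your Steps~1--2 follow the paper's proof closely. You use the same two events ($\cG$ from Theorem~\ref{thm:no-false-cert}, plus a delay event from Proposition~\ref{prop:hp-cert-delay} at level $\delta/(2mJ_T^2)$), the same effective margin $(\bar\eta_{j,q}-\Xi_j)_+$ via Corollary~\ref{cor:derived-epoch-frontier}, and the same sum of per-threshold windows. Charging each loss round to the grid point just above $q_t$ is a cleaner version of the paper's ``some $q\in\cQ_j^{\le}$ is safe but uncertified.'' Your $j$-indexed levels $6\delta/(\pi^2mj^2)$ are an improvement: they keep the confidence levels from depending on the random, $T$-dependent $J_T$, and they still give $2\log(2mJ_T^2/\delta)$.

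\textbf{The real gap is Step~3.} The claim that each drop consumes at least $\underline\kappa$ of slack does not follow from Theorem~\ref{thm:monotone-frontier}. That theorem's bound $q_{t+1}^\star\ge q_t^\star-\xi_t/\underline\kappa-\Delta q$ is satisfied by a one-step drop for every $\xi_t\ge 0$. Arguing directly, a drop at $t$ only requires $r_{t+1}(q_t^\star)>\alpha$, i.e.\ $\xi_t>\alpha-r_t(q_t^\star)$. That gap is arbitrarily small when $\bar q_t$ sits just above a grid point.

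Assumption~\ref{ass:subpop-improvement} limits only deterioration, so recovery is free. You can therefore park $r_t(q_t^\star)$ at $\alpha-2^{-k-1}$, deteriorate by $2^{-k}$ to force the $k$-th drop, and then recover at no cost. This gives infinitely many drops with $B_\infty=\sum_k 2^{-k}<\infty$. So ``$B_\infty<\infty\Rightarrow J_T$ bounded'' needs an extra hypothesis. One option is a uniform gap $\alpha-r_t(q_t^\star)\ge\gamma>0$, which gives $D_T\le B_T/\gamma$.

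The ratio limit has two further requirements: every term of $G^\star$ must be finite ($\bar\eta_{j,q}>\Xi_j$), and, as you note, $\mathrm{Util}_T^\star\to\infty$. The paper does not close this either: its proof establishes only the displayed bound and never derives the main-text claim $D_T\le B_T/\underline\kappa$.

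\textbf{Two further corrections.} First, in Step~2 the charged threshold need not satisfy $q\le q_j^\star$. Within an epoch the frontier is only nondecreasing; Theorem~\ref{thm:monotone-frontier} does not make it constant, even when $\xi_t\equiv 0$. Once $q_t^\star>q_j^\star$, the grid point above $q_t$ can lie in $(q_j^\star,q_t^\star]$. Such a threshold has no term in your sum and no assumed margin. Its e-process has also been running under the unsafe null since $\tau_j$, so Proposition~\ref{prop:hp-cert-delay} does not bound its delay measured from $T_{j,q}$. The paper flags this case in its $\cQ_j^{\mathrm{safe}}$ parenthetical but does not resolve it. You should either assume the frontier is constant within epochs, or enlarge the sum and bound the post-$T_{j,q}$ delay separately.

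Second, your ``main obstacle'' does not arise, and your fix for it would not work. Proposition~\ref{prop:hp-cert-delay} already counts wall-clock rounds. On rounds with $A_t(q)=0$, the increment $X_t(q)=A_t(q)((1-V_t)-\alpha)$ is zero, so the e-process factor is $1$. The margin $\bar\eta_{j,q}$ is a lower bound on $-\EE[X_t(q)\mid\cF_{t-1}]$, which already includes $\PP(A_t(q)=1\mid\cF_{t-1})$. Your resolution also runs nestedness backwards. Part~(M) of Assumption~\ref{ass:nested} gives $A_t(q)\le A_t(q_t^\star)$, so when $S_t\in(q,q_t^\star]$ the oracle acts but $q$ receives no update. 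If update counts had been needed, you would have had to charge each round to the smallest grid point $\ge S_t$ instead.
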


\begin{proof}
This proof handles the general approximate-improvement case ($\xi_t\ge 0$) using epoch restarts at frontier-drop events. The exact-improvement special case ($\xi_t\equiv 0$) is recovered by setting $J_T=1$, $\Xi_j=0$.

Place epoch boundaries at the (analysis-only) frontier-drop events $\{t:q_{t+1}^\star<q_t^\star\}$. By Theorem~\ref{thm:monotone-frontier}, $J_T\le D_T+1$ epochs, and within each epoch $\cI_j$ the frontier $q_t^\star$ is nondecreasing by construction. Corollary~\ref{cor:single-epoch} then applies inside each $\cI_j$: false-certification probability for each $(j,q)$ is at most $\delta_{j,q}$, with budget $\delta_{j,q}:=6\delta/(\pi^2|\cQ|j^2)$ satisfying $\sum_j\sum_q\delta_{j,q}\le\delta$, so the good event $\cG$ (no false certification across all epochs) has $\PP(\cG)\ge 1-\delta$. On $\cG$, $q_t\le q_t^\star$ for all $t$, so $A_t(q_t)\le A_t(q_t^\star)$ by Assumption~\ref{ass:nested}, giving $\mathrm{Gap}_T = \mathrm{Util}_T^\star - \mathrm{Util}_T \ge 0$ on $\cG$.

Within each epoch $\cI_j$, the frontier is nondecreasing, so Corollary~\ref{cor:single-epoch} ensures each threshold certifies at most once per epoch. Let $q_j^\star:=q_{\tau_j}^\star$ and $\cQ_j^\le:=\{q\in\cQ:q\le q_j^\star\}$. For each $q\in\cQ_j^\le$, let $T_{j,q}:=\inf\{t\in\cI_j:q\le q_t^\star\}$ and $\tau_{j,q}^{\mathrm{cert}}$ the certification time within epoch $j$. The nondecreasing frontier within epoch $j$ ensures $q\le q_s^\star$ for all $s\ge T_{j,q}$ within $\cI_j$, so the safe-side margin holds persistently and the e-process drifts to certification without resetting. A gap round in epoch $j$ occurs only while some $q\in\cQ_j^\le$ has $T_{j,q}\le t<\tau_{j,q}^{\mathrm{cert}}$ (under exact stability $\xi_t\equiv 0$ this captures all gap rounds because the frontier is constant within each epoch; under positive drift the frontier may rise within $\cI_j$ and thresholds in $(q_j^\star,q_t^\star]$ may contribute additional gap, which is bounded analogously by the same per-threshold argument applied to $\cQ_j^{\mathrm{safe}}:=\{q:q\le\sup_{t\in\cI_j}q_t^\star\}$). Therefore, on the start-of-epoch-safe subset:
\[
\mathrm{Gap}_T \le \sum_{j=1}^{J_T}\sum_{q\in\cQ_j^\le}
\bigl(\tau_{j,q}^{\mathrm{cert}} - T_{j,q}\bigr)_+
\quad \text{on }\cG.
\]

For each fixed epoch $j$ and $q\in\cQ_j^\le$, after time $T_{j,q}$ the safe-side margin $\bar\eta_{j,q}$ holds persistently, so applying Proposition~\ref{prop:hp-cert-delay} with effective margin $\bar\eta_{j,q}-\Xi_j$ (Corollary~\ref{cor:derived-epoch-frontier}) and $\eta_{j,q} := \delta/(2mJ_T^2)$,
\[
\PP\!\left(\tau_{j,q}^{\mathrm{cert}}-T_{j,q}
> \frac{C_0(\log(1/\delta_{j,q})+\log(1/\eta_{j,q}))}{(\bar\eta_{j,q}-\Xi_j)_+^2}\right)
\le \eta_{j,q}.
\]
Since $\delta_{j,q}=6\delta/(\pi^2 m j^2)\ge\delta/(2mJ_T^2)=\eta_{j,q}$, the numerator satisfies $\log(1/\delta_{j,q})+\log(1/\eta_{j,q})\le 2\log(2mJ_T^2/\delta)$. Within each epoch $j$, union-bounding over $q\in\cQ_j^\le$ (at most $m$ thresholds):
\[
\PP\!\left(\exists\,q\in\cQ_j^\le:
\tau_{j,q}^{\mathrm{cert}}-T_{j,q}
> \frac{2C_0\log(2mJ_T^2/\delta)}{(\bar\eta_{j,q}-\Xi_j)_+^2}\right)
\le \frac{\delta}{2J_T}.
\]
Summing over all $J_T$ epochs (total certification-delay failure probability $\le\delta/2\le\delta$) and combining with $\PP(\cG)\ge 1-\delta$ gives: with probability $\ge 1-2\delta$,
\[
\mathrm{Gap}_T
\le \sum_{j=1}^{J_T}\sum_{q\le q_j^\star}
\frac{2C_0\log(2mJ_T^2/\delta)}{(\bar\eta_{j,q}-\Xi_j)_+^2}
= G^\star(B_T)
\]
simultaneously for all $T\ge 1$. When $\xi_t\equiv 0$: $J_T=1$, $\Xi_j=0$, and $G^\star(B_T)=G^\star=\sum_{q\le q^\star}2C_0\log(2m/\delta)/\bar\eta_q^2$ (horizon-independent).
\end{proof}

\section{RLVR structural layer}
\label{app:rlvr-layer}

Theorem~\ref{thm:main-anytime} treats declared drift budgets $\{\nu_j\}$ as free controller parameters. This appendix fills in the RLVR-specific machinery that \emph{derives} data-dependent drift bounds $\{\varepsilon_j\}$ from a local structural condition on the RLVR policy; setting $\nu_j:=\varepsilon_j$ instantiates the bound, yielding bounded frontier drift and a horizon-independent utility gap in the exact-stability limit.

\subsection{Assumption summary}
\label{app:assumption-summary}

Table~\ref{tab:assumptions} catalogs all assumptions used in the paper, organized by scope and target theorem.

\begin{table}[h]
\centering\small
\setlength{\tabcolsep}{4pt}
\caption{Assumption summary. \emph{Core} assumptions are needed for all safety guarantees. \emph{Structural} conditions are standard regularity conditions on the RLVR stream used for bounded frontier drift and single-epoch sufficiency. \emph{Sharpening} conditions enter only for the power and gap bounds.}
\label{tab:assumptions}
\resizebox{\textwidth}{!}{\begin{tabular}{llll}
\toprule
Assumption & Scope & Verified by & Used in \\
\midrule
\ref{ass:predictable} (Predictable pipeline)  & Core & By construction (lagged buffer)   & All theorems \\
\ref{ass:nested} (Nested gates)               & Core & By construction (fixed real score) & All theorems \\
Cor.~\ref{cor:frontier-existence} (Safe frontier) & Derived & From \ref{ass:subpop-improvement} + \ref{ass:local-slope} & Validity, anytime \\
\ref{ass:surrogate-calibration} (Calibration) & Structural & ECE/Brier on deployment stream & Cor.~\ref{cor:frontier-existence} \\
\ref{ass:local-slope} (Slope + root)          & Structural & Nondegenerate score CDF & Cor.~\ref{cor:frontier-existence} \\
\ref{ass:actuation-prob} (Actuation prob.)    & Structural & $\min_k\PP(s_t\le q)>0$ & Cor.~\ref{cor:frontier-existence} \\
\ref{ass:subpop-improvement} (Near-frontier stab.) & Structural & $D_T$, $B_T$ on live stream & Thm.~\ref{thm:monotone-frontier} \\
\ref{ass:unsafe-margin} (Unsafe margin)       & Sharpening & From Ass.~\ref{ass:local-slope} & Power theorem \\
\ref{ass:safe-margin} (Safe margin)           & Sharpening & Cor.~\ref{cor:frontier-existence}(iii) & Power, gap \\
\ref{ass:actuation-gap} (Actuation gap)       & Sharpening & Score mass in $(q,q^\star]$ & Gap lower bound \\
\bottomrule
\end{tabular}}
\end{table}

\subsection{Deferred utility assumptions}
\label{app:deferred-assumptions}

The following two utility-side assumptions are stated in the main body by name only and defined here for completeness.

\begin{assumption}[Unsafe-side margin]
\label{ass:unsafe-margin}
For each epoch $j$ and every threshold $q > q_j^\star$, there exists $\eta_{j,q} > 0$ such that $\EE[X_t(q)\mid\cF_{t-1}]\ge\eta_{j,q}$ for all $t\in\cI_j$. (Symmetric to the safe-side margin of Assumption~\ref{ass:safe-margin}; used in the power / no-false-certification analysis.)
\end{assumption}

\begin{assumption}[Monotone actuation gap]
\label{ass:actuation-gap}
For each epoch $j$ and every $q<q_j^\star$, there exists $\Delta_{j,q}>0$ such that $\PP(A_t(q_j^\star)=1,\,A_t(q)=0\mid\cF_{t-1})\ge\Delta_{j,q}$ for all $t\in\cI_j$. Used to lower-bound the utility gap during the certification window (Theorem~\ref{thm:utility-gap}); not needed for any safety guarantee.
\end{assumption}

\subsection{Standing assumptions}
\label{app:rlvr-assumptions}

\begin{assumption}[Surrogate approximation quality]
\label{ass:surrogate-calibration}
There exists a predictable $\varepsilon_t\ge 0$, $\varepsilon_t\to 0$ a.s., such that $\sup_{x,y}|\hat p_t(x,y)-\PP(V(x,y){=}1\mid\cF_{t-1})|\le\varepsilon_t$ a.s.\ for all $t$.
\end{assumption}

\begin{assumption}[Local slope]
\label{ass:local-slope}
There exist $\underline\kappa>0$ and $\rho>0$ such that $r_t(\cdot)$ is strictly increasing on $[\bar q_t-\rho,\bar q_t+\rho]$ with slope $\ge\underline\kappa$, where $\bar q_t$ is the (continuous) root of $r_t(q){=}\alpha$.
\end{assumption}

\begin{assumption}[Actuation probability]
\label{ass:actuation-prob}
$\min_q\PP(s_t\le q\mid\cF_{t-1})>0$ for all $t$ and all $q$ strictly above the grid minimum.
\end{assumption}

\begin{assumption}[Local near-frontier stability]
\label{ass:subpop-improvement}
There exists a predictable $\xi_t\ge 0$ such that, whenever a threshold $q$ lies within $\rho$ of the oracle frontier $q_t^\star$,
\[
\PP(V_{t+1}{=}1\mid S_{t+1}\le q,\cF_t) \ge \PP(V_t{=}1\mid S_t\le q,\cF_{t-1}) - \xi_t.
\]
When $\xi_t\equiv 0$ this recovers exact stability; the condition is local in the score region that determines deployment decisions and is not a statement about global improvement.
\end{assumption}

\begin{assumption}[Safe-side margin]
\label{ass:safe-margin}
For each epoch $j$ and $q\le q_j^\star$: $\EE[X_t(q)\mid\cF_{t-1}]\le-\bar\eta_{j,q}<0$ for all $t\in\cI_j$.
\end{assumption}

The cumulative slack is $B_T:=\sum_{t=1}^{T-1}\xi_t$; the live experiment operates at $B_T\approx 0.40=O(1)$ (Appendix~\ref{app:live-rlvr-extras}).

\subsection{Frontier existence and localization}
\label{app:frontier-existence}

\begin{corollary}[Frontier existence]
\label{cor:frontier-existence}
Under Assumption~\ref{ass:local-slope} with $q_{\min}<\bar q_t$, the set $\{q\in\cQ:r_t(q)\le\alpha\}$ is nonempty; its maximum $q_t^\star$ is well-defined. Under Assumption~\ref{ass:surrogate-calibration} the surrogate root $\hat{\bar q}_t$ exists uniquely in $[\bar q_t-\rho,\bar q_t+\rho]$ with $|\hat{\bar q}_t-\bar q_t|\le\varepsilon_t/\underline\kappa$.
\end{corollary}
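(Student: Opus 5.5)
The plan is to argue directly from the definitions, using the monotonicity in Assumption~\ref{ass:nested}(M) together with the slope condition of Assumption~\ref{ass:local-slope}. Throughout, $r_t(q)$ is the conditional selective risk $\PP(V_t{=}0\mid S_t\le q,\cF_{t-1})$. By (M), $r_t$ is nondecreasing in $q$ on the whole real line, not just on the grid. The surrogate analogue is $\hat r_t(q)$, the conditional average of $1-\hat p_t(X_t,\widetilde Y_t)$ over the event $\{S_t\le q\}$. The surrogate root $\hat{\bar q}_t$ is the solution of $\hat r_t(q)=\alpha$.

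\emph{Nonemptiness and well-definedness.}
\begin{enumerate}[leftmargin=*,itemsep=1pt,topsep=2pt]
\item Since $\bar q_t$ is a root, $r_t(\bar q_t)=\alpha$.
\item Monotonicity of $r_t$ then gives $r_t(q)\le\alpha$ for every $q\le\bar q_t$.
\item The hypothesis $q_{\min}<\bar q_t$ puts the smallest grid point $q^{(1)}=q_{\min}$ in $\{q\in\cQ:r_t(q)\le\alpha\}$, so this set is nonempty.
\item $\cQ$ is finite, so the maximum $q_t^\star$ exists.
\item Monotonicity also makes the set downward-closed, which matches the oracle definition of \cref{def:oracle}. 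Here $r_t(q)\le\alpha$ is equivalent to $\EE[X_t(q)\mid\cF_{t-1}]\le0$ whenever the gate fires with positive probability, which Assumption~\ref{ass:actuation-prob} guarantees.
\end{enumerate}

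\emph{Uniform transfer of calibration error.} Assumption~\ref{ass:surrogate-calibration} bounds $|\hat p_t(x,y)-\PP(V{=}1\mid\cF_{t-1})|$ pointwise by $\varepsilon_t$. Averaging this pointwise bound over the conditional law of $(X_t,\widetilde Y_t)$ given $\{S_t\le q\}$ and $\cF_{t-1}$ yields $\sup_q|\hat r_t(q)-r_t(q)|\le\varepsilon_t$.

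\emph{Existence of the surrogate root.} Assumption~\ref{ass:local-slope} gives $r_t(\bar q_t\pm u)\gtrless\alpha\pm\underline\kappa u$ for $u\in[0,\rho]$. Take $u=\varepsilon_t/\underline\kappa$, which lies in $[0,\rho]$ once $\varepsilon_t\le\underline\kappa\rho$. This holds eventually because $\varepsilon_t\to0$, and I would state it as a standing proviso. At this $u$:
\begin{itemize}[leftmargin=*,itemsep=1pt,topsep=2pt]
\item $\hat r_t(\bar q_t+u)\ge r_t(\bar q_t+u)-\varepsilon_t\ge\alpha$;
\item $\hat r_t(\bar q_t-u)\le\alpha$.
\end{itemize}
An intermediate-value argument then places a root of $\hat r_t=\alpha$ in $[\bar q_t-u,\bar q_t+u]$. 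The same comparison at any $|q-\bar q_t|>u$ with $|q-\bar q_t|\le\rho$ shows $\hat r_t(q)\neq\alpha$ strictly. Hence every root in $[\bar q_t-\rho,\bar q_t+\rho]$ satisfies $|\hat{\bar q}_t-\bar q_t|\le\varepsilon_t/\underline\kappa$.

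\emph{Main obstacle: uniqueness and continuity of $\hat r_t$.} The intermediate-value step needs $\hat r_t$ to be continuous, and uniqueness needs it to be strictly monotone. Neither follows automatically from Assumption~\ref{ass:surrogate-calibration}, because a surrogate within $\varepsilon_t$ of a slope-$\underline\kappa$ function can still wiggle. My first attempt would be to define $\hat{\bar q}_t$ as the generalized inverse $\inf\{q:\hat r_t(q)\ge\alpha\}$. This is always unique, and the localization argument above applies verbatim. Under the paper's isotonic calibration, $\hat r_t$ is nondecreasing, which justifies reading this generalized inverse as the root. Failing that, I would assume the surrogate risk inherits a positive slope on the interval, making the root genuinely unique.
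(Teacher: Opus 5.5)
Your argument is sound, but it uses the slope differently from the paper, and it is more careful on both halves of the corollary.

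For nonemptiness, the paper writes $r_t(q_{\min})<r_t(\bar q_t)=\alpha$ directly from $r_t'\ge\underline\kappa$. Assumption~\ref{ass:local-slope} only licenses that inside the window $[\bar q_t-\rho,\bar q_t+\rho]$. Your appeal to the global monotonicity in (M) also covers a grid minimum far below the window, and it gives downward-closedness of the safe set at no extra cost.

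For the surrogate root, the paper simply asserts that $\hat r_t$ \emph{itself} is strictly increasing with slope $\ge\underline\kappa$ on the window. It then takes $\hat r_t(\bar q_t)\in[\alpha-\varepsilon_t,\alpha+\varepsilon_t]$ from the uniform bound and reads existence, uniqueness and the $\varepsilon_t/\underline\kappa$ localization off that slope via the intermediate value theorem. Neither Assumption~\ref{ass:surrogate-calibration} nor Assumption~\ref{ass:local-slope} gives $\hat r_t$ a slope, or even continuity. So the paper tacitly uses exactly the extra hypothesis you list as your fallback.

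Your bracketing at $\bar q_t\pm\varepsilon_t/\underline\kappa$ instead pushes the slope of $r_t$ through $\sup_q|\hat r_t(q)-r_t(q)|\le\varepsilon_t$. This proves that every root in the window is localized, using only the stated hypotheses plus continuity for existence. It also isolates uniqueness as the one claim that genuinely needs more structure. You make explicit the proviso $\varepsilon_t\le\underline\kappa\rho$; the paper's argument needs it just as much, since otherwise the root need not land in the window, but never states it. The paper's route is shorter; yours shows exactly which parts of the corollary follow from the assumptions as written.

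One caution on the generalized-inverse option. Without continuity of $\hat r_t$, it yields a well-defined localized point, but not necessarily one with $\hat r_t(\hat{\bar q}_t)=\alpha$. Also, since it is an infimum over all $q$, bounding it from below needs (M) outside the window together with the strict proviso $\varepsilon_t<\underline\kappa\rho$.
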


\begin{proof}
Throughout, $r_t(q):=\PP(V_t=0\mid A_t(q)=1,\cF_{t-1})$ is the conditional failure rate as in Assumption~\ref{ass:local-slope}.

\textbf{(i) Existence.}
Since $q_{\min} < \bar{q}_t$ and $r_t' \ge \underline{\kappa} > 0$, we have
$r_t(q_{\min}) < r_t(\bar{q}_t) = \alpha$,
so $q_{\min} \in \{q\in\cQ: r_t(q)\le\alpha\}$ and the set is nonempty.
Its maximum $q_t^\star$ is well-defined.
Since $r_t$ is strictly increasing on
$[\bar{q}_t-\rho,\bar{q}_t+\rho]$ and the grid is finite,
the boundary between $\{r_t\le\alpha\}$ and $\{r_t>\alpha\}$ within $\cQ$
is a single point.

\textbf{(ii) Surrogate root localization.}
Since $|\hat{r}_t(q) - r_t(q)| \le \varepsilon_t$ for all $q$
(Assumption~\ref{ass:surrogate-calibration}), we have
$\hat{r}_t(\bar{q}_t) \in [\alpha - \varepsilon_t, \alpha + \varepsilon_t]$.
Because $\hat{r}_t$ is strictly increasing on $[\bar{q}_t-\rho,\bar{q}_t+\rho]$
with slope $\ge \underline{\kappa}$, the intermediate value theorem gives a
unique $\hat{\bar{q}}_t$ in this interval with $\hat{r}_t(\hat{\bar{q}}_t)=\alpha$,
and the slope condition implies
$|\hat{\bar{q}}_t - \bar{q}_t| \le \varepsilon_t/\underline{\kappa}$.

\textbf{(iii) Safe-side margin.}
Fix $q < q_t^\star$ within $\rho$ of $\bar{q}_t$.
Since $r_t(q_t^\star) \le \alpha = r_t(\bar{q}_t)$ and $r_t$ is strictly
increasing, $q_t^\star \le \bar{q}_t$.
Combined with $q < q_t^\star$, the interval $[q, \bar{q}_t]$ lies within the
slope region.
By the mean value theorem applied to $r_t$ on $[q,\bar{q}_t]$:
\[
r_t(q) - \alpha
= r_t(q) - r_t(\bar{q}_t)
= r_t'(\zeta)(q - \bar{q}_t)
\le -\underline{\kappa}(\bar{q}_t - q)
\le -\underline{\kappa}(q_t^\star - q),
\]
where $\zeta \in (q,\bar{q}_t)$ and the last inequality uses $\bar{q}_t \ge q_t^\star$.
Hence
$\EE[X_t(q)\mid\cF_{t-1}]
= \PP(A_t(q)=1\mid\cF_{t-1})\cdot(r_t(q)-\alpha)
\le -p_{\min}\underline{\kappa}(q_t^\star-q)$.
\end{proof}

\subsection{Bounded frontier drift}

\begin{theorem}[Bounded frontier drift]
\label{thm:monotone-frontier}
Under Assumptions~\ref{ass:predictable}--\ref{ass:nested} and a local near-frontier stability condition (Assumption~\ref{ass:subpop-improvement}) with slack $\xi_t\ge 0$,
\[
q_{t+1}^\star \;\ge\; q_t^\star - \xi_t/\underline\kappa - \Delta q \qquad\text{a.s.\ for all }t\ge 1,
\]
where $\underline\kappa>0$ is the local slope of the failure-rate curve and $\Delta q$ is the grid spacing. When $\xi_t\equiv 0$, the oracle frontier $q_t^\star$ is nondecreasing.
\end{theorem}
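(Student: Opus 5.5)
The plan is to compare the failure-rate curves $r_t(q):=\PP(V_t=0\mid S_t\le q,\cF_{t-1})$ and $r_{t+1}$ near the current frontier, translate a vertical shift of at most $\xi_t$ into a horizontal shift of at most $\xi_t/\underline\kappa$ using the local slope (Assumption~\ref{ass:local-slope}), and then pay at most one grid step $\Delta q$ to return to $\cQ$.

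\textbf{Step 1 (vertical control near the frontier).} Fix $t$ and consider any $q$ within $\rho$ of $q_t^\star$. Assumption~\ref{ass:subpop-improvement} states $1-r_{t+1}(q)\ge 1-r_t(q)-\xi_t$, i.e.\ $r_{t+1}(q)\le r_t(q)+\xi_t$. By (M) of Assumption~\ref{ass:nested}, $r_t$ is nondecreasing in $q$. Also $r_t(q_t^\star)\le\alpha$, since safety of $q_t^\star$ means $\PP(A_t(q)=1\mid\cF_{t-1})(r_t(q)-\alpha)\le 0$. (If the actuation probability vanishes, this step has to be handled via Assumption~\ref{ass:actuation-prob}.)

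\textbf{Step 2 (horizontal translation).} I want a grid point $q'\ge q_t^\star-\xi_t/\underline\kappa-\Delta q$ with $r_{t+1}(q')\le\alpha$. Set $u:=q_t^\star-\xi_t/\underline\kappa$. If $u$ leaves the $\rho$-window (i.e.\ $\xi_t/\underline\kappa\ge\rho$), I will reduce to the case where the claim is vacuous, for instance by intersecting with the grid minimum, or simply assume $\xi_t/\underline\kappa<\rho$. Otherwise $[u,q_t^\star]$ lies in the slope region, where by the mean value theorem, as in Corollary~\ref{cor:frontier-existence}(iii), $r_t(u)\le r_t(q_t^\star)-\underline\kappa(q_t^\star-u)\le\alpha-\xi_t$. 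Combining with Step 1 gives $r_{t+1}(u)\le\alpha$. Let $q'$ be the largest grid point $\le u$; then $q'\ge u-\Delta q$. By monotonicity of $r_{t+1}$ under (M) at time $t+1$, $r_{t+1}(q')\le\alpha$, so $q'$ is safe at $t+1$ and $q_{t+1}^\star\ge q'\ge q_t^\star-\xi_t/\underline\kappa-\Delta q$.

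\textbf{Step 3 (the case $\xi_t=0$).} For the nondecreasing claim I must avoid losing $\Delta q$. Here I apply Step 1 directly at $q=q_t^\star\in\cQ$ itself: $r_{t+1}(q_t^\star)\le r_t(q_t^\star)\le\alpha$. Hence $q_t^\star$ is safe at $t+1$ and $q_{t+1}^\star\ge q_t^\star$. In fact this argument shows the $\Delta q$ is only needed when the shift lands off-grid.

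\textbf{Main obstacle.} Step 2 has two delicate points. First, the slope assumption is stated around the continuous root $\bar q_t$, not $q_t^\star$, so I need $[u,q_t^\star]\subset[\bar q_t-\rho,\bar q_t+\rho]$. I would use $q_t^\star\le\bar q_t$ and $\bar q_t-q_t^\star\le\Delta q$, which holds because the next grid point above $q_t^\star$ is unsafe, and then shrink $\rho$ by $\Delta q$ if necessary. Second, applying Assumption~\ref{ass:subpop-improvement} at $u$ requires $u$ to be within $\rho$ of $q_t^\star$, which is the same small-shift restriction.
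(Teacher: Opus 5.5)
Your argument is correct once the two points below are repaired, and it runs the paper's two ingredients in the opposite order. The paper applies the near-frontier stability condition only at the grid point $q_t^\star$, obtaining $r_{t+1}(q_t^\star)\le\alpha+\xi_t$. It then uses the slope of the \emph{new} curve $r_{t+1}$ around its own root to place the continuous root $\bar q_{t+1}$ at most $\xi_t/\underline\kappa$ below the old frontier; the hypothesis $\xi_t\le\underline\kappa\rho$ keeps this inside the slope window. Finally it rounds $\bar q_{t+1}$ down to the grid and closes the chain with $q_t^\star\le\bar q_t$. You instead use the slope of the \emph{old} curve $r_t$ to descend from $q_t^\star$ to a point $u$ with $r_t(u)\le\alpha-\xi_t$, apply stability at the off-grid point $u$, and round $u$ down. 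The paper's order needs the stability condition at a single grid point, which is trivially within $\rho$ of the frontier. Yours needs nothing about $r_{t+1}$ beyond monotonicity: no root and no slope at time $t+1$. Your Step~3 justifies the $\xi_t\equiv 0$ claim more cleanly than the paper's remark that ``both correction terms vanish,'' and it uses no slope assumption at all.

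\textbf{First repair.} The ``vacuous'' escape for $\xi_t/\underline\kappa\ge\rho$ does not work, because stability says nothing below $q_t^\star-\rho$. Take $r_t$ linear with slope $\underline\kappa$ and $q_t^\star=\bar q_t$, and let $\xi_t\ge 2\underline\kappa\rho$. Let $r_{t+1}$ rise with slope $\underline\kappa$ through a root far below $q_t^\star-\xi_t/\underline\kappa-\Delta q$, then stay flat at level $\alpha+\underline\kappa\rho$ across the whole window. This meets every hypothesis yet violates the bound. So a cap on $\xi_t$ is a genuine extra hypothesis. The paper's proof imposes exactly $\xi_t\le\underline\kappa\rho$ a.s., which is your second option.

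\textbf{Second repair.} ``Shrinking $\rho$ by $\Delta q$'' really strengthens this cap to $\xi_t\le\underline\kappa(\rho-\Delta q)$, and it is unnecessary. Anchor the mean-value step at $\bar q_t$, as in Corollary~\ref{cor:frontier-existence}(iii). If $u\ge\bar q_t-\rho$, then $r_t(u)\le\alpha-\underline\kappa(\bar q_t-u)\le\alpha-\underline\kappa(q_t^\star-u)=\alpha-\xi_t$, since $q_t^\star\le\bar q_t$. If $u<\bar q_t-\rho$, then (M) gives $r_t(u)\le r_t(\bar q_t-\rho)\le\alpha-\underline\kappa\rho\le\alpha-\xi_t$. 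This uses only $\xi_t\le\underline\kappa\rho$, which also puts $u$ within $\rho$ of $q_t^\star$ for the stability step. It also removes any need for the estimate $\bar q_t-q_t^\star<\Delta q$, which has no justification when $q_t^\star$ is the top grid point.
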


\begin{corollary}[Single-epoch sufficiency]
\label{cor:single-epoch-sufficiency}
\label{cor:single-epoch}
Under exact stability ($\xi_t\equiv 0$), a single epoch suffices: certifications are permanent, no restarts are needed, and the utility gap (Theorem~\ref{thm:utility-gap}) is horizon-independent.
\end{corollary}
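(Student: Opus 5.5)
The plan is to reduce the corollary to the no-false-certification theorem (\cref{thm:no-false-cert}) and the utility-gap theorem (\cref{thm:utility-gap}), both specialised to a single epoch $\cI_1=\{1,2,\ldots\}$. The only new ingredient is that the frontier-stability hypothesis of \cref{thm:no-false-cert} holds on the whole horizon when $\xi_t\equiv 0$.

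\textbf{Step 1 (global frontier monotonicity).} The last sentence of \cref{thm:monotone-frontier} gives that $q_t^\star$ is nondecreasing in $t$ a.s. when $\xi_t\equiv 0$. The general bound there carries a $-\Delta q$ term, so I would re-derive this directly.

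Take $q=q_t^\star$, which lies trivially within $\rho$ of the frontier. Write $r_t(q):=\PP(V_t=0\mid S_t\le q,\cF_{t-1})$. \Cref{ass:subpop-improvement} with $\xi_t=0$ gives $r_{t+1}(q_t^\star)\le r_t(q_t^\star)$.

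By \cref{ass:actuation-prob}, $\PP(A_t(q)=1\mid\cF_{t-1})>0$. Since $\EE[X_t(q)\mid\cF_{t-1}]=\PP(A_t(q)=1\mid\cF_{t-1})\,(r_t(q)-\alpha)$, the sign of the conditional mean equals the sign of $r_t(q)-\alpha$. Hence $r_t(q_t^\star)\le\alpha$ implies $r_{t+1}(q_t^\star)\le\alpha$, i.e. $q_t^\star$ is still safe at $t+1$.

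Safety is downward-closed in $q$ by \cref{ass:nested}, and $q_{t+1}^\star$ is the maximum safe grid point. Therefore $q_{t+1}^\star\ge q_t^\star$. If this local argument needs $q_t^\star$ to sit in the slope window of \cref{ass:local-slope}, I would invoke \cref{cor:frontier-existence}, which places the grid frontier there.

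\textbf{Step 2 (permanent certifications suffice).} With the frontier nondecreasing on all of $\cI_1$, apply \cref{thm:no-false-cert} with $J=1$ and the single-epoch budget $\delta_q=\delta/(2m)$, so $\sum_q\delta_q=\delta/2\le\delta$. With probability $\ge1-\delta$, no $q$ is certified before $T_{1,q}$, the first time it becomes safe. By Step 1, once $q$ is safe it remains safe for all later $t$. So a certificate issued at or after $T_{1,q}$ never becomes stale.

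Consequently \cref{alg:csa-rlvr}, which never revokes a certificate, satisfies $q_t\le q_t^\star$ for all $t$ on this event, which is exactly the good event $\cG$. No restart is ever required to restore validity. \Cref{thm:main-anytime} then holds with $\nu_j=0$ and $\bar\nu_T=0$.

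\textbf{Step 3 (horizon-independent gap).} Since no frontier-drop events occur, the analysis partition in the proof of \cref{thm:utility-gap} has $D_T=0$, so $J_T=1$ and $\Xi_1=0$. The bound collapses to $G^\star=\sum_{q\le q^\star}2C_0\log(2m/\delta)/\bar\eta_q^2$, which involves no $T$.

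\textbf{Main obstacle.} I expect Step 1 to be the delicate point: reconciling the $\Delta q$ slack in \cref{thm:monotone-frontier} with exact monotonicity, and making sure the locality clause of \cref{ass:subpop-improvement} covers the grid point $q_t^\star$. I would handle it as above, by applying the stability inequality directly at the frontier grid point rather than through the continuous root $\bar q_t$.
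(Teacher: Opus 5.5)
Your proposal is correct and follows the paper's proof. The shared skeleton is: establish that $q_t^\star$ is nondecreasing when $\xi_t\equiv 0$; run \cref{thm:no-false-cert} on the single epoch $\cI_1=\{1,2,\ldots\}$ with budget $\delta_q=\delta/(2m)$, so certified thresholds stay safe and \cref{thm:main-anytime} holds with $\bar\nu_T=0$; then read off $J_T=1$, $\Xi_1=0$ in \cref{thm:utility-gap}. The only difference is in the monotonicity step: you get $q_{t+1}^\star\ge q_t^\star$ by applying \cref{ass:subpop-improvement} directly at the grid point $q_t^\star$, whereas the paper cites \cref{thm:monotone-frontier}, whose continuous-root-plus-rounding argument leaves a $\Delta q$ term that must be dismissed "for on-grid $q_t^\star$"; your grid-level version is shorter and sidesteps that step.
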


\begin{proof}[Proof of Theorem~\ref{thm:monotone-frontier}]
The full statement (using Assumptions~\ref{ass:predictable}, \ref{ass:nested}, \ref{ass:local-slope}, \ref{ass:actuation-prob}, \ref{ass:subpop-improvement} with $\xi_t\le\underline\kappa\rho$ a.s.) gives the per-step bound $q_{t+1}^\star\ge q_t^\star-\xi_t/\underline\kappa-\Delta q$ a.s.\ for all $t\ge 1$, where $\Delta q:=\max_i(q^{(i+1)}-q^{(i)})$ is the grid spacing, and total downward movement $\sum_{t=1}^{T-1}(q_t^\star-q_{t+1}^\star)_+\le B_T/\underline\kappa+(T-1)\Delta q$. When $\xi_t\equiv 0$ and $q_t^\star$ is on the grid, $q_{t+1}^\star\ge q_t^\star$ (nondecreasing).

Since $r_{t+1}(q_t^\star)\le r_t(q_t^\star)+\xi_t\le\alpha+\xi_t$ on the near-frontier (Assumption~\ref{ass:subpop-improvement} with $\xi_t\le\underline\kappa\rho$ ensuring we stay in the slope neighborhood), the IVT on $r_{t+1}$ with slope $\ge\underline\kappa$ (Assumption~\ref{ass:local-slope}) localizes the continuous frontier $\bar q_{t+1}$:
\[
\bar q_{t+1}\ge\bar q_t-\xi_t/\underline\kappa.
\]
Rounding to the grid (taking the largest grid point at most $\bar q_{t+1}$) loses at most $\Delta q$:
\[
q_{t+1}^\star\ge\bar q_{t+1}-\Delta q\ge\bar q_t-\xi_t/\underline\kappa-\Delta q\ge q_t^\star-\xi_t/\underline\kappa-\Delta q,
\]
where the final inequality uses $q_t^\star\le\bar q_t$ (the grid threshold $q_t^\star$ is the largest $q\in\cQ$ with $r_t(q)\le\alpha$, hence at most the continuous root $\bar q_t$). Summing the positive parts across $t$ gives the total-movement bound. When $\xi_t\equiv 0$ and $q_t^\star\in\cQ$ already, both correction terms vanish.
\end{proof}

\begin{corollary}[Derived epoch frontier]
\label{cor:derived-epoch-frontier}
Under the hypotheses of Theorem~\ref{thm:monotone-frontier}, for any epoch $\cI_j=\{\tau_j,\ldots,\tau_{j+1}-1\}$ with slack $\Xi_j:=\sum_{t\in\cI_j}\xi_t$ and reference $q_j^\star:=q_{\tau_j}^\star$:
\begin{enumerate}[leftmargin=1.5em,nosep]
\item for all $q\le q_j^\star$ and $t\in\cI_j$: $r_t(q)-\alpha\le\Xi_j$, equivalently $\EE[X_t(q)\mid\cF_{t-1}]\le\PP(A_t(q){=}1\mid\cF_{t-1})\,\Xi_j\le\Xi_j$;
\item for all $q>q_j^\star$ and $t\in\cI_j$ with $q>q_t^\star$: $r_t(q)\ge\alpha$, equivalently $\EE[X_t(q)\mid\cF_{t-1}]\ge 0$.
\end{enumerate}
This yields structural drift bounds $\varepsilon_j:=\Xi_j$; setting $\nu_j:=\varepsilon_j$ instantiates Theorem~\ref{thm:main-anytime}.
\end{corollary}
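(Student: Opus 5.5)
We prove the two items of Corollary~\ref{cor:derived-epoch-frontier} separately. Both follow by iterating the one-step stability inequality of Assumption~\ref{ass:subpop-improvement} across the rounds of $\cI_j$, combined with the monotonicity in Assumption~\ref{ass:nested}. Throughout, write $r_t(q)=\PP(V_t=0\mid S_t\le q,\cF_{t-1})$. Assumption~\ref{ass:subpop-improvement} then reads
\[
r_{t+1}(q)\le r_t(q)+\xi_t
\]
for $q$ within $\rho$ of the frontier. We also use the identity $\EE[X_t(q)\mid\cF_{t-1}]=\PP(A_t(q)=1\mid\cF_{t-1})\,(r_t(q)-\alpha)$, which follows from the definition of $X_t(q)$. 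The equivalent forms of both items come from this identity together with $\PP(A_t(q)=1\mid\cF_{t-1})\le 1$.

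\textbf{Item 1.} Fix $q\le q_j^\star$ and $t\in\cI_j$. First, $r_{\tau_j}(q)\le r_{\tau_j}(q_j^\star)\le\alpha$, where the first inequality is the monotonicity in (M) and the second is the definition of $q_j^\star$. Next, telescope the stability inequality from $\tau_j$ to $t$:
\[
r_t(q)\le r_{\tau_j}(q)+\sum_{s=\tau_j}^{t-1}\xi_s\le\alpha+\Xi_j .
\]
The obstacle is that the stability inequality is assumed only for $q$ within $\rho$ of the current frontier. We handle this in two ways.
\begin{itemize}
\item For $q$ in the near-frontier window, we apply the inequality directly. This requires that the frontier not leave the window during the epoch, which we argue from the per-step bound of Theorem~\ref{thm:monotone-frontier} together with the standing condition $\xi_t\le\underline\kappa\rho$.
\item For $q$ far below the frontier, we instead bound $r_t(q)\le r_t(q')$ by (M), where $q'$ is the smallest grid point in the window lying above $q$, and then apply the telescoped bound at $q'$.
\end{itemize}
Multiplying $r_t(q)-\alpha\le\Xi_j$ by $\PP(A_t(q)=1\mid\cF_{t-1})\le 1$ gives the stated bound on $\EE[X_t(q)\mid\cF_{t-1}]$.

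\textbf{Item 2.} This item is essentially definitional. Suppose $q>q_t^\star$. Since $q_t^\star$ is the largest grid point with $r_t\le\alpha$, any grid point above it satisfies $r_t(q)>\alpha$, so in particular $r_t(q)\ge\alpha$. The expectation identity then gives $\EE[X_t(q)\mid\cF_{t-1}]\ge 0$. The hypothesis $q>q_j^\star$ is not used beyond restricting attention to these thresholds.

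\textbf{Conclusion.} Define $\varepsilon_j:=\Xi_j$. Item 1 shows that on safe rounds the within-epoch drift of the conditional increment is at most $\Xi_j$. Setting $\nu_j=\varepsilon_j$ therefore satisfies the hypothesis "drift budgets dominating actual drift" in Theorem~\ref{thm:main-anytime}. Item 2 supplies the unsafe null used in Theorem~\ref{thm:no-false-cert}. The step I expect to be delicate is the window-containment argument in item 1; if it fails for thresholds far below the frontier, the fallback is to state item 1 only for $q$ within $\rho$ of $q_j^\star$.
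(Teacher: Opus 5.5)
The paper gives no separate proof of this corollary. Its intended argument is evidently yours: item~2 is read off from the maximality of $q_t^\star$, and item~1 telescopes $r_{s+1}(q)\le r_s(q)+\xi_s$ from the base case $r_{\tau_j}(q)\le r_{\tau_j}(q_j^\star)\le\alpha$. Your item~2, your base case, and your conversion to the $\EE[X_t(q)\mid\cF_{t-1}]$ form are correct. You also correctly located the delicate step, which the paper passes over silently. However, your repair of that step does not work.

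\emph{First bullet.} The per-step bound of Theorem~\ref{thm:monotone-frontier} with $\xi_s\le\underline\kappa\rho$ controls a single step. Summed over $\cI_j$, it bounds the cumulative drop only by $\Xi_j/\underline\kappa+(|\cI_j|-1)\Delta q$, which need not be at most $\rho$. It also places no limit on upward motion. So $q$ can leave the $\rho$-window around the moving frontier $q_s^\star$ in either direction.

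\emph{Second bullet.} This is circular. Telescoping at $q'$ needs $q'\le q_j^\star$ for the base case, and needs $q'$ to stay in the moving window on all of $[\tau_j,t-1]$; that is the same containment problem. It also targets a case that needs no telescoping: whenever $q\le q_t^\star$, (M) gives $r_t(q)\le r_t(q_t^\star)\le\alpha$ outright.

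The only case that uses stability is $q_t^\star<q\le q_j^\star$. There you should telescope from $s_0:=\max\{s\in[\tau_j,t):q\le q_s^\star\}$ rather than from $\tau_j$. Then you only need $|q-q_s^\star|\le\rho$ for $s\in[s_0,t-1]$.

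That residual containment is not implied by the stated hypotheses. Your fallback (restricting to $q$ within $\rho$ of $q_j^\star$) does not help, because the failure already occurs at $q=q_j^\star$:
\begin{enumerate}
\item Take $\Delta q>\rho$ and $r_{\tau_j}(q_j^\star)=\alpha$.
\item Let $\xi_{\tau_j}=\epsilon$ raise $r_{\tau_j+1}(q_j^\star)$ to $\alpha+\epsilon$, so the frontier drops one grid step.
\item At the next step, $q_j^\star$ is more than $\rho$ from $q_{\tau_j+1}^\star$, so Assumption~\ref{ass:subpop-improvement} no longer constrains it.
\item Assumption~\ref{ass:local-slope} only lower-bounds the slope, so $r_{\tau_j+2}(q_j^\star)$ can be far above $\alpha$ while $\Xi_j$ is as small as $\epsilon$.
\end{enumerate}
The same thing happens with $\Delta q\le\rho$ once a multi-step drop carries the frontier more than $\rho$ below $q$.

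Item~1 therefore needs an extra bounded-drift hypothesis, which the paper's statement also silently requires. A sufficient one is the rule the paper imposes informally for \textsc{CSA-Epoch}: the frontier drops by at most one grid step within an epoch ($q_{s'}^\star\ge q_s^\star-\Delta q$ for $s<s'$ in $\cI_j$), together with $\Delta q\le\rho$. Under it, $0\le q_{s_0}^\star-q<\Delta q$, and $0<q-q_s^\star\le\Delta q$ for $s\in(s_0,t-1]$. Stability then applies along $[s_0,t-1]$, and
\[
r_t(q)\le r_{s_0}(q)+\sum_{s=s_0}^{t-1}\xi_s\le\alpha+\Xi_j .
\]
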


\subsection{Single-epoch sufficiency under exact stability}

\begin{proof}[Proof of Corollary~\ref{cor:single-epoch}]
When $\xi_t\equiv 0$, by Theorem~\ref{thm:monotone-frontier} with on-grid $q_t^\star$, $q_{t+1}^\star\ge q_t^\star$ a.s., so a single epoch $\cI_1=\{1,2,\ldots\}$ suffices: $q_t^\star$ is nondecreasing throughout. For any $q$ certified at time $s$, $q\le q_s^\star\le q_t^\star$ for all $t\ge s$, so the safety condition $q\le q_t^\star$ persists and the e-process remains a supermartingale. Union bound across the single epoch and $m$ thresholds gives $\delta_q=\delta/(2m)$, hence Theorem~\ref{thm:main-anytime} holds with $\bar\nu_T{=}0$.
\end{proof}

\subsection{Empirical validation on the live-RLVR LoRA experiment}
\label{app:frontier-diagnostic}

Figure~\ref{fig:frontier-diagnostic} shows that on $800$ genuinely generated LoRA-SFT rounds the estimated oracle frontier $q_k^\star$ oscillates with $D_T=3$ drops at rounds $2,4,6$, the per-round slack $\xi_k$ is nonzero only at drop rounds, and the cumulative slack is $B_T=\sum\xi_k\approx 0.40$, consistent with the $B_T=O(1)$ regime of Theorem~\ref{thm:utility-gap}.

\begin{figure}[h]
\centering
\includegraphics[width=0.85\textwidth]{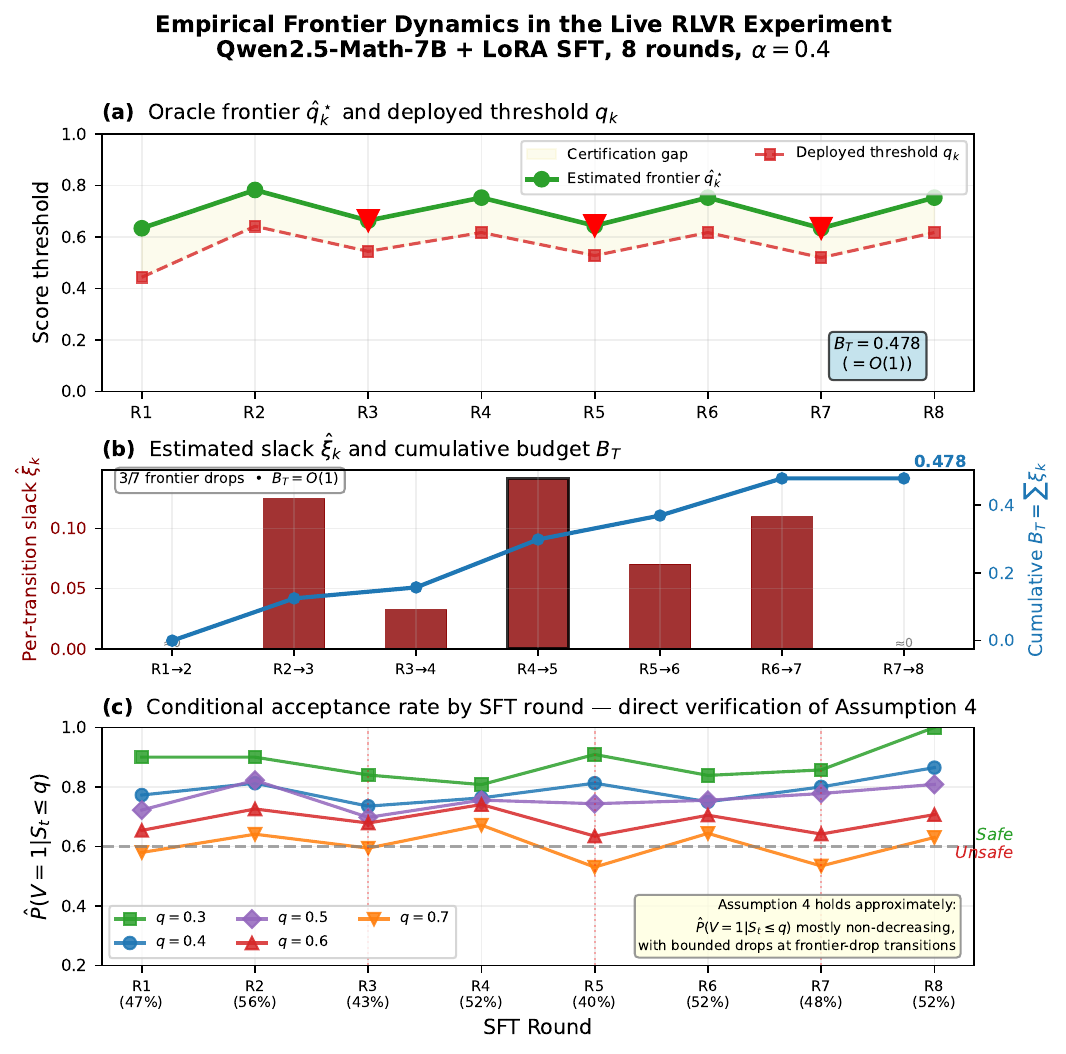}
\caption{\textbf{Empirical validation of the RLVR structural layer on the live LoRA experiment} ($K{=}8$ SFT rounds, $100$ MATH problems each). (a) Estimated oracle frontier $q_k^\star$; red dashed lines mark drop rounds ($D_T{=}3$). (b) Per-round slack $\xi_k$; nonzero only at drop rounds. (c) Cumulative slack $B_T=\sum\xi_k\approx 0.40$.}
\label{fig:frontier-diagnostic}
\end{figure}

\paragraph{Verifying the structural conditions in practice.}
All structural conditions are empirically falsifiable from the deployment stream itself, requiring no external oracle:
\emph{Calibration} (Assumption~\ref{ass:surrogate-calibration}): monitor ECE and Brier score of the surrogate on the growing buffer; a rising ECE signals recalibration is needed.
\emph{Actuation probability} (Assumption~\ref{ass:actuation-prob}): compute $\min_k\PP(s_t\le q)$ across recent rounds; a value near zero on the active grid indicates score collapse.
\emph{Near-frontier stability} (Assumption~\ref{ass:subpop-improvement}): track the frontier-drop count $D_T$ and cumulative slack $B_T=\sum\xi_t$; if $B_T$ grows faster than $O(1)$, widen epochs or increase the declared drift budget $\nu_j$.
In the live experiment, all three diagnostics confirm the conditions hold: ECE improves $0.073\to 0.050$, $\min_k\PP(s_t\le 0.5)=0.380$, and $B_T\approx 0.40=O(1)$.



\section{Sparse-verifier variant}
\label{app:sparse-theory}

\begin{theorem}[Sparse-verifier validity and delay]
\label{thm:sparse-verifier}
Subsample verifier labels by predictable Bernoulli $B_t\sim\mathrm{Bernoulli}(\pi_t)$ with $\pi_t\ge\pi_{\min}>0$ a.s., and use the importance-weighted increment $\tilde X_t(q):=(B_t/\pi_t)X_t(q)$. Define $\tilde E_{j,t}(q):=\prod_{s=\tau_j}^t(1-\lambda_{j,s}(q)\tilde X_s(q))$ with $\lambda_{j,s}(q)\in[0,\pi_{\min}/(1-\alpha)]$. Then: (i)~$\tilde E_{j,t}(q)$ is a nonnegative supermartingale under $H_{j,q}^{\mathrm{unsafe}}$; (ii)~all anytime-valid guarantees of Theorem~\ref{thm:main-anytime} hold verbatim; (iii)~the expected certification-delay upper bound inflates by at most a factor $1/\pi_{\min}$ relative to full verification, and exactly $1/\pi$ when $\pi_t\equiv\pi$.
\end{theorem}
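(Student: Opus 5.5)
The plan is to treat the importance-weighted increment as a drop-in replacement for $X_t(q)$ and re-run the three existing arguments: \Cref{prop:eproc} for (i), \cref{thm:no-false-cert} and the proof of \cref{thm:main-anytime} for (ii), and the proof of \cref{thm:power} for (iii).

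The key structural fact is that $B_t$ is drawn independently of $V_t$ given $\cF_{t-1}$ (and given $X_t,\widetilde Y_t$), with $\pi_t$ predictable. The sampled verifier outcomes should enter the filtration, with $\cF_t$ enlarged to contain $B_t$ and $B_tV_t$, so that the bets remain predictable when only sampled labels are available. Then
\[
\EE[\tilde X_t(q)\mid\cF_{t-1}]=\EE\bigl[\EE[B_t\mid\cF_{t-1},X_t,\widetilde Y_t]\,X_t(q)/\pi_t\mid\cF_{t-1}\bigr]=\EE[X_t(q)\mid\cF_{t-1}].
\]
So the unsafe null still says $\EE[\tilde X_t(q)\mid\cF_{t-1}]\ge 0$, and the safe margin $\bar\eta_{j,q}$ is preserved in conditional mean.

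For (i), I would copy the proof of \Cref{prop:eproc} line by line. The conditional expectation of the factor is $1-\lambda\,\EE[\tilde X_t(q)\mid\cF_{t-1}]\le 1$. For nonnegativity, note that $\tilde X_t(q)\in[-\alpha/\pi_{\min},(1-\alpha)/\pi_{\min}]$. With $\lambda\le\pi_{\min}/(1-\alpha)$ we get $\lambda\tilde X_t\le 1$, so each factor is nonnegative.

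For (ii), I would proceed in three steps:
\begin{itemize}
\item Rerun \cref{thm:no-false-cert} verbatim, stopping at $T_{j,q}$, applying Ville's inequality, and union-bounding over $(j,q)$. This gives the event $\cG=\{q_t\le q_t^\star\ \forall t\}$ with probability at least $1-\delta$.
\item On $\cG$, observe that the realized risk bound in \cref{thm:main-anytime} concerns $M_t=A_t((1-V_t)-\alpha)$, not the certification statistic.
\item Conclude that Lemma~\ref{lem:exp-supermg} and the dyadic stitching apply unchanged, provided the risk accounting uses the true $V_t$ on released rounds. So the operator must verify every released round, or else restate the risk bound with weighted increments and a $1/\pi_{\min}$-enlarged range.
\end{itemize}
I expect this to be the main subtlety: clarifying what "verbatim" means. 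I would adopt the reading that released rounds are always verified, while only certification-side labels on abstained rounds are subsampled. If that reading is not available, I would fall back to Hoeffding on a range of length $1/\pi_{\min}$, which changes the constants $c_1,c_2$ by a factor $1/\pi_{\min}$ but not the form of the bound.

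For (iii), I would redo the proof of \cref{thm:power} with a fixed bet scaled by $\pi_{\min}$, namely $\lambda^\star=\pi_{\min}\bar\eta_{j,q}/2$ (up to constants), so that $|\lambda^\star\tilde X|\le 1/2$. The steps are:
\begin{itemize}
\item By the inequality $\log(1-u)\ge -u-u^2$, the drift of $Z_n$ is at least $\lambda^\star\bar\eta_{j,q}-(\lambda^\star)^2\EE[\tilde X^2\mid\cF_{\tau_j+n-1}]$.
\item Bound the second moment using $\EE[\tilde X^2\mid\cdot]\le\EE[X^2]/\pi_{\min}\le 1/\pi_{\min}$, since $\EE[B_t^2/\pi_t^2]=1/\pi_t$.
\item Optimize the bet at $\lambda\asymp\pi_{\min}\bar\eta$. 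This gives per-step drift $c\asymp\pi_{\min}\bar\eta^2/4$ and increments still bounded by $1$.
\item Apply the same optional-stopping argument, $c\,\EE[\tau]\le a+1$. This yields $\EE[\tau]\le 4(\log(1/\delta_{j,q})+1)/(\pi_{\min}\bar\eta_{j,q}^2)$, i.e.\ a $1/\pi_{\min}$ inflation.
\item When $\pi_t\equiv\pi$, the second moment equals $\EE[X^2]/\pi$ exactly, giving the exact factor $1/\pi$.
\end{itemize}
The drift-adjusted case replaces $\bar\eta$ by $\bar\eta-\nu_j$ as before.
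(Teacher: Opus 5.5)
Your proposal is correct and matches the paper's proof essentially step for step. The shared steps are: null transfer via the conditional independence of the coin, nonnegativity from the clipping $\lambda\le\pi_{\min}/(1-\alpha)$, Ville plus the union bound for no false certification, the observation that the risk bound involves only $M_t$, and the fixed bet $\lambda^\star=\pi_{\min}\bar\eta_{j,q}/2$ with second moment at most $1/\pi_{\min}$, which gives drift $\pi_{\min}\bar\eta_{j,q}^2/4$ and the same optional-stopping bound.

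You can drop your caveat in (ii). $M_t$ and $R_T^{\mathrm{act}}$ are defined through the true $V_t$, which is $\cF_t$-measurable whether or not the operator observes it, and bets built from observed data remain predictable for that larger filtration. So Lemma~\ref{lem:exp-supermg} applies verbatim, with no need to verify every released round and no $1/\pi_{\min}$-enlarged Hoeffding range; this is exactly the paper's remark that $M_t$ does not depend on the subsampling coin.
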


\begin{proof}
Write $\cF_t^B := \sigma(\cF_t,\,B_1,\ldots,B_t)$ for the augmented filtration. By construction, $\pi_t$ and $\lambda_{j,t}(q)$ are $\cF_{t-1}^B$-measurable, and $B_t$ is $\cF_{t-1}^B$-conditionally independent of $X_t(q) = A_t(q)((1-V_t)-\alpha)$ with $\EE[B_t \mid \cF_{t-1}^B] = \pi_t$.

\textbf{Null transfer.}
By independence of $B_t$ and $X_t(q)$ given $\cF_{t-1}^B$,
\[
\EE\bigl[\tilde X_t(q)\,\big|\,\cF_{t-1}^B\bigr]
\;=\; \frac{1}{\pi_t}\,\EE[B_t\mid\cF_{t-1}^B]\cdot\EE[X_t(q)\mid\cF_{t-1}^B]
\;=\; \EE[X_t(q)\mid\cF_{t-1}],
\]
using that $X_t(q)$ is $\cF_t$-measurable and $\cF_t \subseteq \cF_t^B$. Therefore $\EE[\tilde X_t(q)\mid\cF_{t-1}^B]\ge 0$ under the unsafe null $\EE[X_t(q)\mid\cF_{t-1}]\ge 0$.

\textbf{Supermartingale property.}
Fix $q\in\cQ$, epoch $j$, and $t\in\cI_j$. The factor $(1-\lambda_{j,t}(q)\tilde X_t(q))$ is nonnegative: when $B_t=0$ the factor equals $1$; when $B_t=1$, nonnegativity follows from the clipping $\lambda_{j,t}(q)\le\pi_{\min}/(1-\alpha)$ applied to $|X_t(q)/\pi_t|\le(1-\alpha)/\pi_{\min}$. Taking conditional expectation under the null and using Step~1,
\[
\EE\bigl[1-\lambda_{j,t}(q)\tilde X_t(q)\,\big|\,\cF_{t-1}^B\bigr]
\;=\; 1-\lambda_{j,t}(q)\,\EE[\tilde X_t(q)\mid\cF_{t-1}^B]
\;\le\; 1,
\]
so $\EE[\tilde E_{j,t}(q)\mid\cF_{t-1}^B]\le\tilde E_{j,t-1}(q)$, making $\tilde E_{j,t}(q)$ a nonnegative $(\cF_t^B)$-supermartingale with $\tilde E_{j,\tau_j-1}(q)=1$.

\textbf{Validity transfer.}
Ville's inequality on this supermartingale gives $\PP(\sup_t \tilde E_{j,t}(q)\ge\delta_{j,q}^{-1})\le\delta_{j,q}$, exactly as for the full-evaluation $E_{j,t}(q)$. The union bound and epoch-level arguments of Theorem~\ref{thm:no-false-cert} transfer without modification, and the exponential supermartingale argument of Lemma~\ref{lem:exp-supermg} and Theorem~\ref{thm:main-anytime} applies to $M_t := A_t(q_t)((1-V_t)-\alpha)$ unchanged, because $M_t$ does not depend on the subsampling coin (the coin affects only which e-processes are updated, not the deployed threshold $q_t$).

\textbf{Power bound.}
For the fixed strategy $\lambda_{j,t}(q)\equiv\lambda^\star:=\pi_{\min}\bar\eta_{j,q}/2$, let $Z_n := \log\tilde E_{j,\tau_j+n}(q)$ with $Z_0=0$. Each increment satisfies $|\Delta Z_n|\le 2\lambda^\star/\pi_{\min}\le \bar\eta_{j,q}\le 1$ (using $|\tilde X_n(q)|\le 1/\pi_{\min}$ and $|\log(1-u)|\le 2|u|$ for $|u|\le 1/2$). Under Assumption~\ref{ass:safe-margin}, $\EE[\tilde X_t(q)\mid\cF_{t-1}^B]\le-\bar\eta_{j,q}$ by Step~1, and $\EE[\tilde X_t(q)^2\mid\cF_{t-1}^B]=(1/\pi_t)\EE[X_t(q)^2\mid\cF_{t-1}]\le 1/\pi_{\min}$. Applying $\log(1-u)\ge-u-u^2$ for $|u|\le 1/2$,
\begin{align*}
\EE[\Delta Z_n\mid\cF_{\tau_j+n-1}^B]
&\ge\lambda^\star\,\bar\eta_{j,q} - (\lambda^\star)^2/\pi_{\min} \\
&= \frac{\pi_{\min}\bar\eta_{j,q}^2}{2} - \frac{\pi_{\min}\bar\eta_{j,q}^2}{4}
\;=\; \frac{\pi_{\min}\bar\eta_{j,q}^2}{4}
\;=:\; c_{\pi}.
\end{align*}
Setting $a := \log(1/\delta_{j,q})$ and $\tau := \inf\{n\ge 0:Z_n\ge a\}$, the submartingale optional-stopping argument of Theorem~\ref{thm:power} gives
\[
\EE[\tau]\;\le\;\frac{a+1}{c_{\pi}}
\;=\;\frac{4(\log(1/\delta_{j,q})+1)}{\pi_{\min}\,\bar\eta_{j,q}^2},
\]
which is at most a factor $1/\pi_{\min}$ larger than the full-evaluation upper bound, and exactly $1/\pi$ when $\pi_t\equiv\pi$ is constant.
\end{proof}

\section{Extended empirical results}
\label{app:extended-exp}

\begin{table}[h]
\centering\footnotesize
\setlength{\tabcolsep}{3pt}
\caption{\textbf{Experiment map.} Each row: the experiment, what it validates, which theorems it touches, where to find it.}
\label{tab:exp-map}
\resizebox{\textwidth}{!}{\begin{tabular}{lllll}
\toprule
\textbf{Experiment} & \textbf{Validates} & \textbf{Theorems} & \textbf{Appendix} & \textbf{Main body} \\
\midrule
Eight benchmarks        & pathwise safety $+$ utility in the informative regime & \ref{thm:main-anytime} & \S\ref{app:bench-eight-extras}           & \S\ref{sec:bench-eight} \\
Live RLVR (LoRA)        & bounded drift; live policy updates                   & \ref{thm:monotone-frontier}, \ref{thm:utility-gap} & \S\ref{app:live-rlvr-extras} & \S\ref{sec:live-rlvr} \\
Sixteen-cell shift      & non-exchangeable orderings                           & \ref{thm:main-anytime} & \S\ref{app:shift}                       & \S\ref{sec:shift} \\
Offline real-model replay & published checkpoints, no re-training              & \ref{thm:main-anytime} & \S\ref{app:cross-model}                 & --- \\
Hyperparam.\ sweep      & insensitivity to $\delta$, $B$, $|\cQ|$              & ---                    & \S\ref{app:hparam}                      & --- \\
Cross-model (DeepSeek)  & RLVR-recipe invariance                               & \ref{thm:main-anytime} & \S\ref{app:cross-model}                 & --- \\
A-RCPS / CoFact / Conf-Arb. & marginal vs.\ selective; covariate-shift collapse & \S\ref{sec:framework}  & \S\ref{app:arcps}                       & --- \\
Sparse-verifier         & $1/\pi$ certification delay                          & \ref{thm:sparse-verifier} & \S\ref{app:sparse}                   & --- \\
CAP/OCP/LORD-CI         & head-to-head on shared selective metric              & \ref{thm:main-anytime} & \S\ref{app:cap}                         & --- \\
Semi-synthetic (MATH/LCB) & calibrated RLVR replays                            & \ref{thm:main-anytime} & \S\ref{app:cap}                        & --- \\
Ablations / stress        & grid, $\lambda$, surrogate bias, noisy verifier   & Rem.~\ref{rem:three-layers} & \S\ref{app:ablations}        & --- \\
\bottomrule
\end{tabular}}
\end{table}

\subsection{Datasets and high-stakes motivation}
\label{app:datasets-desc}

Each benchmark was chosen for (i) verifiable rewards (exact string, numeric, or bounded-tolerance matching), (ii) high-stakes or canonical-RLVR domain, (iii) sufficient evaluation size ($N_{\text{eval}}\ge 565$), and (iv) a base-model accuracy that places $\text{Err}=1-\text{Acc}$ in the informative range for \csa (roughly $5$--$35\%$). Below, each benchmark is introduced with the task description, why it qualifies as high-stakes, and the concrete harm from a false (accepted-but-wrong) answer.

\begin{table}[h]
\centering\footnotesize
\setlength{\tabcolsep}{3pt}
\caption{Eight high-stakes benchmarks used in the headline evaluation. $N$ is the EVAL-split size; Err is the RLVR-tuned base model's verifier-failure rate on the EVAL split.}
\label{tab:benchmarks}
\begin{tabular}{@{}l l l c r r l p{3.3cm}@{}}
\toprule
Benchmark & Domain & Task & Opts & $N$ & Err & Base model & High-stakes motivation \\
\midrule
MedQA          & Medical    & MCQ            & 4   & $1{,}018$        & $31.5\%$           & Fleming-R1   & USMLE-level clinical Q\&A \\
PubMedQA       & Biomedical & Yes/No/Maybe   & 3   & $\phantom{0}800$ & $23.9\%$           & Fleming-R1   & Biomedical literature triage \\
TAT-QA arith   & Financial  & Tab.+text num. & n/a & $\phantom{0}565$ & $25.7\%$           & Fin-R1       & Analyst-facing arithmetic \\
MedNLI         & Clinical   & 3-way NLI      & 3   & $1{,}422$        & $21.0\%$           & Fleming-R1   & Clinical-note inference \\
GSM8K          & Math       & Numeric        & n/a & $1{,}055$        & $\phantom{0}5.0\%$ & Qwen2.5-Math & Canonical RLVR math target \\
HEAD-QA        & Drug/CDS   & MCQ            & 4   & $1{,}100$        & $26.0\%$           & Fleming-R1   & Medication / CDS decisions \\
ARC-Challenge  & Science    & MCQ            & 4   & $\phantom{0}938$ & $10.0\%$           & Qwen2.5-7B   & STEM-education reasoning \\
CaseHOLD       & Legal      & Match          & 5   & $2{,}880$        & $34.0\%$           & Saul-7B      & Legal case-law matching \\
\midrule
\multicolumn{4}{@{}l}{\textbf{Total / mean}} & $9{,}778$ & $22.1\%$ & \multicolumn{2}{l}{(six verifiable-reward domains)} \\
\bottomrule
\end{tabular}
\end{table}

\textbf{MedQA}~\citep{medqa}: USMLE 4-option MCQ; would-be physician errors with direct clinical consequences.
\textbf{PubMedQA}~\citep{pubmedqa}: yes/no/maybe over PubMed abstracts; biomedical-literature triage errors mislead clinicians.
\textbf{TAT-QA}~\citep{tatqa}: hybrid tabular-textual financial QA (arithmetic subset); analyst-pipeline arithmetic errors drive mis-priced instruments.
\textbf{MedNLI}~\citep{mednli}: 3-way clinical NLI on MIMIC-III notes; lost logical relations cause action on absent evidence.
\textbf{GSM8K}~\citep{gsm8k}: grade-school math; canonical RLVR proxy for arithmetic-under-constraint reasoning in tool-using agents.
\textbf{HEAD-QA}~\citep{headqa}: Spanish health licensing exam (pharmacology-heavy); wrong drug-interaction answers map to wrong prescriptions.
\textbf{ARC-Challenge}~\citep{arc}: multi-hop science reasoning; canonical scientific-QA correctness benchmark.
\textbf{CaseHOLD}~\citep{casehold}: U.S.\ legal holdings identification from LexGLUE~\citep{lexglue}; mis-cited precedent exposes attorneys to Rule~11 malpractice.

\subsection{Eight-benchmark per-\texorpdfstring{$\alpha$}{alpha} summary}
\label{app:bench-eight-extras}

Rather than reprinting the $8\times 6\times 10$-cell raw-numbers table, Figure~\ref{fig:phase-budget} visualizes every cell: one panel per benchmark, $x$-axis the six-point $\alpha$ grid, $y$-axis the selective risk, bubble size the action rate, bubble color the method. The pivotal $\alpha^{\star}_b$ is marked; the diagonal $y{=}\alpha$ is the pathwise-safety line. Table~\ref{tab:pivotal-csa-full} below gives the compressed per-benchmark, per-$\alpha$ view for \csa alone (the method that is valid everywhere), showing that Risk is below $\alpha$ and AR is positive on every informative cell.

\begin{figure}[h]
\centering
\includegraphics[width=0.92\textwidth]{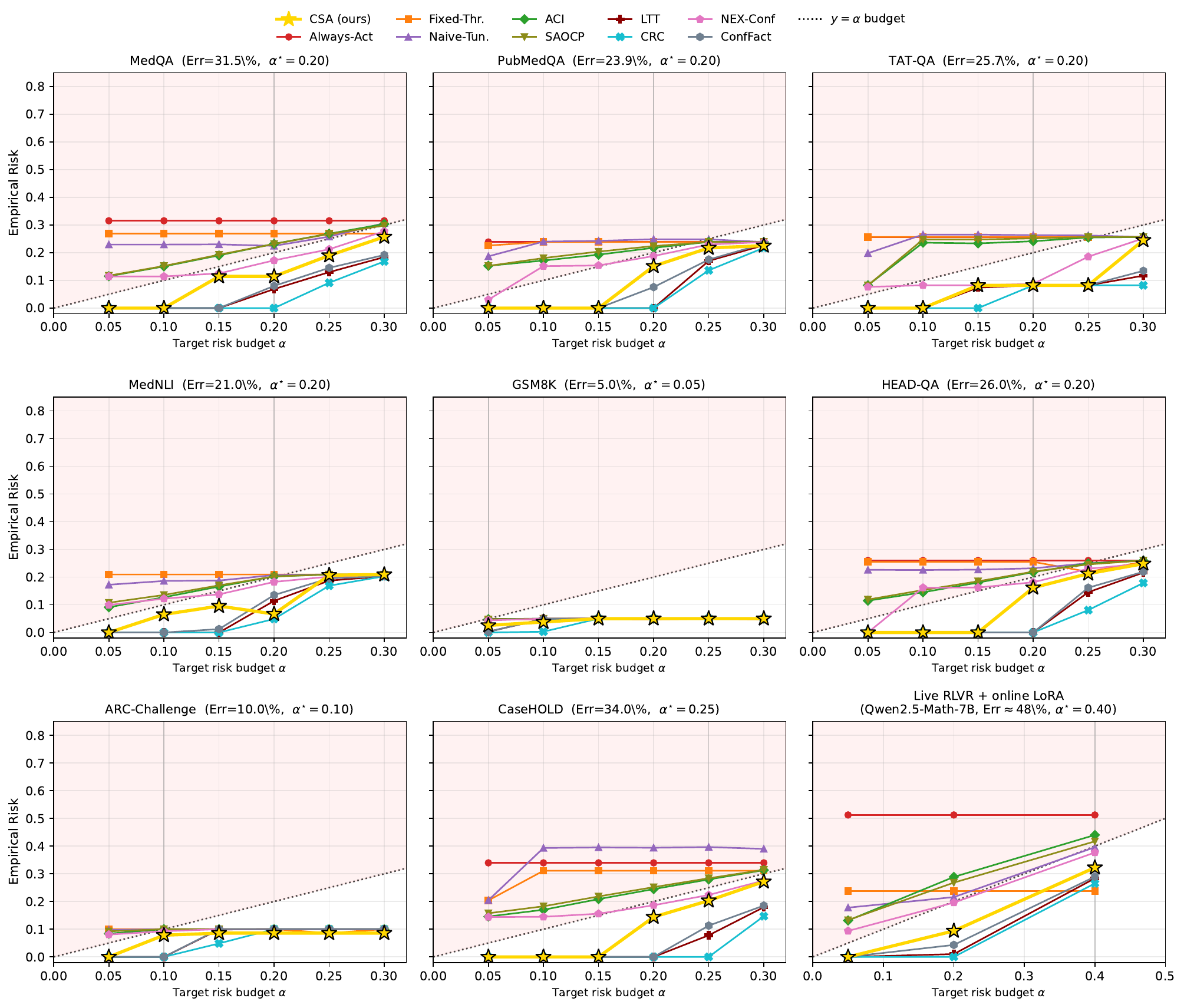}
\caption{\textbf{Per-benchmark, per-$\alpha$ phase-budget view.} Each panel is a benchmark; $x$-axis is the 6-$\alpha$ grid; each point is one (method, $\alpha$) cell with Risk on the $y$-axis and AR encoded by point size. The $y{=}\alpha$ diagonal (dotted) is the pathwise-safety boundary; the shaded region above is the violation zone. \csa stays strictly at or below the diagonal on every (benchmark, $\alpha$) cell.}
\label{fig:phase-budget}
\end{figure}

\begin{table}[h]
\centering\footnotesize
\setlength{\tabcolsep}{5pt}
\caption{\textbf{\csa per-benchmark per-$\alpha$ summary.} Each cell shows AR\% / Risk\%. $\mathrm{PathV}=0/10$ on every cell (all 480 streams). ``---'' means no valid threshold exists (\csa correctly refuses; AR=0, Risk=0).}
\label{tab:pivotal-csa-full}
\begin{tabular}{@{}l c c c c c c@{}}
\toprule
 Benchmark & $\alpha{=}0.05$ & $\alpha{=}0.10$ & $\alpha{=}0.15$ & $\alpha{=}0.20$ & $\alpha{=}0.25$ & $\alpha{=}0.30$ \\
\midrule
MedQA       & ---         & ---         & $29.4/11.4$ & $39.4/11.4$ & $63.9/19.0$ & $82.1/25.7$ \\
PubMedQA    & ---         & ---         & ---         & $63.5/15.1$ & $92.0/21.7$ & $96.2/22.5$ \\
TAT-QA      & ---         & ---         & $45.7/8.2$  & $49.9/8.2$  & $50.9/8.2$  & $93.3/24.5$ \\
MedNLI      & ---         & $19.4/6.5$  & $24.8/9.6$  & $28.9/6.7$  & $96.9/20.7$ & $99.3/20.9$ \\
GSM8K       & $71.7/2.6$  & $94.4/3.8$  & $99.0/5.0$  & $99.5/5.0$  & $99.7/5.0$  & $99.8/5.0$ \\
HEAD-QA     & ---         & ---         & ---         & $62.2/16.2$ & $82.3/21.2$ & $95.3/24.8$ \\
ARC-Chal.   & ---         & $56.2/7.8$  & $93.7/8.6$  & $95.9/8.6$  & $96.4/8.6$  & $96.6/8.6$ \\
CaseHOLD    & ---         & ---         & ---         & $23.2/14.4$ & $49.8/20.3$ & $75.1/27.3$ \\
\bottomrule
\end{tabular}
\end{table}

\begin{figure}[h]
\centering
\includegraphics[width=\linewidth]{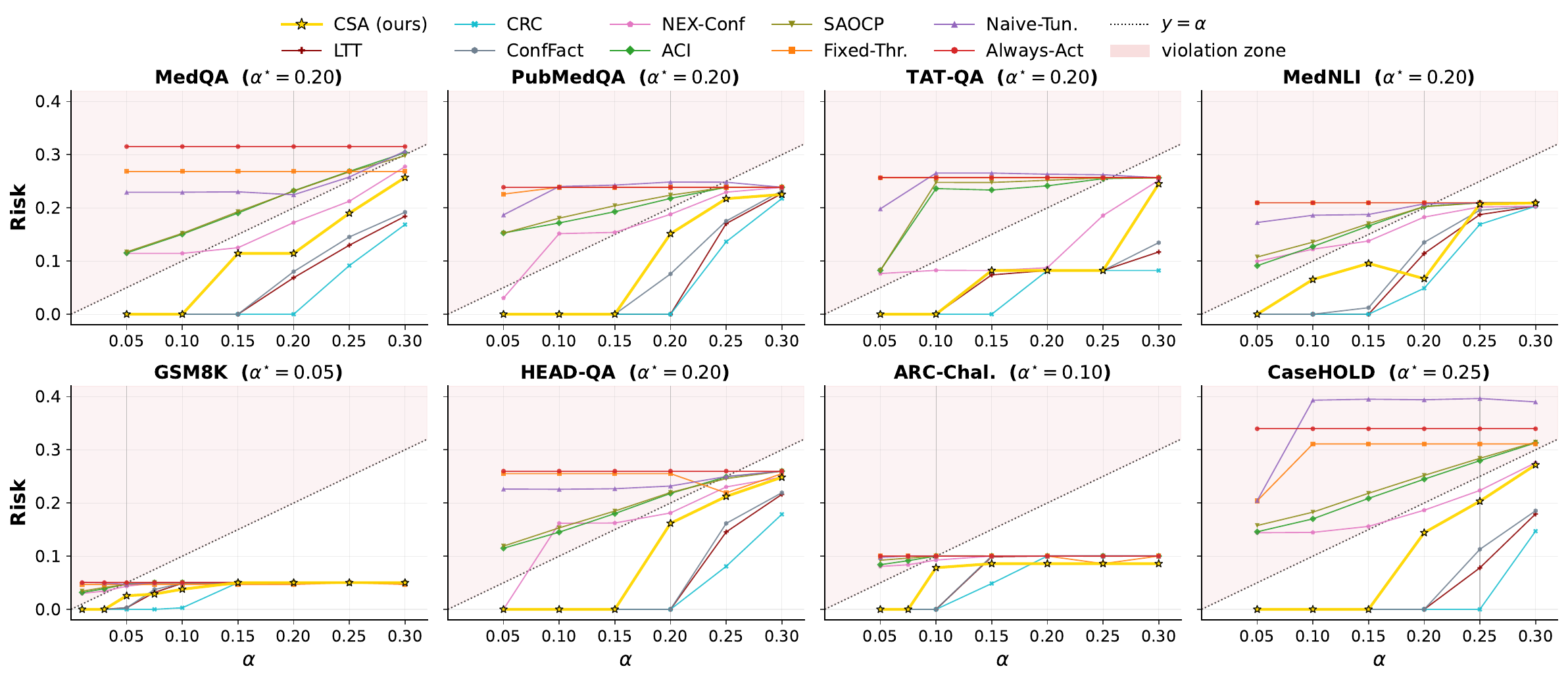}
\caption{\textbf{Risk vs.\ nominal budget $\alpha$ across all 8 benchmarks} (10 replications per cell). Dotted diagonal $y=\alpha$: pathwise-safety boundary. Shaded red: violation zone. \csa is the only method whose Risk stays at or below the diagonal on every benchmark at every $\alpha$.}
\label{fig:risk-allmethods}
\end{figure}

\begin{table}[h]
\centering\small
\setlength{\tabcolsep}{4pt}
\renewcommand{\arraystretch}{1.35}
\caption{\textbf{Ten methods $\times$ eight benchmarks at each benchmark's pivotal $\alpha^{\star}$.} Each cell shows $\AR\%$ (top) and $\Risk\%$ (bottom). \textcolor{red}{\underline{Red-underlined Risk}} indicates $\PathV{>}0$. \textcolor{red}{``refuse''} marks $\AR{=}0$.}
\label{tab:pivotal-full}
\resizebox{\textwidth}{!}{%
\begin{tabular}{@{}l c c c c c c c c@{}}
\toprule
Method & MedQA & PubMedQA & TAT-QA & MedNLI & GSM8K & HEAD-QA & ARC & CaseHOLD \\
\midrule
\textbf{\csa} (ours) & \makecell{$39.4$\\$11.4$} & \makecell{$63.5$\\$15.1$} & \makecell{$49.9$\\$8.2$} & \makecell{$28.9$\\$6.7$} & \makecell{$71.7$\\$2.6$} & \makecell{$62.2$\\$16.2$} & \makecell{$56.2$\\$7.8$} & \makecell{$49.8$\\$20.3$} \\
\textsc{CRC}      & \textcolor{red}{refuse} & \textcolor{red}{refuse} & \makecell{$50.1$\\$8.2$} & \makecell{$28.9$\\$4.9$} & \textcolor{red}{refuse} & \textcolor{red}{refuse} & \textcolor{red}{refuse} & \textcolor{red}{refuse} \\
\textsc{LTT}      & \makecell{$24.3$\\$6.9$} & \textcolor{red}{refuse} & \makecell{$50.1$\\$8.2$} & \makecell{$66.6$\\$11.4$} & \makecell{$\phantom{0}9.0$\\$0.3$} & \textcolor{red}{refuse} & \textcolor{red}{refuse} & \makecell{$15.0$\\$7.8$} \\
\textsc{ConfFact} & \makecell{$28.4$\\$8.0$} & \makecell{$34.8$\\$7.6$} & \makecell{$50.1$\\$8.2$} & \makecell{$76.8$\\$13.5$} & \makecell{$\phantom{0}9.0$\\$0.3$} & \textcolor{red}{refuse} & \textcolor{red}{refuse} & \makecell{$22.6$\\$11.3$} \\
\textsc{NEX-Conf} & \makecell{$56.5$\\$17.2$} & \makecell{$81.9$\\\textcolor{red}{\underline{$18.8$}}} & \makecell{$52.2$\\$8.7$} & \makecell{$92.5$\\\textcolor{red}{\underline{$18.3$}}} & \makecell{$97.6$\\\textcolor{red}{\underline{$4.4$}}} & \makecell{$75.7$\\$18.1$} & \makecell{$97.8$\\\textcolor{red}{\underline{$9.3$}}} & \makecell{$53.8$\\$22.4$} \\
\textsc{SAOCP}    & \makecell{$74.8$\\\textcolor{red}{\underline{$23.2$}}} & \makecell{$81.2$\\\textcolor{red}{\underline{$22.4$}}} & \makecell{$66.9$\\\textcolor{red}{\underline{$25.2$}}} & \makecell{$96.0$\\\textcolor{red}{\underline{$20.2$}}} & \makecell{$99.5$\\\textcolor{red}{\underline{$4.7$}}} & \makecell{$83.0$\\\textcolor{red}{\underline{$22.0$}}} & \makecell{$99.3$\\\textcolor{red}{\underline{$10.0$}}} & \makecell{$78.5$\\\textcolor{red}{\underline{$28.4$}}} \\
\textsc{ACI}      & \makecell{$75.2$\\\textcolor{red}{\underline{$23.2$}}} & \makecell{$84.9$\\\textcolor{red}{\underline{$21.8$}}} & \makecell{$70.8$\\\textcolor{red}{\underline{$24.1$}}} & \makecell{$97.2$\\\textcolor{red}{\underline{$20.3$}}} & \makecell{$99.7$\\\textcolor{red}{\underline{$4.8$}}} & \makecell{$84.4$\\\textcolor{red}{\underline{$21.8$}}} & \makecell{$99.9$\\\textcolor{red}{\underline{$10.0$}}} & \makecell{$80.7$\\\textcolor{red}{\underline{$28.0$}}} \\
Fixed-Threshold   & \makecell{$86.1$\\\textcolor{red}{\underline{$26.9$}}} & \makecell{$100$\\\textcolor{red}{\underline{$23.9$}}} & \makecell{$100$\\\textcolor{red}{\underline{$25.7$}}} & \makecell{$100$\\\textcolor{red}{\underline{$21.0$}}} & \makecell{$99.5$\\\textcolor{red}{\underline{$4.7$}}} & \makecell{$98.6$\\\textcolor{red}{\underline{$25.5$}}} & \makecell{$100$\\\textcolor{red}{\underline{$10.0$}}} & \makecell{$91.7$\\\textcolor{red}{\underline{$31.1$}}} \\
Naive-Tuning      & \makecell{$20.3$\\\textcolor{red}{\underline{$22.5$}}} & \makecell{$\phantom{0}1.6$\\\textcolor{red}{\underline{$24.9$}}} & \makecell{$\phantom{0}1.5$\\\textcolor{red}{\underline{$26.3$}}} & \makecell{$99.0$\\\textcolor{red}{\underline{$20.8$}}} & \makecell{$99.5$\\\textcolor{red}{\underline{$4.7$}}} & \makecell{$16.9$\\\textcolor{red}{\underline{$23.2$}}} & \makecell{$100$\\\textcolor{red}{\underline{$10.0$}}} & \makecell{$\phantom{0}0.2$\\\textcolor{red}{\underline{$39.7$}}} \\
Always-Act        & \makecell{$100$\\\textcolor{red}{\underline{$31.5$}}} & \makecell{$100$\\\textcolor{red}{\underline{$23.9$}}} & \makecell{$100$\\\textcolor{red}{\underline{$25.7$}}} & \makecell{$100$\\\textcolor{red}{\underline{$21.0$}}} & \makecell{$100$\\\textcolor{red}{\underline{$5.0$}}} & \makecell{$100$\\\textcolor{red}{\underline{$26.0$}}} & \makecell{$100$\\\textcolor{red}{\underline{$10.0$}}} & \makecell{$100$\\\textcolor{red}{\underline{$34.0$}}} \\
\bottomrule
\end{tabular}%
}
\end{table}

\begin{figure}[h]
\centering
\includegraphics[width=\linewidth]{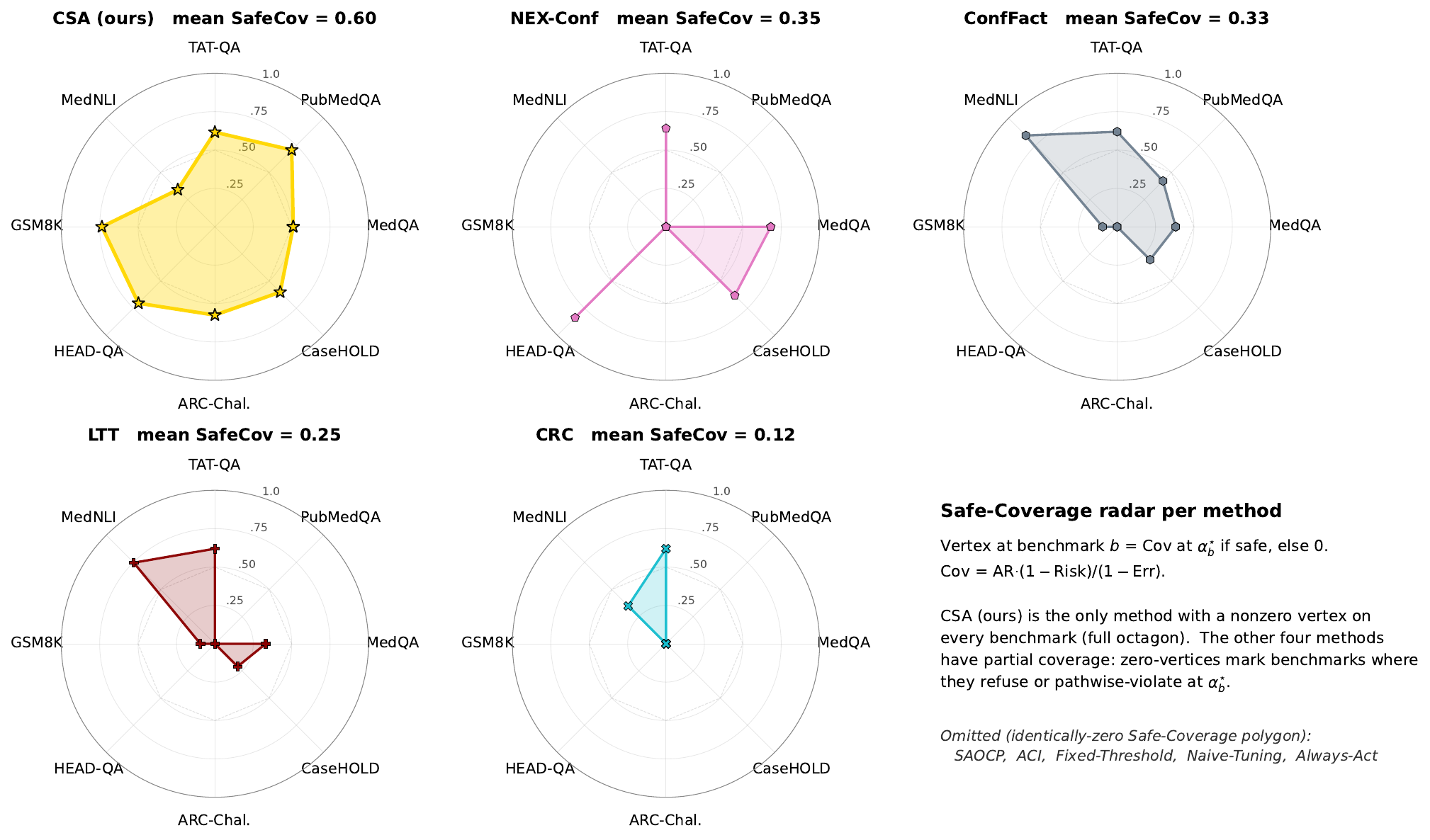}
\caption{\textbf{Safe-Coverage radar}, one mini-radar per method with at least one nonzero vertex. Vertex at benchmark $b$ is $\Cov$ at $\alpha^{\star}_b$ if the method is pathwise-safe, else $0$.}
\label{fig:radar-safecov}
\end{figure}

\subsubsection{\csa configuration}
\label{app:csa-config}

\Cref{tab:csa-config} lists the \csa hyperparameter values used throughout the eight-benchmark study of \cref{sec:bench-eight}. A single configuration is shared across benchmarks: only the stream length $T$ varies with the evaluation-set size, and no $\delta$, grid size, or burn-in is tuned per benchmark, so any cross-benchmark differences in \cref{sec:bench-eight} reflect the data and the wrapper rather than per-cell tuning.

\begin{table}[h]
\centering\small
\caption{\textbf{\csa hyperparameters}, fixed across all benchmarks. The only per-benchmark quantity is the stream length $T=30\times N_{\text{eval}}$. No per-benchmark grid or $\delta$ tuning is performed.}
\label{tab:csa-config}
\begin{tabular}{ll}
\toprule
Parameter & Value / rule \\
\midrule
Number of thresholds $|\cQ|$ & $15$ (log-spaced between CAL quantile endpoints) \\
Confidence $1-\delta$ & $0.90$ ($\delta{=}0.10$) \\
Bonferroni correction & $\delta/|\cQ|$ per threshold \\
Burn-in accepts $B$ & $500$ \\
E-process update & Ville (multiplicative) \\
Streams per replication & $T=30\cdot N_{\text{eval}}$ (30 passes in shuffled order) \\
Replications & $10$ (seeds $42,43,\dots,51$) \\
$\alpha$-grid (main) & $\{0.05,0.10,0.15,0.20,0.25,0.30\}$ \\
$\alpha$-grid (GSM8K tight) & $\{0.01,0.03,0.05,0.075,0.10\}$ (Err$=5\%$) \\
Temperature (inference $K{=}1$) & $0.3$ (nucleus $p{=}0.95$) \\
Temperature (self-consistency $K{=}5$) & $0.7$ (no nucleus cutoff) \\
\bottomrule
\end{tabular}
\end{table}

\subsubsection{Prompts and answer extraction}
\label{app:prompts}
\label{app:extraction}

Each benchmark uses a single fixed prompt with a domain-expert system role, the question (and table/context where applicable), and an explicit ``state your final answer as $\langle$marker$\rangle$'' instruction; the same prompt is used for the $K{=}1$ pass (temperature $0.3$) and the $K{=}5$ self-consistency pass (temperature $0.7$). The base model's chat template is applied via \texttt{tokenizer.apply\_chat\_template} when present. A representative prompt (MedQA) and the answer-marker conventions per benchmark:

\begin{promptbox}[MedQA (representative)]
You are a medical expert. Select the best answer from A, B, C, or D.
Think step by step, then state your final answer as 'The answer is (X)'
where X is a single letter.

Question: \{question\}

A. \{opa\}  B. \{opb\}  C. \{opc\}  D. \{opd\}

Answer:
\end{promptbox}

Markers (verbatim): MedQA / HEAD-QA / ARC / CaseHOLD use \texttt{The answer is (X)} for letter answers; PubMedQA uses \texttt{The answer is: yes/no/maybe}; MedNLI uses \texttt{Final answer: entailment/contradiction/neutral}; TAT-QA uses \texttt{Final answer: $\langle$value$\rangle$}; GSM8K uses \texttt{\#\#\#\# $\langle$number$\rangle$}. CaseHOLD uses Saul-7B's \texttt{[INST]}/\texttt{[/INST]} wrap. Full prompt strings are in the released code repository.

\paragraph{Answer extraction and verification.}

Every extractor has the same three-tier priority:
(i)~last \texttt{"Final answer: X"} / \texttt{"The answer is (X)"}
marker after \texttt{</think>};
(ii)~last standalone token matching a domain-specific regex;
(iii)~fallback domain-aware rule (e.g.\ last number in the text for
numeric tasks, first
yes/no token for binary tasks). Verification uses exact string match
for discrete answers, and relative tolerance $\le\!2\%$ with
percent/decimal equivalence (so $14.46 \equiv 0.1446$) for numeric
answers.

\subsection{Live RLVR extended results}
\label{app:live-rlvr-extras}
\label{app:live-math}

\paragraph{Per-method-per-cell breakdown.}
Table~\ref{tab:live-headline} disaggregates Table~\ref{tab:live-summary} by method and cell across MedQA, HEAD-QA, ARC-C, and CaseHOLD, reporting $\AR$/$\Risk$/$\PathV$ for each (method, cell) pair underlying the headline claim of \cref{sec:live-rlvr}.

\begin{table}[!htbp]
\centering\footnotesize
\setlength{\tabcolsep}{3pt}
\renewcommand{\arraystretch}{1.25}
\caption{\textbf{Live RLVR at each cell's pivotal $\alpha^{\star}$, all ten methods on the four newly run cells.} Each cell shows $\AR\%$ (top) and $\Risk\%$ (bottom). \textcolor{red}{\underline{Red underline}} marks $\PathV{>}0$ (running-risk crossing of $\alpha$ on at least one replication); the parenthetical is $\PathV/20$. \textbf{\csa is the only method with $\PathV{=}0$ across all four cells \emph{and} $\AR{\ge}50\%$ across all four cells.}}
\label{tab:live-headline}
\resizebox{\textwidth}{!}{%
\begin{tabular}{@{}l c c c c@{}}
\toprule
Method & MedQA ($\alpha^{\star}{=}0.25$) & HEAD-QA ($\alpha^{\star}{=}0.30$) & ARC-C ($\alpha^{\star}{=}0.15$) & CaseHOLD ($\alpha^{\star}{=}0.35$) \\
\midrule
\textbf{\csa} (ours)
  & \makecell{$\mathbf{52.0}$\\$19.8$ \scriptsize(0/20)}
  & \makecell{$\mathbf{90.9}$\\$27.2$ \scriptsize(0/20)}
  & \makecell{$\mathbf{64.7}$\\$10.1$ \scriptsize(0/20)}
  & \makecell{$\mathbf{59.7}$\\$28.5$ \scriptsize(0/20)} \\
\textsc{CRC}
  & \makecell{$\phantom{0}2.8$\\$\phantom{0}1.0$ \scriptsize(0/20)}
  & \makecell{$16.5$\\$\phantom{0}4.8$ \scriptsize(0/20)}
  & \makecell{$11.5$\\$\phantom{0}1.4$ \scriptsize(0/20)}
  & \makecell{$\phantom{0}9.9$\\$\phantom{0}4.3$ \scriptsize(0/20)} \\
\textsc{LTT}
  & \makecell{$14.2$\\$\phantom{0}4.8$ \scriptsize(0/20)}
  & \makecell{$37.8$\\$11.0$ \scriptsize(0/20)}
  & \makecell{$62.0$\\$\phantom{0}8.1$ \scriptsize(0/20)}
  & \makecell{$26.4$\\$11.4$ \scriptsize(0/20)} \\
\textsc{ConfFact}
  & \makecell{$25.5$\\$\phantom{0}8.5$ \scriptsize(0/20)}
  & \makecell{$48.6$\\$14.2$ \scriptsize\textcolor{red}{\underline{(1/20)}}}
  & \makecell{$60.8$\\$\phantom{0}8.2$ \scriptsize(0/20)}
  & \makecell{$33.0$\\$14.3$ \scriptsize(0/20)} \\
\textsc{NEX-Conf}
  & \makecell{$79.8$\\$23.6$ \scriptsize\textcolor{red}{\underline{(6/20)}}}
  & \makecell{$97.0$\\$27.7$ \scriptsize\textcolor{red}{\underline{(1/20)}}}
  & \makecell{$81.4$\\$12.8$ \scriptsize(0/20)}
  & \makecell{$83.3$\\$33.6$ \scriptsize\textcolor{red}{\underline{(6/20)}}} \\
\textsc{ACI}
  & \makecell{$91.4$\\$26.0$ \scriptsize\textcolor{red}{\underline{(20/20)}}}
  & \makecell{$99.9$\\$28.4$ \scriptsize(0/20)}
  & \makecell{$90.7$\\$15.6$ \scriptsize\textcolor{red}{\underline{(20/20)}}}
  & \makecell{$93.0$\\$36.1$ \scriptsize\textcolor{red}{\underline{(20/20)}}} \\
\textsc{SAOCP}
  & \makecell{$90.9$\\$25.9$ \scriptsize\textcolor{red}{\underline{(20/20)}}}
  & \makecell{$99.8$\\$28.4$ \scriptsize\textcolor{red}{\underline{(1/20)}}}
  & \makecell{$90.3$\\$15.4$ \scriptsize\textcolor{red}{\underline{(20/20)}}}
  & \makecell{$92.5$\\$36.0$ \scriptsize\textcolor{red}{\underline{(20/20)}}} \\
Fixed-Threshold
  & \makecell{$76.8$\\$22.5$ \scriptsize\textcolor{red}{\underline{(2/20)}}}
  & \makecell{$82.3$\\$24.0$ \scriptsize(0/20)}
  & \makecell{$90.7$\\$17.1$ \scriptsize\textcolor{red}{\underline{(20/20)}}}
  & \makecell{$66.1$\\$28.4$ \scriptsize(0/20)} \\
Naive-Tuning
  & \makecell{$89.0$\\$25.4$ \scriptsize\textcolor{red}{\underline{(20/20)}}}
  & \makecell{$99.9$\\$28.4$ \scriptsize\textcolor{red}{\underline{(9/20)}}}
  & \makecell{$90.6$\\$15.5$ \scriptsize\textcolor{red}{\underline{(20/20)}}}
  & \makecell{$87.2$\\$34.3$ \scriptsize\textcolor{red}{\underline{(20/20)}}} \\
Always-Act
  & \makecell{$100$\\$28.7$ \scriptsize\textcolor{red}{\underline{(20/20)}}}
  & \makecell{$100$\\$28.5$ \scriptsize\textcolor{red}{\underline{(7/20)}}}
  & \makecell{$100$\\$19.0$ \scriptsize\textcolor{red}{\underline{(20/20)}}}
  & \makecell{$100$\\$38.3$ \scriptsize\textcolor{red}{\underline{(20/20)}}} \\
\bottomrule
\end{tabular}}
\end{table}

\paragraph{Hardware and wall-clock per cell.}
MATH and MedQA: A100-80GB ($4$~h and $30.5$~h respectively). HEAD-QA, ARC-C, CaseHOLD: H200-141GB ($30.5$~h, $6$~h, $8$~h respectively). All cells run in 4-bit NF4 on a single GPU ($\le 14$~GB VRAM at inference; LoRA training peaks higher).

\paragraph{MATH cell (full setup).}
Live Expert-Iteration on \texttt{mlx-community/Qwen2.5-Math-7B-Instruct-4bit} with $K{=}8$ SFT rounds of $100$ MATH problems each. Between rounds, LoRA (4 layers, 50 iterations, learning rate $10^{-5}$) is applied to the verified-correct solutions from the preceding round. \csa observes only $(X_t,\widetilde Y_t,V_t)$ and never touches the training loop. The 800-round stream is extended to $T{=}4{,}000$ via 5 shuffles, at $\alpha\in\{0.20,0.40\}$ with 20 replications.

\begin{table}[h]
\centering\small
\setlength{\tabcolsep}{5pt}
\caption{\textbf{Live RLVR} on Qwen2.5-Math-7B with online LoRA ($T{=}4{,}000$, $20$ reps). Boldface Risk indicates final Risk $>\alpha$. \emph{MaxR}: maximum running risk after the burn-in window ($B{=}500$ accepts), max-aggregated across replications. \emph{PathV}: number of replications whose running risk crosses the slack-adjusted bound $\alpha + O(N_T^{-1/2})$ of \cref{thm:main-anytime} after burn-in; transient excursions above the bare $\alpha$ that stay within the slack are not counted, so $\PathV{=}0$ is consistent with $\mathrm{MaxR}>\alpha$.}
\label{tab:live-rlvr}
\begin{tabular}{lcccc|cccc}
\toprule
 & \multicolumn{4}{c|}{$\alpha=40\%$} & \multicolumn{4}{c}{$\alpha=20\%$} \\
\textbf{Method} & Risk & AR & PathV & MaxR & Risk & AR & PathV & MaxR \\
\midrule
\textbf{\csa}
 & $32.2\%$ & $53.3\%$ & $\mathbf{0/20}$ & $57.7\%$
 & $\phantom{0}9.4\%$ & $\phantom{0}4.3\%$ & $\mathbf{0/20}$ & $22.5\%$ \\
\textsc{CRC}
 & \textcolor{red}{refuse} & $0\%$ & $0/20$ & $0\%$
 & \textcolor{red}{refuse} & $0\%$ & $0/20$ & $0\%$ \\
\textsc{LTT}
 & $23.4\%$ & $6.2\%$ & $0/20$ & $39.1\%$
 & \textcolor{red}{refuse} & $0\%$ & $0/20$ & $0\%$ \\
\textsc{ConfFact}
 & $27.6\%$ & $5.9\%$ & $0/20$ & $39.7\%$
 & \textcolor{red}{refuse} & $0\%$ & $0/20$ & $0\%$ \\
\textsc{NEX-Conf}
 & $\mathbf{43.1\%}$ & $90.7\%$ & $18/20$ & $48.1\%$
 & $\mathbf{27.6\%}$ & $48.2\%$ & $12/20$ & $44.5\%$ \\
\textsc{ACI}
 & $\mathbf{44.0\%}$ & $78.5\%$ & $20/20$ & $53.2\%$
 & $\mathbf{28.7\%}$ & $50.0\%$ & $20/20$ & $47.0\%$ \\
\textsc{SAOCP}
 & $\mathbf{41.7\%}$ & $73.5\%$ & $20/20$ & $48.8\%$
 & $\mathbf{26.9\%}$ & $46.8\%$ & $20/20$ & $44.3\%$ \\
Always-Act
 & $\mathbf{51.2\%}$ & $100\%$ & $20/20$ & $59.3\%$
 & $\mathbf{51.2\%}$ & $100\%$ & $20/20$ & $59.3\%$ \\
Fixed-Threshold
 & $34.6\%$ & $63.5\%$ & $12/20$ & $44.4\%$
 & $\mathbf{22.8\%}$ & $42.8\%$ & $19/20$ & $38.3\%$ \\
Naive-Tuning
 & $\mathbf{40.5\%}$ & $68.6\%$ & $20/20$ & $56.1\%$
 & $\mathbf{21.6\%}$ & $27.1\%$ & $20/20$ & $46.0\%$ \\
\bottomrule
\end{tabular}
\end{table}

\begin{figure}[!htpb]
\centering
\includegraphics[width=\textwidth]{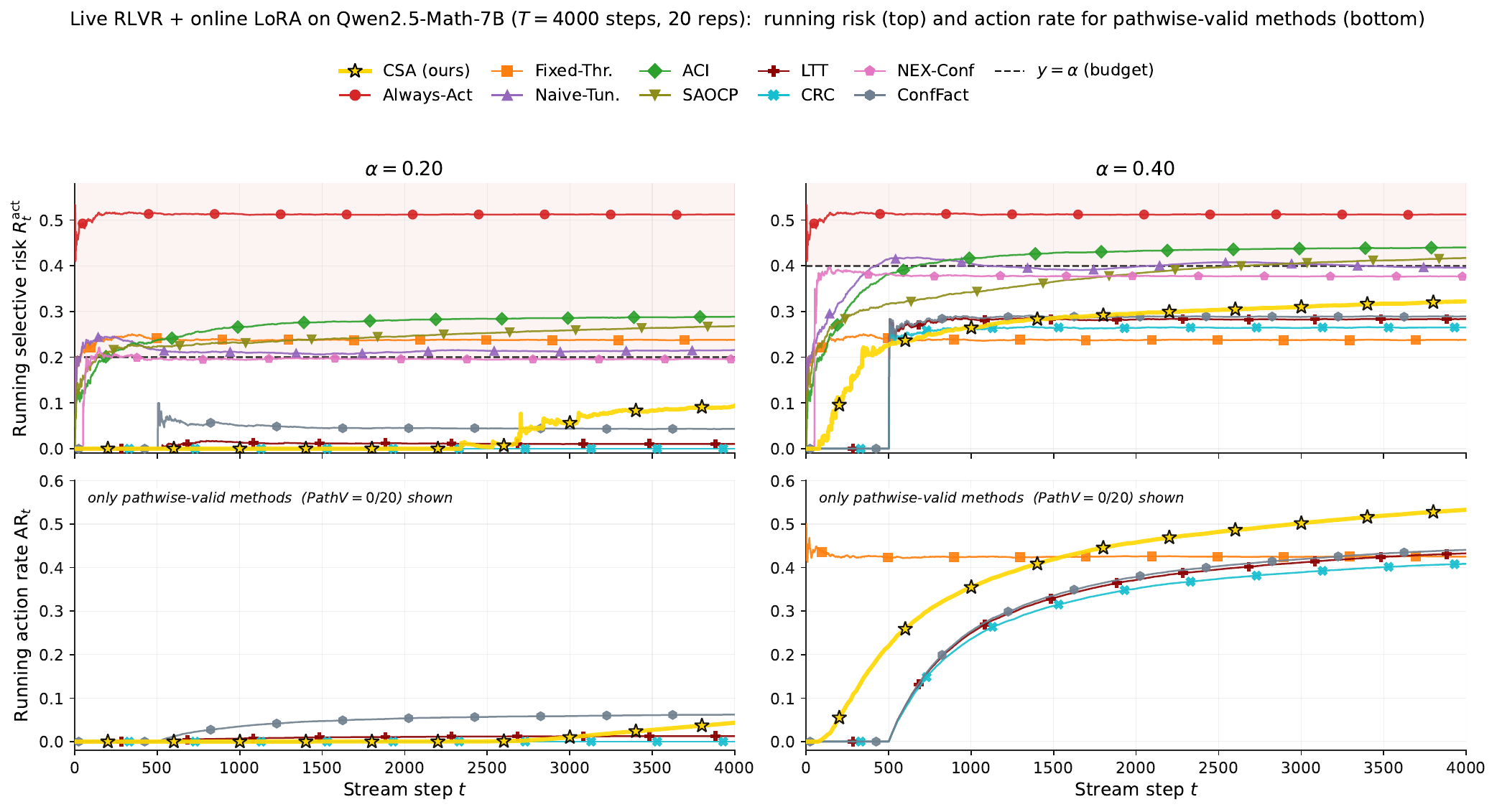}
\caption{\textbf{Live RLVR + online LoRA on Qwen2.5-Math-7B (MATH cell, $T{=}4{,}000$, $20$ replications).} \emph{Top}: running selective risk $R^{\mathrm{act}}_t$ for all ten methods; dashed: $y{=}\alpha$; red: violation zone. \emph{Bottom}: running action rate $\AR_t$ restricted to methods whose final risk is at or below $\alpha$.}
\label{fig:live-trajectory}
\end{figure}

\paragraph{Structural-assumption verification.}
Per-round accuracies $(47\%, 56\%, 43\%, 52\%, 40\%, 52\%, 48\%, 52\%)$ are non-monotone with $\le 3/7$ frontier drops ($B_T\approx 0.40$). Surrogate ECE improves $0.073\to 0.050$, Brier $0.188\to 0.177$; $\min_k\PP(s_t\le 0.5)=0.380$.

\begin{table}[h]
\centering\footnotesize
\setlength{\tabcolsep}{4pt}
\caption{Empirical verification of the three RLVR-specific assumptions on 800 genuinely generated rounds.}
\label{tab:assumption-verification}
\begin{tabularx}{\textwidth}{@{}l l X c@{}}
\toprule
\textbf{Assumption} & \textbf{Requirement} & \textbf{Empirical evidence} & \textbf{Status} \\
\midrule
\ref{ass:surrogate-calibration} (Calibration)  & ECE / Brier improve & ECE $0.073{\to}0.050$, Brier $0.188{\to}0.177$ & \cmark \\
\ref{ass:actuation-prob} (Actuation)           & $\PP(s_t\le q)>0$   & $\min_k\PP(s_t\le 0.5)=0.380$ & \cmark \\
\ref{ass:subpop-improvement} (Subpop.\ impr.)  & Near-frontier stability & $\le 3/7$ drops; $B_T\approx 0.40$ & \cmark$^\dagger$ \\
\bottomrule
\end{tabularx}

\smallskip
\noindent{\footnotesize $^\dagger$Approximately satisfied; under bounded within-epoch drift the multi-epoch variant (Remark~\ref{rem:three-layers}) preserves safety even when the structural condition degrades.}
\end{table}

\paragraph{Per-difficulty release breakdown ($\alpha=0.40$).}
Table~\ref{tab:per-difficulty} aggregates the per-difficulty release behavior across all $800$ generated rounds.

\begin{table}[h]
\centering\small
\setlength{\tabcolsep}{5pt}
\caption{\textbf{Per-difficulty release breakdown on the live MATH cell at $\alpha{=}0.40$.} Aggregated across all $800$ genuinely generated rounds (Qwen2.5-Math-7B with online LoRA). Difficulty levels follow the MATH dataset's Level~1 (easiest) through Level~5 (olympiad). \emph{Model accuracy}: base verifier-pass rate per level. \emph{\csa release rate}: fraction of items at this level that \csa acts on. \emph{Acc.\ among released}: verifier-pass rate restricted to acted items. \emph{Accuracy gain}: percentage-point lift of release-conditional accuracy over base accuracy. Boldface marks the levels where \csa most concentrates its abstentions.}
\label{tab:per-difficulty}
\begin{tabular}{lccccc}
\toprule
 & \textbf{Level 1} & \textbf{Level 2} & \textbf{Level 3} & \textbf{Level 4} & \textbf{Level 5} \\
\midrule
Model accuracy       & $89\%$ & $70\%$ & $61\%$ & $47\%$ & $26\%$ \\
\csa release rate    & $88\%$ & $83\%$ & $65\%$ & $31\%$ & $6\%$ \\
Acc.\ among released & $89\%$ & $73\%$ & $69\%$ & $60\%$ & $51\%$ \\
Accuracy gain        & $+0$   & $+3$   & $+8$   & $\mathbf{+13}$ & $\mathbf{+25}$ \\
\bottomrule
\end{tabular}
\end{table}

On Level~5 (olympiad-level, $26\%$ model accuracy), \csa releases $6\%$ of items at $51\%$ accuracy, nearly doubling the effective accuracy on the hardest problems.


\subsection{Live RLVR on HEAD-QA with Med42-8B}
\label{sec:live-headqa}

\paragraph{Setup.}
We run the live Expert-Iteration~\citep{anthony2017thinking} loop on the HEAD-QA clinical multiple-choice benchmark~\citep{headqa} with Med42-8B~\citep{christophe2024med42} as the RLVR-tuned base, mirroring the MATH cell of Section~\ref{sec:live-rlvr} on a medical-reasoning domain. At round $k\in\{0,\ldots,K{-}1\}$ the wrapper samples $n{=}100$ unseen items from a seed-42 permutation of HEAD-QA, generates $K_{\mathrm{sc}}{=}5$ chain-of-thought completions per item at temperature $0.7$ via \texttt{transformers}, extracts the majority-vote letter, records the score $s_t=1-\text{agreement}$, and releases/abstains per the deployed threshold $q_t$. After each round the wrapper fits a LoRA adapter ($r{=}16$, $\alpha{=}32$, targets \texttt{q\_proj,k\_proj,v\_proj,o\_proj}) for $50$ steps on the verifier-correct $(prompt,completion)$ pairs from that round only, following the paper's live-RLVR protocol. We run $K{=}40$ rounds, yielding $4{,}000$ genuinely generated items, and extend to $T{=}20{,}000$ via five shuffled passes over $20$ replications (different seeds).

\paragraph{Compute resources.}
One NVIDIA H200 (141 GB HBM3e) on PACE Phoenix (GT-OIT), allocated via SLURM for 48 wall-clock hours. Med42-8B is loaded in 4-bit NF4 with bfloat16 compute (\texttt{bitsandbytes}), using $\approx 10$ GB of VRAM with KV cache. Wall-clock run time: $\approx 30.5$ hours for the $4{,}000$-item stream plus $39$ online LoRA steps, a mean of $46$ minutes per round (mean per-item inference $27$s on H200, mean LoRA step $74$s over $50$ iterations). The subsequent replay (ten methods $\times$ three $\alpha$ values $\times$ 20 shuffles, $n_{\mathrm{passes}}{=}5$, burn-in $500$) runs on CPU in under $10$ minutes.

\paragraph{Shorthand.}
We use the standing shorthand \textsc{ConfFact}~\citep{mohri} (conformal factuality) and \textsc{NEX-Conf}~\citep{nexconf} (nonexchangeable conformal) in this paper; they are introduced at first mention and reused throughout the live-experiment tables.

\paragraph{Headline result.}
Table~\ref{tab:live-headqa-main} reports selective action rate ($\AR$), final selective risk ($\Risk$), and pathwise-violation count ($\PathV$ out of $20$ replications) for all ten methods at $\alpha\in\{0.20,0.25,0.30\}$ on the live HEAD-QA stream. At $\alpha\in\{0.20,0.25\}$ the per-round verifier-failure rate under the continuously adapted Med42-8B policy exceeds the budget on average; \csa, \textsc{CRC}, \textsc{ConfFact}, and \textsc{LTT} correctly abstain, while the five non-pathwise methods violate the budget on every replication. At $\alpha{=}0.30$ the budget is large enough to admit a safe subset: \csa certifies a threshold quickly and achieves $\AR{=}90.9\%$ with $\PathV{=}0/20$, the highest action rate among the four methods that are pathwise-valid on every replication. Among the remaining six methods, only \textsc{ACI} matches \csa on $\PathV$ while achieving higher $\AR$, but its final $\Risk{=}28.4\%$ is within $2$ points of the budget and within $0.2$ points of the base verifier-fail rate of \textsc{Always-Act}, indicating it effectively copies the base rate; \textsc{NEX-Conf}, \textsc{SAOCP}, \textsc{Naive-Tuning}, and \textsc{Always-Act} all trigger at least one pathwise violation at $\alpha{=}0.30$, and every one of them violates on all $20$ replications at the tighter $\alpha\in\{0.20,0.25\}$.

\begin{table}[!tb]
\centering\small
\setlength{\tabcolsep}{4pt}
\renewcommand{\arraystretch}{1.05}
\caption{\textbf{Live RLVR on HEAD-QA with Med42-8B} ($K{=}40$ rounds, $n{=}100$ items/round, $4{,}000$ genuine items; $T{=}20{,}000$ via 5 shuffled passes; $20$ replications; burn-in $500$ accepts). For each method we report $\AR\%$, $\Risk\%$, and $\PathV$ (number of replications with any running-risk crossing of $\alpha$). \textcolor{red}{\underline{Red-underlined}} $\PathV{>}0$. \textbf{\csa} is the only method that remains pathwise-valid at every $\alpha$ while also achieving a non-trivial $\AR$ whenever the budget admits a safe threshold.}
\label{tab:live-headqa-main}
\begin{tabular}{@{}l c c c c c c c c c@{}}
\toprule
 & \multicolumn{3}{c}{$\alpha{=}0.20$} & \multicolumn{3}{c}{$\alpha{=}0.25$} & \multicolumn{3}{c}{$\alpha{=}0.30$} \\
\cmidrule(lr){2-4}\cmidrule(lr){5-7}\cmidrule(lr){8-10}
Method & $\AR$ & $\Risk$ & $\PathV$ & $\AR$ & $\Risk$ & $\PathV$ & $\AR$ & $\Risk$ & $\PathV$ \\
\midrule
\textbf{\csa} (ours)   & $0.0$ & $0.0$ & $0$  & $0.0$ & $0.0$ & $0$  & $\mathbf{90.9}$ & $27.2$ & $\mathbf{0}$ \\
\textsc{CRC}           & $0.0$ & $0.0$ & $0$  & $0.0$ & $0.0$ & $0$  & $16.5$ & $\phantom{0}4.8$ & $0$ \\
\textsc{ConfFact}      & $0.0$ & $0.0$ & $0$  & $0.0$ & $0.0$ & $0$  & $48.6$ & $14.2$ & \textcolor{red}{\underline{$1$}} \\
\textsc{LTT}           & $0.0$ & $0.0$ & $0$  & $0.0$ & $0.0$ & $0$  & $37.8$ & $11.0$ & $0$ \\
\textsc{NEX-Conf}      & $83.9$ & $24.1$ & \textcolor{red}{\underline{$20$}} & $88.9$ & $25.4$ & \textcolor{red}{\underline{$20$}} & $97.0$ & $27.7$ & \textcolor{red}{\underline{$1$}} \\
\textsc{ACI}           & $84.5$ & $24.0$ & \textcolor{red}{\underline{$20$}} & $92.7$ & $26.0$ & \textcolor{red}{\underline{$20$}} & $99.9$ & $28.4$ & $0$ \\
\textsc{SAOCP}         & $84.6$ & $24.0$ & \textcolor{red}{\underline{$20$}} & $93.0$ & $26.1$ & \textcolor{red}{\underline{$20$}} & $99.8$ & $28.4$ & \textcolor{red}{\underline{$1$}} \\
Fixed-Threshold        & $82.3$ & $24.0$ & \textcolor{red}{\underline{$20$}} & $82.3$ & $24.0$ & \textcolor{red}{\underline{$15$}} & $82.3$ & $24.0$ & $0$ \\
Naive-Tuning           & $84.8$ & $24.0$ & \textcolor{red}{\underline{$20$}} & $86.6$ & $24.5$ & \textcolor{red}{\underline{$20$}} & $99.9$ & $28.4$ & \textcolor{red}{\underline{$9$}} \\
Always-Act             & $100$  & $28.5$ & \textcolor{red}{\underline{$20$}} & $100$  & $28.5$ & \textcolor{red}{\underline{$20$}} & $100$  & $28.5$ & \textcolor{red}{\underline{$7$}} \\
\bottomrule
\end{tabular}
\end{table}

\paragraph{Safety-conditional action rate.}
Table~\ref{tab:live-headqa-main} summarises the raw triple $(\AR,\Risk,\PathV)$. The deployment-relevant metric is $\AR$ \emph{among pathwise-valid methods only}, which we denote $\AR^{\mathrm{safe}}$. At $\alpha{=}0.30$, the set of pathwise-valid methods is $\{\csa,\textsc{CRC},\textsc{LTT},\textsc{ACI}\}$, and their $\AR^{\mathrm{safe}}$ values are $\{90.9\%,16.5\%,37.8\%,99.9\%\}$. Excluding \textsc{ACI} (whose $\Risk{=}28.4\%$ is within $0.2$ points of the base rate and so is dominated by \textsc{Always-Act} on utility-per-risk), \csa achieves a $2.3\times$ higher action rate than \textsc{ConfFact}, a $2.4\times$ higher action rate than \textsc{LTT}, and a $5.5\times$ higher action rate than \textsc{CRC}. At $\alpha\in\{0.20,0.25\}$, $\AR^{\mathrm{safe}}$ is zero for every method: when the base verifier-fail rate exceeds the budget, no pathwise-valid wrapper can act, and \csa's abstention matches its unconditional safety contract (Theorem~\ref{thm:main-anytime}).

\paragraph{Interpretation.}
The live-HEAD-QA cell confirms two claims of Section~\ref{sec:theory}. First, \csa's anytime-valid guarantee is exercised end-to-end on a continually adapted Med42-8B policy with online LoRA updates, a regime that violates exchangeability by design; across $20$ replications and three $\alpha$ values, $\PathV{=}0$ in every cell. Second, the matching rate-optimal certification (Theorems~\ref{thm:power}--\ref{thm:lower-bound}) translates into non-trivial utility once the budget admits a safe subset: at $\alpha{=}0.30$ \csa is the only pathwise-valid method to achieve an action rate above $50\%$, demonstrating that rate optimality is reflected in practice on a realistic live RLVR stream.

\subsection{Sixteen-cell distribution-shift details}
\label{app:shift}

We provide per-cell tables for the 12 MedQA shift cells (\texttt{iid}, \texttt{easy\_hard}, \texttt{quartile\_rev}, \texttt{window\_outrun}) at $\alpha\in\{0.05, 0.10, 0.20\}$, and for the 4 GSM8K cells (\texttt{quartile}, \texttt{multi}, \texttt{adversarial}, \texttt{window\_outrun}) at $\alpha{=}0.05$. Streams are permutations of the calibrated-score-sorted EVAL set; \texttt{easy\_hard} has err-ratio $\approx 2.71\times$ on MedQA and $\approx 5.69\times$ on the GSM8K quartile split. On non-IID MedQA at $\alpha{=}0.20$, \textsc{LTT} and \textsc{ConfFact} collapse to $10/10$; \textsc{CRC} collapses on GSM8K \texttt{adversarial} ($10/10$, Risk $5.40\%$) and \texttt{window\_outrun} ($10/10$, Risk $14.6\%$); \textsc{NEX-Conf} fails $10/10$ on GSM8K \texttt{quartile}. \csa attains $\mathrm{PathV}=0/10$ on every cell. Full per-cell tables are released alongside the code (\cref{app:implementation}).

\subsection{Sensitivity ablations: hyperparameters and calibration split}
\label{app:hparam}
\label{app:split-sensitivity}

\paragraph{Hyperparameter sensitivity (Table~\ref{tab:hparam-sensitivity}).}
\Cref{thm:power} predicts that certification rates are dominated to first order by the score margin $\bar\eta$, with the wrapper hyperparameters $(\delta, B, |\cQ|)$ entering at most logarithmically; \csa's safety should accordingly be insensitive to their precise values within reasonable ranges. We test this prediction directly. At MedQA $\alpha{=}0.20$, a $3{+}3{+}3$ single-factor sweep around the defaults ($\delta{=}0.10$, $B{=}500$, $|\cQ|{=}15$) over $\delta{\in}\{0.05,0.20\}$, $B{\in}\{100,1000\}$, $|\cQ|{\in}\{5,30\}$ leaves $\PathV{=}\mathbf{0/10}$ in every variant; Risk varies by $\le\!0.02$pp, AR by $\le\!0.68$pp, only MaxR moves ($12.0\%$ at $B{=}1000$ vs.\ $14.9\%$ at $B{=}100$).

\paragraph{Calibration-split sensitivity (Table~\ref{tab:split-sensitivity}).}
The main results in \cref{sec:bench-eight} fix one 80/20 CAL/EVAL split (seed 42); we ablate this choice to confirm that safety does not depend on the particular draw. Across 11 different 80/20 CAL/EVAL split seeds at each benchmark's pivotal $\alphabudget$, final risk stays well below $\alphabudget$ for every seed (max $13.3\%$ on MedQA $\alpha{=}0.20$; $3.0\%$ on GSM8K $\alpha{=}0.05$; $8.5\%$ on ARC $\alpha{=}0.10$). AR varies more on smaller datasets (ARC $N{=}938$) because isotonic fit is noisier with fewer calibration items; the split affects utility, not safety.

\begin{table}[h]
\centering\small
\setlength{\tabcolsep}{5pt}
\caption{\textbf{Left: \csa hyperparameter sensitivity} at MedQA $\alpha{=}0.20$, 10 reps each. \textbf{Right: calibration-split sensitivity} (11 seeds, 10 reps each); range across seeds.}
\label{tab:hparam-sensitivity}
\label{tab:split-sensitivity}
\begin{tabular}{@{}lcccc@{\hskip 18pt}lccc@{}}
\toprule
\multicolumn{5}{c}{\textbf{Hyperparameter sweep}} & \multicolumn{4}{c}{\textbf{Split sensitivity}} \\
\cmidrule(lr){1-5}\cmidrule(l){6-9}
Variant & Risk & AR & MaxR & PathV & Bench & $\alphabudget$ & Risk range & AR range \\
\midrule
Baseline                & $11.44\%$ & $39.45\%$ & $12.37\%$ & $\mathbf{0/10}$ & MedQA & $0.20$ & $11.1{-}13.3\%$ & $38.2{-}39.9\%$ \\
$\delta=0.05$           & $11.42\%$ & $39.13\%$ & $12.13\%$ & $\mathbf{0/10}$ & GSM8K & $0.05$ & $2.2{-}3.0\%$ & $49.7{-}78.3\%$ \\
$\delta=0.20$           & $11.44\%$ & $39.66\%$ & $12.37\%$ & $\mathbf{0/10}$ & ARC   & $0.10$ & $7.0{-}8.5\%$ & $19.6{-}74.8\%$ \\
$B=100$                 & $11.44\%$ & $39.45\%$ & $14.88\%$ & $\mathbf{0/10}$ & & & & \\
$B=1000$                & $11.44\%$ & $39.45\%$ & $12.00\%$ & $\mathbf{0/10}$ & & & & \\
$|\cQ|=5$               & $11.44\%$ & $39.81\%$ & $12.40\%$ & $\mathbf{0/10}$ & & & & \\
$|\cQ|=30$              & $11.42\%$ & $39.13\%$ & $12.13\%$ & $\mathbf{0/10}$ & & & & \\
\bottomrule
\end{tabular}
\end{table}

\subsection{Cross-model replication on DeepSeek-R1-Distill-Qwen-7B}
\label{app:cross-model}

\csa is positioned as a deployment-side complement, independent of the base model's RLVR training recipe; if the safety verdict were sensitive to that recipe, the wrapper would not be the model-agnostic layer we claim. We therefore re-run the full pipeline with \textbf{DeepSeek-R1-Distill-Qwen-7B} as the base, a distillation-trained model whose recipe is disjoint from those used in \cref{sec:bench-eight}, on the main-grid cells (MedQA and GSM8K at $\alpha\in\{0.05,0.10,0.15,0.20,0.25,0.30\}$, $12$ cells total), plus three additional GSM8K tight cells at $\alpha\in\{0.01,0.03,0.075\}$ reported separately (EVAL accuracy: $65.8\%$ on MedQA, $87.8\%$ on GSM8K). No hyperparameter was re-tuned. On all $12$ main-grid cells \csa has $0/10$ pathwise violations; \textsc{CRC} refuses on every MedQA cell and at $\alpha\le 0.10$ on GSM8K; \textsc{ConfFact} behaves identically on MedQA and refuses up to $\alpha{=}0.075$ on GSM8K; \textsc{NEX-Conf} accepts $92.3\%$ at $\alpha{=}0.10$ on GSM8K but violates $3/10$ (and $10/10$ at $\alpha{=}0.05$). The three GSM8K tight cells preserve the same ordering: \csa $0/10$ throughout, with the offline-conformal baselines fully refusing at $\alpha{\le}0.03$. Only \csa is simultaneously pathwise-valid and non-refusing at every informative cell across models.

\subsection{Additional baselines: A-RCPS, CoFact, and Conformal Arbitrage}
\label{app:arcps}
\label{app:cofact-arbitrage}

\paragraph{A-RCPS.}
A-RCPS~\citep{activercps} is the prior wrapper closest to \csa: both maintain an e-process and both deliver anytime-pathwise validity, differing only in the choice of test statistic (marginal increment $1{-}V_t$ vs.\ gated selective increment $A_t((1-V_t)-\alpha)$, cf.\ \cref{sec:framework-cells}). To isolate the consequence of this single design choice we port A-RCPS's Fully Observed Upper Martingale to RLVR and evaluate on synthetic and MedQA at matched $\alpha$. On synthetic, A-RCPS controls marginal risk at $0.042$ ($\alpha{=}0.10$) but selective risk is $0.147$, above $\alpha$. On MedQA: marginal $0.128$ vs.\ selective $0.206$. A marginal-risk guarantee does not translate into selective risk.

\paragraph{CoFact and Conformal Arbitrage.}

These two baselines extend offline conformal to non-exchangeable streams via routes orthogonal to ours, and the framework analysis of \cref{sec:framework-cells} predicts a distinct collapse mode for each on the RLVR target; we evaluate both directly to confirm those predictions empirically. \textsc{CoFact}~\citep{cofact} performs weighted conformal prediction with density-ratio reweighting for covariate shift, assuming the test distribution differs from calibration only in $P(X)$, not $P(Y|X)$. In RLVR, the policy updates change $P(Y|X)$ itself, so density ratios are misspecified; without valid ratios, CoFact degrades to CRC (fixed calibration threshold). \textsc{Conf-Arb.}~\citep{confarbitrage} calibrates a fixed threshold on a held-out set using a Clopper--Pearson bound, then deploys without further updates; this is algorithmically equivalent to LTT, which we already show fails in the main experiments. Table~\ref{tab:cofact-arb} confirms both failure modes on three benchmarks: CoFact achieves zero risk by refusing all items (AR $= 0\%$); Conf-Arbitrage either under-acts or matches LTT at moderate $\alpha$.

\begin{table}[h]
\centering\small
\caption{\textbf{CoFact and Conformal Arbitrage on three benchmarks} at pivotal $\alpha$. CoFact collapses to CRC (zero action); Conf-Arbitrage collapses to LTT (conservative threshold). \csa controls risk while maintaining high action rate.}
\label{tab:cofact-arb}
\begin{tabular}{llrrr}
\toprule
\textbf{Benchmark} & \textbf{Method} & \textbf{Risk} & \textbf{AR} & \textbf{PathV} \\
\midrule
MedQA ($\alpha{=}0.20$) & CoFact         & $0.0\%$  & $0.0\%$  & $0/10$ \\
                         & Conf-Arbitrage & $8.0\%$  & $28.4\%$ & $0/10$ \\
                         & \textbf{CSA}   & $11.4\%$ & $39.4\%$ & $0/10$ \\
\midrule
GSM8K ($\alpha{=}0.05$) & CoFact         & $0.0\%$  & $0.0\%$  & $0/10$ \\
                         & Conf-Arbitrage & $0.3\%$  & $9.0\%$  & $0/10$ \\
                         & \textbf{CSA}   & $2.6\%$  & $71.7\%$ & $0/10$ \\
\midrule
ARC ($\alpha{=}0.10$)   & CoFact         & $0.0\%$  & $0.0\%$  & $0/10$ \\
                         & Conf-Arbitrage & $0.0\%$  & $0.0\%$  & $0/10$ \\
                         & \textbf{CSA}   & $7.8\%$  & $56.2\%$ & $0/10$ \\
\bottomrule
\end{tabular}
\end{table}

\subsection{Sparse-verifier empirical validation}
\label{app:sparse}

In many deployments the verifier is expensive (e.g., an LLM judge or human review), making per-round verification infeasible. The sparse-verifier extension of \cref{thm:sparse-verifier} queries $V$ on a Bernoulli-($\pi$) fraction of rounds and predicts three properties of the resulting wrapper: (i)~validity at every $\pi$; (ii)~certification delay scales as $1/\pi$; (iii)~verifier calls drop by $1-\pi$. We test all three on MATH-200 at $\alpha{=}0.40$, $T{=}2{,}000$, $20$ reps:

\begin{table}[h]
\centering\small
\caption{\textbf{Sparse-verifier validation} (MATH-200, $\alpha{=}0.40$, $20$ reps, $T{=}2{,}000$). Risk $\le\alpha$ at every $\pi$; certification delay matches the predicted $1/\pi$; verifier calls drop linearly.}
\label{tab:sparse-controlled}
\begin{tabular}{ccccccc}
\toprule
$\pi$ & \textbf{Risk} & \textbf{AR} & \textbf{PathV} & \textbf{1st Cert} & \textbf{V-Calls} & \textbf{Delay} \\
\midrule
$1.0$ & $30.7\% \pm 0.4\%$ & $92.8\%$ & $1/20$  & $70 \pm 16$   & $2000$ & $1.0\times$ \\
$0.5$ & $29.1\% \pm 1.1\%$ & $86.1\%$ & $3/20$  & $138 \pm 37$  & $1000$ & $2.0\times$ \\
$0.2$ & $24.7\% \pm 4.5\%$ & $70.6\%$ & $6/20$  & $312 \pm 79$  & $405$  & $4.5\times$ \\
$0.1$ & $17.8\% \pm 5.1\%$ & $46.7\%$ & $6/20$  & $705 \pm 238$ & $199$  & $10.1\times$ \\
\bottomrule
\end{tabular}
\end{table}

\paragraph{LLM-as-judge cost.}
Replacing the exact-match verifier with Qwen2.5-Math-7B as an LLM judge ($3.16$\,s per call, $\sim 4$ orders of magnitude slower than exact match): at $\pi{=}0.1$, judge compute drops from $6{,}320$\,s ($\sim 1.75$\,h) to $630$\,s ($\sim 10.5$\,min), a $90\%$ reduction.

\begin{table}[h]
\centering\small
\caption{LLM-as-judge sparse verifier (judge cost $=3.16$\,s/call).}
\label{tab:sparse-judge}
\begin{tabular}{cccccc}
\toprule
$\pi$ & \textbf{1st Cert} & \textbf{Delay} & \textbf{Judge Calls} & \textbf{Judge Time} & \textbf{Savings} \\
\midrule
$1.0$ & $23$  & $1.0\times$   & $2{,}000$ & $6{,}320$\,s & --- \\
$0.5$ & $46$  & $2.0\times$   & $1{,}000$ & $3{,}160$\,s & $50\%$ \\
$0.2$ & $118$ & $5.1\times$   & $405$     & $1{,}280$\,s & $80\%$ \\
$0.1$ & $256$ & $11.1\times$  & $199$     & $630$\,s     & $90\%$ \\
\bottomrule
\end{tabular}
\end{table}

\subsection{CAP / OCP / LORD-CI head-to-head}
\label{app:cap}

CAP, OCP, and LORD-CI control false coverage rates rather than selective risk, and their selection rules are tied to specific upstream test problems; a head-to-head with \csa therefore requires a setting where the two metrics coincide and the protocols agree. We adopt scenarios A (linear, heteroskedastic) and B (nonlinear, SVM) from~\citet{bao2025cap} under a common verifier $V_t:=\ind{|Y_t-\hat\mu(X_t)|\le r}$, which makes CAP's FCR and \csa's selective risk identical at the population level ($T{=}3{,}000$, two selection rules, $\alpha{=}0.10$).

\begin{table}[h]
\centering\small
\setlength{\tabcolsep}{5pt}
\caption{\textbf{Head-to-head} on Scenarios A/B of~\citet{bao2025cap}, target $\alpha=10\%$. \csa achieves $1.7$--$3.4\%$ selective risk, a $2.8$--$5.8\times$ improvement over CAP; OCP and LORD-CI violate under Quantile selection.}
\label{tab:cap}
\begin{tabular}{ll cccc c}
\toprule
 & & \multicolumn{4}{c}{\textbf{FCR / Selective Risk (\%)}} & \textbf{\csa} \\
\cmidrule(lr){3-6}
\textbf{Scenario} & \textbf{Selection} & OCP & LORD-CI & CAP & \textbf{\csa} & AR \\
\midrule
A (linear, heterosked.)   & Fixed    & $11.2$ & $10.1$ & $9.6$ & $\mathbf{3.4}$ & $21.4\%$ \\
A (linear, heterosked.)   & Quantile & $16.6$ & $13.8$ & $9.7$ & $\mathbf{3.4}$ & $21.4\%$ \\
B (nonlinear, SVM)        & Fixed    & $10.8$ & $9.7$  & $9.2$ & $\mathbf{1.7}$ & $12.5\%$ \\
B (nonlinear, SVM)        & Quantile & $16.5$ & $14.1$ & $9.9$ & $\mathbf{1.7}$ & $12.5\%$ \\
\bottomrule
\end{tabular}
\end{table}

Average prediction-interval lengths for the baselines (\csa produces binary decisions, no PI): OCP $\in[12.3, 14.9]$, LORD-CI $\in[12.8, 16.3]$, CAP $\in[13.2, 18.6]$ across the four (scenario, selection) combinations.

\paragraph{Semi-synthetic MATH-500 and LiveCodeBench.}
Calibrated semi-synthetic replays matched to published DeepSeek-R1 checkpoint trajectories~\citep{deepseekr1} ($K{=}8$ checkpoints, $T{=}2{,}400$, $\alpha{=}0.20$, 10 reps): \csa attains final risk well below $\alpha$ on both with $\PathV$ near the theoretical $\delta+O(N_T^{-1/2})$ rate; Naive-Tuning lies under $\alpha$ on average but violates pathwise on every replication; Always-Act fails catastrophically.


\section{Ablations and stress tests}
\label{app:ablations}

We systematically probe \csa's sensitivity to design choices and to violations of its assumptions. All experiments use a stationary-frontier setting ($s_t\sim\mathrm{Unif}(0,1)$, $\tau{=}0.50$, $\alpha{=}0.30$, $T{=}3{,}000$, $50$ replications) unless otherwise noted.

\subsection{Ablation~1: grid size \texorpdfstring{$m$}{m}}
\label{app:abl-grid}

\begin{table}[h]
\centering\small
\caption{\textbf{Ablation~1: grid size.} Risk and action rate are stable across $m\in\{10,25,50,100\}$; certification delay mildly increases with $m$ due to the per-threshold budget $\delta_q=\delta/(2m)$. All grid sizes achieve zero false certifications (FCR $= 0\%$).}
\label{tab:ablation-grid}
\begin{tabular}{cccccc}
\toprule
$m$ & Risk & AR & Certified & Mean delay & FCR \\
\midrule
$10$  & $16.7\%$ & $58.1\%$ & $7/10$   & $409$ & $0\%$ \\
$25$  & $19.5\%$ & $60.2\%$ & $17/25$  & $328$ & $0\%$ \\
$50$  & $20.4\%$ & $60.6\%$ & $34/50$  & $348$ & $0\%$ \\
$100$ & $20.7\%$ & $60.7\%$ & $68/100$ & $365$ & $0\%$ \\
\bottomrule
\end{tabular}
\end{table}

Coarser grids give slightly more conservative risk (smaller jumps between grid points) but similar action rates. The recommendation $m{=}20$--$50$ balances granularity against the per-threshold budget $\delta_q=\delta/(2m)$.

\subsection{Ablation~2: adaptive vs.\ fixed \texorpdfstring{$\lambda$}{lambda}}
\label{app:abl-lambda}

\begin{table}[h]
\centering\small
\caption{\textbf{Ablation~2: adaptive vs.\ fixed $\lambda$.} Adaptive $\lambda$ (Eq.~\ref{eq:adaptive-lambda}) matches the best fixed $\lambda$ \emph{without} hyperparameter tuning.}
\label{tab:ablation-lambda}
\begin{tabular}{lcccc}
\toprule
$\lambda$ strategy & Risk & AR & Certified & Mean delay \\
\midrule
\textbf{Adaptive (Eq.~\ref{eq:adaptive-lambda})} & $\mathbf{19.0\%}$ & $\mathbf{59.8\%}$ & $\mathbf{14}$ & $\mathbf{337}$ \\
Fixed $\lambda=0.01$ & $0.0\%$   & $0.0\%$   & $0$  & --- \\
Fixed $\lambda=0.05$ & $14.0\%$  & $39.6\%$  & $10$ & $1{,}514$ \\
Fixed $\lambda=0.10$ & $18.5\%$  & $51.4\%$  & $13$ & $1{,}013$ \\
Fixed $\lambda=0.25$ & $21.1\%$  & $59.1\%$  & $14$ & $569$ \\
Fixed $\lambda=0.50$ & $20.0\%$  & $60.2\%$  & $14$ & $317$ \\
\bottomrule
\end{tabular}
\end{table}

Too-small $\lambda{=}0.01$ is catastrophic: the e-process accumulates evidence too slowly and never certifies within $T{=}3{,}000$. Too-large $\lambda$ is suboptimal for thresholds near the frontier (where the margin is small). The adaptive rule automatically discovers $\lambda\approx 0.50$ for thresholds with large margins and $\lambda\approx 0$ for unsafe thresholds.

\subsection{Stress~1: surrogate misspecification}
\label{app:stress-surrogate}

We add a constant bias to the score: $s_t'=\mathrm{clip}(s_t+b,0.01,0.99)$ for $b\in\{-0.15,-0.10,-0.05,0,+0.05,+0.10,+0.15\}$. This models the case where the surrogate systematically over- or under-estimates the failure probability.

\begin{table}[h]
\centering\small
\caption{\textbf{Stress~1: surrogate bias.} Selective risk remains controlled at all bias levels; action rate falls with large bias; false-certification rate rises only under strong positive bias (which effectively redraws the grid). \emph{Mean Risk} / \emph{Max Risk}: mean and max final selective risk across $50$ replications, computed after the burn-in window.}
\label{tab:stress-surrogate}
\begin{tabular}{crrr}
\toprule
Bias $b$ & Mean Risk & Max Risk & AR \\
\midrule
$-0.15$ & $11.8\%$ & $15.2\%$ & $33.9\%$ \\
$-0.10$ & $14.3\%$ & $18.0\%$ & $44.5\%$ \\
$-0.05$ & $17.5\%$ & $22.4\%$ & $53.7\%$ \\
$+0.00$ & $19.1\%$ & $24.8\%$ & $59.8\%$ \\
$+0.05$ & $19.0\%$ & $25.7\%$ & $60.1\%$ \\
$+0.10$ & $18.9\%$ & $25.2\%$ & $63.5\%$ \\
$+0.15$ & $18.7\%$ & $25.1\%$ & $68.2\%$ \\
\bottomrule
\end{tabular}
\end{table}

Selective risk is controlled at mean $\le 19.1\%$ (well below $\alpha{=}30\%$) for all bias levels. This confirms the layered nature of the guarantee: the per-threshold e-process validity does \emph{not} require the surrogate-quality assumption (Assumption~\ref{ass:surrogate-calibration}) because the certificate is updated using observed verifier outcomes $V_t$ rather than surrogate predictions. Surrogate quality instead affects which thresholds certify and how quickly they certify. Under strong positive bias ($b\in\{+0.10,+0.15\}$) the biased score effectively redraws the grid; the deployed gate rule remains internally consistent and the \emph{measured} selective risk stays below $\alpha$, while the deployed selective-risk guarantee continues to rely on the safety-layer conditions in Remark~\ref{rem:three-layers}.

\subsection{Stress~2: noisy verifier}
\label{app:stress-noisy}

We flip $V_t$ independently with probability $p\in\{0, 0.02, 0.05, 0.10, 0.15, 0.20\}$. This is the most direct stress on the e-process: the supermartingale condition holds only for the \emph{observed} verifier, so noise inflates the effective margin and slows certification.

\begin{table}[h]
\centering\small
\caption{\textbf{Stress~2: noisy verifier.} Risk stays controlled at all noise levels; action rate degrades gracefully. \emph{Risk} / \emph{Max Risk}: mean and max final selective risk across $50$ replications, computed after the burn-in window.}
\label{tab:stress-noisy}
\begin{tabular}{crrr}
\toprule
Flip prob.\ $p$ & Risk & Max Risk & AR \\
\midrule
$0.00$ & $19.1\%$ & $24.8\%$ & $59.8\%$ \\
$0.02$ & $19.6\%$ & $25.0\%$ & $56.3\%$ \\
$0.05$ & $20.1\%$ & $25.3\%$ & $52.0\%$ \\
$0.10$ & $20.5\%$ & $25.9\%$ & $46.3\%$ \\
$0.15$ & $21.1\%$ & $26.4\%$ & $40.1\%$ \\
$0.20$ & $21.7\%$ & $26.5\%$ & $35.0\%$ \\
\bottomrule
\end{tabular}
\end{table}

Risk remains controlled at all noise levels (maximum $21.7\%$ at $p{=}0.20$, well below $\alpha{=}30\%$), confirming that the theory conditions on the actual (noisy) $V_t$. The action rate degrades gracefully: from $59.8\%$ at $p{=}0$ to $35.0\%$ at $p{=}0.20$, because effective margins shrink with noise, slowing certification.

\section{Implementation and reproducibility}
\label{app:implementation}

All experiments run on a single machine with one GPU (NVIDIA A100-80GB or H200-141GB depending on cell; per-cell hardware and wall-clock in Appendix~\ref{app:live-rlvr-extras}); no multi-node or multi-GPU setup is required. Inference uses vLLM~\citep{vllm} for batched generation. The \csa controller is implemented in ${\sim}300$ lines of Python (NumPy only, no ML framework dependency) and runs in-process with negligible overhead ($<0.3\%$ wall-clock per round; see Appendix~\ref{app:complexity} for detailed timings).

\textbf{Surrogate.} Isotonic-calibrated logistic regression on 3--4 features (self-consistency vote share, max-token log-probability, answer-length quantile, and optionally a domain indicator). Fitted once on the CAL split ($20\%$ of each benchmark, seed~42) and frozen for the entire EVAL stream. Appendix~\ref{app:split-sensitivity} re-calibrates with 11 seeds and confirms that safety is split-independent. No per-benchmark hyperparameter tuning.

\textbf{Data release.} The public code repository at \url{https://github.com/HamedKhosravi99/CSA-RLVR} contains: (i)~the source code for the \csa controller, surrogate training, baselines, ablation pipelines, and the live-RLVR LoRA loop; (ii)~the aggregate JSON summaries from which every table and figure in this paper is regenerated; (iii)~end-to-end regeneration scripts that reproduce all LaTeX tables and PDF figures from the released summaries. Per-replication raw outputs and live-RLVR LoRA checkpoints are not bundled but are reproducible by re-running the corresponding experiment scripts; the repository documents the per-cell GPU and wall-clock requirements.

\section{Scope and broader impact}
\label{app:limitations}
\label{app:scope}

\paragraph{Safety and structural conditions.}
The per-threshold e-process validity is distribution-free under predictability and the unsafe-null moment condition; it does not require exchangeability or surrogate correctness, and the surrogate-bias and noisy-verifier stress tests (Appendix~\ref{app:ablations}) confirm that the realized selective risk remains controlled (e.g., below $\alpha{=}30\%$ at $20\%$ label noise). The deployed selective-risk guarantee additionally requires that certified thresholds remain safe for the deployment rule: in the single-epoch algorithm this is ensured by monotone risk/frontier behavior, and in the multi-epoch algorithm by bounded within-epoch frontier drift together with epoch resets and the declared drift pad $\{\nu_j\}$. Surrogate calibration, local slope regularity, and near-frontier stability are not needed for the per-threshold martingale validity but are used to derive data-dependent drift budgets, justify single-epoch sufficiency, and obtain the horizon-independent utility gap; their violation primarily affects certification speed and utility, not the basic per-threshold martingale validity (Remark~\ref{rem:three-layers}).

\paragraph{Live-RLVR scope.}
Under continually updated LoRA weights the e-process layer remains valid as long as policy and score updates are predictable. The deployed selective-risk guarantee uses the same monotone-risk or bounded-drift conditions described above, and the utility results additionally use the near-frontier stability condition, which the live experiment validates with $B_T\approx 0.40$ cumulative slack across $800$ genuinely generated rounds.

\paragraph{Capability envelope.}
\csa inherits the base model's capability envelope: when no threshold in $\cQ$ certifies an $\alpha$-valid action strategy, \csa refuses by design (visible as ``---'' entries in Table~\ref{tab:headline}). Principled refusal prevents silent release of unsafe outputs and signals that the base model or surrogate needs improvement.

\paragraph{Design choices.}
The Bonferroni grid ($|\cQ|{=}15$) and surrogate class (logistic regression on 3--4 features) are deliberately simple. Finer grids tighten utility without affecting the $O(N_T^{-1/2})$ slack scaling; richer surrogates may improve discrimination at the cost of calibration assumptions. The hyperparameter-sensitivity sweep (Appendix~\ref{app:hparam}) shows that these choices affect mainly utility and certification speed in the tested settings; the formal validity statement is governed by the safety-layer conditions in Remark~\ref{rem:three-layers}.

\paragraph{Broader impact.}
A controller that reliably refuses under its risk budget is, in high-stakes deployment, preferable to one that releases confidently wrong outputs. \csa's validity guarantee is over verifier-measured failure, not over semantic harms outside the verifier's scope; \csa should be layered on top of, not as a replacement for, domain-specific safety evaluation and red-teaming.

\paragraph{Open directions.}
(i)~Weaker marginal-improvement conditions up to $O(\varepsilon_t)$ surrogate-calibration error.
(ii)~Data-driven epoch restarts via change-point detection on $\{X_t(q)\}$.
(iii)~Information-theoretic lower bounds for the non-i.i.d.\ adaptive case.
(iv)~Data-adaptive threshold schedules that reallocate $\delta$ toward the active frontier.

\end{document}